\journal{European Journal of Operational Research}
\DeclareMathOperator{\diag}{diag}
\theoremstyle{plain}
\newtheorem{theorem}{Theorem}[section]
\newtheorem{lemma}[theorem]{Lemma}
\newtheorem{corollary}[theorem]{Corollary}
\newtheorem{proposition}[theorem]{Proposition}
\theoremstyle{definition}
\newtheorem{definition}[theorem]{Definition}
\newtheorem{assumption}[theorem]{Assumption}
\newtheorem{remark}[theorem]{Remark}
\begin{document}
\onehalfspacing

\begin{frontmatter}

\title{An Accelerated Stochastic Variance-Reduced Algorithm for Entropic Wasserstein Barycenters}

\author[cityu]{Yiling Xie\corref{cor1}}
\ead{yiling.xie@cityu.edu.hk}

\author[Google]{Yiling Luo}
\ead{luo\_mia@hotmail.com}

\author[gatech]{Xiaoming Huo}
\ead{huo@gatech.edu}

\cortext[cor1]{Corresponding author.}

\affiliation[cityu]{
  organization={Department of Decision Analytics and Operations, City University of Hong Kong},
  addressline={Tat Chee Avenue, Kowloon Tong, Kowloon},
  city={Hong Kong SAR},
  country={China}
}

\affiliation[Google]{
  organization={Google},
  city={Durham},
  state={NC},
  country={USA}
}

\affiliation[gatech]{
  organization={H. Milton Stewart School of Industrial and Systems Engineering, Georgia Institute of Technology},
  addressline={755 Ferst Drive NW},
  city={Atlanta},
  postcode={30332},
  state={GA},
  country={USA}
}

\begin{abstract}
Fixed-support Wasserstein barycenters average probability distributions while accounting for the
geometry of the support.
We study the entropically regularized Wasserstein barycenter problem with a fixed regularization parameter and
propose an accelerated stochastic variance-reduced primal-dual algorithm. The algorithm uses a
semi-dual finite-sum structure in which each stochastic gradient requires only one softmax over the
barycenter support. The resulting finite-sum components have dimension-free smoothness bounds, which
lead to a complexity result showing that the method improves the support-size dependence of
deterministic accelerated gradient by a square-root factor while preserving accelerated dependence on
the target accuracy. Experiments on synthetic data, DOTmark images, shape aggregation, and digit-averaging
instances are consistent with the theoretical dependence on support size and accuracy and show lower
arithmetic costs than the tested first-order baselines.
\end{abstract}

\begin{keyword}
Large scale optimization \sep Wasserstein barycenter \sep Distributional aggregation \sep
Optimal transport \sep Variance reduction
\end{keyword}

\end{frontmatter}

\section{Introduction}\label{intro}
Many problems in optimization and data analytics compare or average probability distributions.
Empirical laws define uncertainty sets in Wasserstein distributionally robust
optimization \citep{mohajerin2018data,blanchet2019quantifying}, while images, shapes, and point
clouds are often represented as histograms on a metric support in imaging and geometric data analysis
\citep{cuturi2014fast,solomon2015convolutional,peyre2019computational}. In these
settings, the geometry of the support is part of the averaging problem. Optimal transport encodes this
geometry by charging the movement of mass through a ground cost
\citep{monge1781memoire,kantorovich1942translocation,villani2009optimal,peyre2019computational}. The
corresponding Fr\'echet mean is the Wasserstein barycenter \citep{agueh2011barycenters}, a standard
tool for aggregating histograms and geometric data
\citep{rabin2011wasserstein,cuturi2014fast,solomon2015convolutional,peyre2019computational}.

The fixed-support Wasserstein barycenter is computationally demanding because the output distribution is unknown
and must be learned jointly with the transport plans from all input distributions. With inputs
$\mu_1,\ldots,\mu_m$ on a support of size $n$, the model contains $m$ transport plans, and these plans
are coupled by the requirement that their target marginals coincide. Entropic regularization turns the
discrete OT subproblems into smooth log-sum-exp objects and is widely used in large-scale OT computation
\citep{cuturi2013sinkhorn,feydy2019interpolating}. We study this fixed-support entropic Wasserstein barycenter
problem at a fixed regularization parameter $\eta$, treating the fixed-$\eta$ entropic objective as the
optimization target.

Known first-order rates for an $\epsilon$-accurate unregularized fixed-support Wasserstein barycenter include
iterative Bregman projections, or IBP,
$\widetilde{\mathcal O}\!\left(mn^2/\epsilon^2\right)$
\citep{benamou2015iterative,kroshnin2019complexity}, accelerated IBP, or FastIBP,
$\widetilde{\mathcal O}\!\left(mn^{7/3}/\epsilon^{4/3}\right)$
\citep{guminov2021combination}, and area-convexity and dual-extrapolation methods
$\widetilde{\mathcal O}\!\left(mn^2/\epsilon\right)$
\citep{dvinskikh2021improved,lin2020fixed}. For the fixed-$\eta$ entropic objective studied here, deterministic
accelerated gradient has rate
$\widetilde{\mathcal O}_\eta\!\left(mn^{2.5}/\sqrt{\epsilon}\right)$
\citep{kroshnin2019complexity}. Stochastic barycenter methods have also been developed, but the
available guarantees are stochastic-approximation guarantees without variance reduction
\citep{claici2018stochastic,li2020continuous}. These results leave open whether, in the
fixed-regularization regime, low-cost stochastic component gradients can be combined with acceleration
and variance reduction. Table~\ref{tab:bc-landscape} separates the unregularized and fixed-$\eta$
regimes; our rate comparison is the fixed-$\eta$ comparison with deterministic accelerated gradient.

We develop \textbf{PDASGD-BC}, an accelerated stochastic variance-reduced primal-dual method for the
fixed-support entropic Wasserstein barycenter. The method starts from a semi-dual reformulation in which the row
dual variables have been eliminated. The remaining dual variables enforce equality of the column
marginals of the recovered transport plans, which is the common barycenter marginal. For $m$ input
distributions with support size $n$, the semi-dual objective decomposes into $mn$ components, indexed
by pairs consisting of an input distribution and one of its support points. Evaluating one component
gradient requires only one softmax over the barycenter support, so an inner stochastic step costs
$\mathcal O(n)$, compared with the $\mathcal O(mn^2)$ cost of a full semi-dual gradient.

The analysis supplies the estimates needed to apply accelerated variance reduction to this
reformulation. We establish the barycenter analogue of the primal-gradient identity used for
two-marginal OT, verify component convexity and dimension-free smoothness bounds, and invoke the
general PDASGD result for linearly constrained convex optimization. Under the mass lower-bound
assumptions stated in the main barycenter theorem, PDASGD-BC returns a primal-dual output with
expected fixed-$\eta$ entropic barycenter objective residual at most $\epsilon$ in
$\widetilde{\mathcal O}_\eta\!\left(mn^2/\sqrt{\epsilon}\right)$ arithmetic operations. This
improves the deterministic accelerated-gradient rate
$\widetilde{\mathcal O}_\eta\!\left(mn^{2.5}/\sqrt{\epsilon}\right)$ by a factor $\sqrt n$ in
the same fixed-regularization regime.

The paper builds this result by moving from a general linearly constrained problem to two-marginal
entropic OT and then to barycenters. The two-marginal specialization also gives a reference case. For
fixed $\eta$, PDASGD attains
$\widetilde{\mathcal O}_\eta\!\left(n^2/\sqrt{\epsilon}\right)$ arithmetic complexity for the
entropic OT objective. The numerical experiments are consistent with the theory. In the two-marginal
tests, PDASGD is faster than APDAGD and AAM on the tested instances and is reported alongside Sinkhorn
as a specialized computational reference. On barycenter instances, PDASGD-BC recovers
closed-form Gaussian barycenters, forms shape and digit barycenters, matches IBP references on
image instances, and shows support-size behavior consistent with Theorem~\ref{bc:main} up to
support size $n=960$; see
Section~\ref{num}.

\subsection{Related Work}\label{relatedwork}
The relevant literature includes work on Wasserstein barycenters, computational OT solvers, entropic
OT and Sinkhorn variants, and first-order primal-dual algorithms for linearly constrained convex
optimization.

The OT problem has several variants, including unbalanced OT \citep{nguyen2023unbalanced},
semi-discrete OT \citep{tacskesen2023semi}, and multimarginal OT \citep{lin2022complexity}. We focus on
discrete OT, its entropy-regularized version, and the fixed-support entropic Wasserstein barycenter. Classical
solvers for discrete OT include network simplex, auction, and other combinatorial algorithms
\citep{peyre2019computational}; see \cite{xie2022solving} for a detailed discussion. Interior-point
methods provide another route and can use sparsity in the OT solution
\citep{zanetti2023interior,cipolla2024regularized}. Our approach targets the smooth entropically
regularized formulation and uses finite-sum structure in the dual.

Wasserstein barycenters were introduced as Fr\'echet means in Wasserstein space by
\citet{agueh2011barycenters}. Early computational and application-driven work used barycenters for
texture mixing, image and geometric-data averaging, and distribution clustering
\citep{rabin2011wasserstein,cuturi2014fast,solomon2015convolutional,ye2017fast}. Entropic and
Bregman-projection formulations made discrete barycenters computationally practical
\citep{cuturi2014fast,benamou2015iterative,cuturi2016smoothed}. A separate line studies complexity
for fixed-support Wasserstein barycenters. \citet{kroshnin2019complexity} analyze the cost of approximating
Wasserstein barycenters, \citet{lin2020fixed} establish hardness and fast algorithms for the
fixed-support case, and later work improves first-order bounds or studies distributed computation
\citep{guminov2021combination,dvinskikh2021improved,uribe2018distributed,chambolle2022accelerated}.
Stochastic and continuous barycenter methods have also been developed
\citep{claici2018stochastic,li2020continuous}. The present work differs by keeping the entropic
regularization parameter fixed and using a semi-dual with a common-marginal constraint. Its finite-sum
components correspond to input distributions and their support points, which gives an accelerated
variance-reduced stochastic method.

For large-scale entropic OT, the standard computational reference is the Sinkhorn algorithm
\citep{cuturi2013sinkhorn}. Sinkhorn and its variants, including stochastic Sinkhorn
\citep{abid2018stochastic}, are based on Bregman projections \citep{benamou2015iterative}. With
regularization tuning and rounding, Sinkhorn and related schemes can approximate the unregularized OT problem
with complexity $\widetilde{\mathcal{O}}\!\left(n^2/\epsilon^2\right)$
\citep{dvurechensky2018computational}. These guarantees address a different target from ours. We work
with the entropic objective at a fixed regularization parameter. We therefore report Sinkhorn in the
experiments as a standard computational reference; the complexity comparison uses fixed-$\eta$
accelerated gradient.

PDASGD belongs to the family of first-order primal-dual methods for linearly constrained convex
optimization. Related accelerated algorithms include APDAGD \citep{dvurechensky2018computational},
APDAMD \citep{lin2019efficient}, AAM \citep{guminov2021combination}, APDRCD \citep{guo2020fast}, and
HPD \citep{chambolle2022accelerated}. For the fixed-regularization entropic OT specialization, these
deterministic accelerated primal-dual methods have complexity
$\widetilde{\mathcal{O}}_\eta\!\left(n^{2.5}/\sqrt{\epsilon}\right)$, while
Section~\ref{applytoot} shows that PDASGD attains
$\widetilde{\mathcal{O}}_\eta\!\left(n^2/\sqrt{\epsilon}\right)$. Among stochastic primal-dual methods, PDASMD
\citep{luo2023improved} is closest to our two-marginal OT specialization. PDASMD uses mirror-descent
updates, whereas PDASGD uses Euclidean gradient steps with variance reduction. This distinction avoids
the norm-selection issue raised in \cite{luo2023improved}; in our analysis the improvement comes from
variance-reduced stochastic gradients and the same-order smoothness parameters of the dual.

Other approaches \citep{blanchet2024towards,jambulapati2019direct,quanrud2018approximating,lahn2019graph}
use packing LP or box-constrained Newton methods to obtain complexity
$\widetilde{\mathcal{O}}\!\left(n^2/\epsilon\right)$ for unregularized OT. As discussed in
\cite{lin2022efficiency}, these methods do not yet have the same level of large-scale computational
deployment as Sinkhorn and related first-order methods. PDASGD stays within the first-order setting, and the
experiments in Section~\ref{num} report its behavior on the entropic problem.
 
\subsection{Organization}
The remainder of the paper is organized as follows.
Section~\ref{prelim} introduces the two-marginal OT notation, entropic regularization, and smoothness conventions used throughout the paper.
Section~\ref{PDASGDintro} introduces PDASGD for linearly constrained optimization and states its convergence rate.
Section~\ref{improve} compares PDASGD with accelerated deterministic primal-dual first-order methods.
Section~\ref{applytoot} specializes PDASGD to two-marginal entropic OT and derives the resulting computational complexity.
Section~\ref{barycenter} develops PDASGD-BC for the entropic Wasserstein barycenter and proves its
$\widetilde{\mathcal O}_\eta\!\left(mn^2/\sqrt{\epsilon}\right)$ complexity.
Section~\ref{num} reports numerical experiments for entropic OT and barycenters.
Section~\ref{dis} discusses future directions.
Detailed proofs are provided in the Supplementary Material.

\section{Preliminaries and Notation}\label{prelim}
\subsection{Two-Marginal OT and Entropic Regularization}\label{otfor}
We start with the standard two-marginal formulation. This fixes the notation for transport plans,
marginals, costs, and entropy used throughout the paper.
For two discrete probability distributions $\alpha$ and $\beta$, the optimal transport, or OT, problem is
\begin{equation}\label{introduction}
\begin{gathered}
\min _{X\in \mathcal{U}(\alpha,\beta)}  \langle C,X\rangle,\\
\mathcal{U}(\alpha,\beta)=\left\{X\in \mathbb {R}^{n\times n}_{+}\left| X \textbf{1}_n=\alpha, X^{\top}\textbf{1}_n=\beta\right.\right\},
\end{gathered}
\end{equation}
where $\alpha, \beta \ge 0$ and $\alpha^{\top} \textbf{1}_n=\beta^{\top} \textbf{1}_n=1$.
$X$ and $C$ denote the transport plan and the cost matrix, respectively.
The matrix inner product is defined as $\langle C,X\rangle = \sum_{i,j=1}^{n}C_{ij}X_{ij}$.

\begin{remark}
For presentation, we restrict attention to the case in which both marginals have the same dimension $n$. The square case already captures the core difficulty of the entropic OT problem, and our analysis of PDASGD focuses on this regime.
\end{remark}

Entropic OT \citep{cuturi2013sinkhorn} adds an entropy regularizer to make the problem smooth and
differentiable with respect to the marginal distribution. The regularized problem is
\begin{equation}
\label{eq:entropic-ot}
\min_{X\in \mathcal{U}(\alpha,\beta)}\langle C,X\rangle-\eta H(X),
\end{equation}
where $\eta$ is the regularization parameter and $H(X)$ is the entropy.

\subsection{Accuracy Convention}\label{accuracy}
Throughout the paper, $\epsilon>0$ denotes an objective-gap target for the objective under discussion.
For a deterministic output $\widehat x$ to a problem $\min_x F(x)$ with optimal value $F^\star$,
$\epsilon$-accuracy means $F(\widehat x)-F^\star\le\epsilon$. For a randomized output, it means
$\mathbb E[F(\widehat x)]-F^\star\le\epsilon$. Thus, in unregularized comparisons, $\epsilon$ refers
to the OT objective in \eqref{introduction}; in our fixed-$\eta$ results, it refers to the entropic OT
objective in \eqref{eq:entropic-ot} or to the entropic barycenter objective in \eqref{bc:primal}.

\subsection{Notation and Smoothness Definitions}\label{notation}
We use the following notations.

The $m$-dimensional column of all ones is denoted by $\textbf{1}_{m}$.
The $i$th standard basis vector in $\mathbb R^m$ is denoted by $\textbf{e}_i$.
The $m\times m$ identity matrix is denoted by $\text{I}_m$.
For vectors, $\Vert\cdot\Vert_p$ denotes the $\ell_p$ norm.
$X\otimes Y$ denotes the standard Kronecker product of matrices $X$ and $Y$.
The entropy of matrix $X$ is defined as $H(X) = -\sum X_{ij} \ln(X_{ij})$.
$\Vert X\Vert_1=\sum\vert X_{ij}\vert$ and $\Vert X\Vert_\infty=\sup_{i,j} \vert X_{ij}\vert$.
For a matrix $X$, $\exp(X)$ and $\ln(X)$ are applied elementwise.
For $X\in\mathbb{R}^{n\times m}$, the column-major vectorization is $\text{Vec}(X)=(X_{11},\cdots,X_{n1},\cdots,X_{1m},\cdots,X_{nm})^{\top}$.
The matrix norm induced by vector norms $\|\cdot\|_H$ and $\|\cdot\|_E$ is denoted by $\|X\|_{E,H}= \max_{\|a\|_E\leq 1} \|Xa\|_H$.
Asymptotic notation is used as follows. The notation $A=\mathcal O(B)$ means
that $A\le cB$ for a numerical constant $c$ independent of the problem dimensions and
the accuracy target, and $A=\Theta(B)$ means that both $A=\mathcal O(B)$ and
$B=\mathcal O(A)$ hold. In particular, $d=\Theta(n)$ means that there exist positive
constants $c_1$, $c_2$, and $n_0$ such that $c_1n \le d \le c_2n$ for all $n\ge n_0$.
The condition $r\ge n^{-\mathcal O(1)}$ means that $r\ge c n^{-p}$ for some constants
$c>0$ and $p\ge0$ independent of $n$. The notation $\widetilde{\mathcal O}$ suppresses
polylogarithmic factors in the dimensions and in the inverse accuracy. When the
regularization parameter is fixed, $\widetilde{\mathcal O}_\eta$ may additionally hide
constants depending on $\eta$ and logarithmic factors induced by the polynomial mass lower bounds,
but it does not hide dependence on the cost scale, $m$, $n$, or $1/\epsilon$.

\begin{definition}
A continuously differentiable and convex function $f:Q \rightarrow \mathbb{R}$, where $Q$ is a convex and closed subset of $\mathbb{R}^q$, is said to be
\begin{itemize}
\item   $L$-smooth w.r.t. $\Vert \cdot \Vert_2$ in $Q$ if
$\Vert \nabla f(x)-\nabla f(y)\Vert_2\leq L\Vert x-y\Vert_2, \forall x,y\in Q.$ Or equivalently,
$f(y)\leq f(x) + \langle \nabla f(x),y-x\rangle + L/2\|x-y\|^2_2, \forall x,y\in Q.$
\item   $L$-smooth w.r.t. $\Vert \cdot \Vert_\infty$ in $Q$ if $\Vert \nabla f(x)-\nabla f(y)\Vert_1\leq L\Vert x-y\Vert_\infty, \forall x,y\in Q.$
Or equivalently,
$f(y)\leq f(x) + \langle \nabla f(x),y-x\rangle + L/2\|x-y\|^2_\infty, \forall x,y\in Q.$
\item $\sigma$-strongly convex w.r.t. $\Vert\cdot\Vert_E$ in $Q$ if $f(y)\geq f(x)+\langle \nabla f(x),y-x\rangle+\sigma/2\Vert x-y\Vert_E^2, \forall x,y\in Q.$
\end{itemize}
\end{definition}

\section{The PDASGD Algorithm}
\label{PDASGDintro}
The entropic OT problem \eqref{eq:entropic-ot} is an optimization problem with the linear constraints $X\in\mathcal{U}(\alpha,\beta)$. 
We first describe a linearly constrained optimization problem that covers entropic OT as a special case,
and then present the accelerated stochastic primal-dual algorithm.
\subsection{Linearly Constrained Optimization Problem}\label{general}
We present the general linearly constrained problem, its dual, and the assumptions used in the
convergence analysis.

Consider the linearly constrained optimization problem
   \begin{equation}
   \begin{aligned}
   \label{generalprimalmain}
        \min_{x\in Q\subset\mathbb{R}^q}\, f(x)\quad \\ \textrm{s.t.}\quad Ax=b\in\mathbb{R}^{d},
 \end{aligned}
   \end{equation}
   where $A\in\mathbb{R}^{d\times q}$, the feasible set is assumed to be nonempty, and $Q$ satisfies the conditions in the following assumption.
   
\begin{assumption}\label{assume0}
   In the optimization problem \eqref{generalprimalmain}, we assume that $f$ is real-valued, convex, coercive, and continuous, and $Q$ is closed and convex. 
\end{assumption}

   The Lagrange dual of \eqref{generalprimalmain} is
   \begin{equation}
       \begin{aligned}
       \label{generaldualmain}
   \min_{\lambda\in\mathbb{R}^{d}}\left\{\phi(\lambda)=-\langle b,\lambda\rangle+\max_{x\in Q\subset\mathbb{R}^q} (-f(x)+\langle Ax,\lambda\rangle)\right\}.
       \end{aligned}
   \end{equation}
   
   Let
     \begin{equation}
   \label{generalconvert}
   x(\lambda)=\arg\max_{x\in Q\subset\mathbb{R}^q} (-f(x)+\langle Ax, \lambda\rangle),
   \end{equation}
   Then the dual problem \eqref{generaldualmain} can be expressed as
   \begin{equation}
   \label{generaldualdualmain}
       \begin{aligned}
       \min_{\lambda\in\mathbb{R}^{d}}\left\{\phi(\lambda)=-\langle b,\lambda\rangle-f(x(\lambda))+\langle Ax(\lambda),\lambda\rangle\right\}.
       \end{aligned}
   \end{equation}
   
We next impose the structural properties used by the stochastic variance-reduction analysis.

\begin{assumption}
\label{assume} 
In the optimization problem \eqref{generalprimalmain} and its dual problem \eqref{generaldualmain}, we assume that 
\begin{itemize} 
    \item the following equation involving the gradient of $\phi(\lambda)$ and $x(\lambda)$ holds.
    \begin{equation}
\label{assumption}
    \nabla \phi (\lambda)= Ax(\lambda)-b.
\end{equation}
    \item  the function $\phi(\lambda)$ admits a finite-sum decomposition.
$$\phi(\lambda)=\frac{1}{h}\sum_{i=1}^{h}\phi_{i}(\lambda).$$
\item the function $\phi_i$ is convex and $L_i$-smooth w.r.t. $\Vert \cdot\Vert_2$ for each $i$. The average smoothness parameter is denoted by $$\overline{L}=\frac{1}{h}\sum_{i=1}^h L_i.$$
\end{itemize}   
\end{assumption}

Assumption~\ref{assume0} ensures existence of a primal solution to
\eqref{generalprimalmain}. In Assumption~\ref{assume}, the first condition is the primal-dual gradient
identity, the second gives the finite-sum structure needed for stochastic component gradients, and the
third supplies the component smoothness used in the variance-reduction analysis.

The next proposition records a sufficient condition for Assumption~\ref{assume} in the common case that $f$ is strongly convex.

\begin{proposition}\label{rem1}
Suppose $f$ is $\sigma$-strongly convex with respect to $\|\cdot\|_E$ on $Q$, and that $\phi(\lambda) = h^{-1}\sum_{i=1}^h \phi_i(\lambda)$ with each $\phi_i$ convex and differentiable. Then the following statements hold.
\begin{enumerate}
  \item Danskin's theorem \citep{bonnans2013perturbation} implies that $\nabla\phi(\lambda) = A x(\lambda) - b$ and that $\phi$ is convex;
  \item $\phi$ is $L$-smooth with respect to $\|\cdot\|_2$ with $L \le \|A\|_{E,2}^{2}/\sigma$ \citep{nesterov2005smooth};
  \item each $\phi_i$ is $hL$-smooth with respect to $\|\cdot\|_2$, and consequently the average smoothness satisfies $\overline{L} \le (h\,\|A\|_{E,2}^{2})/\sigma$.
\end{enumerate}
In particular, whenever $f$ is strongly convex, verifying Assumption~\ref{assume} reduces to verifying that $\phi$ admits a finite-sum decomposition with differentiable convex components.
\end{proposition}

The entropic OT objective is strongly convex with respect to $\Vert \cdot\Vert_1$
\citep{lin2019efficient,guminov2021combination}, which motivates applying this primal-dual analysis
to entropic OT and, later, to the barycenter problem.
\subsection{Our Algorithm}\label{algodetail}
We present \emph{primal-dual accelerated stochastic gradient descent}, abbreviated PDASGD, for
\eqref{generalprimalmain} and then analyze its convergence rate.

The pseudocode of PDASGD is presented in Algorithm \ref{PDASGD}.
It is a primal-dual algorithm motivated by \cite{allen2017katyusha}. 
 In the outer loop, Step 5 computes the full gradient. 
 In the inner loop,
 Step 8 uses Katyusha momentum \citep{allen2017katyusha} to accelerate the algorithm.
 Whereas a classical accelerated method averages $z$ and $y$, Katyusha momentum also includes the snapshot point $\widetilde{\lambda}$ in the extrapolated iterate.
 Steps 10 and 11 use the variance-reduced gradient computed in Step 9.
 The final averaging steps form the primal output as a weighted average of historical primal responses.
These recursions avoid storing all past values of $x$ and $\tau$.
Step 14 computes one primal response per outer iteration using a dual iterate sampled uniformly from
the preceding $M$ inner iterations.
\begin{algorithm}[hptb]
   \caption{PDASGD}
   \label{PDASGD} 
   \begin{algorithmic}[1]
\State {\bfseries Input:} {dual objective function $\phi(\lambda)$, smoothness parameters $L_i,\overline{L}$ w.r.t. $\Vert\cdot\Vert_2$, number of inner iterations $M$, and number of outer iterations $S$.}

   \State {$\tau_2\leftarrow 1/2$; $y_0=z_0=\widetilde{\lambda}^0=\lambda_0=C_0=D_0=s\leftarrow 0.$}
   \ForAll{$s=0,\cdots,S-1$}
   \State $\tau_{1,s}\leftarrow 2/(s+4)$; $\gamma_s\leftarrow 1/(9\tau_{1,s}\overline{L}).$
   \State $u^{s}\leftarrow \nabla \phi(\widetilde{\lambda}^s).$
   \ForAll{$j=0$ to $M-1$}
   \State $k\leftarrow sM+j.$
   \State $\lambda_{k+1}\leftarrow\tau_{1,s}z_k+\tau_2\widetilde{\lambda}^{s}+(1-\tau_{1,s}-\tau_2)y_k.$  
   \State $\widetilde{\nabla}_{k+1}\leftarrow u^{s}+(\nabla \phi_{i}(\lambda_{k+1})-\nabla \phi_{i}(\widetilde{\lambda}^{s}))/hp_i,$
   
   where $i$ is randomly chosen from $\{1,2,\cdots,h\}$, each with probability $p_i=L_i/(h\overline{L})$.
   \State $z_{k+1}\leftarrow z_{k}-
   \gamma_s\widetilde{\nabla}_{k+1}/2.$
   \State $y_{k+1}\leftarrow \lambda_{k+1}-\widetilde{\nabla}_{k+1}/(9\overline{L}).$ 
   \EndFor
   \State $\widetilde{\lambda}^{s+1}\leftarrow 1/M\sum_{j=1}^{M}y_{sM+j}.$
   \State $D_{s+1}\leftarrow D_s+x(\widehat{\lambda}_s)/\tau_{1,s},$ where $\widehat{\lambda}_s$ is uniformly randomly chosen from $\{\lambda_{sM+1},\cdots,\lambda_{sM+M} \}.$ 
   \State $C_{s+1}\leftarrow C_s+1/\tau_{1,s}.$
   \EndFor
   \State{\bfseries Output:} {$\widetilde{\lambda}^S,x^{S}=D_S/C_S.$}
   \end{algorithmic}
   \end{algorithm}
\subsection{Convergence Analysis}
We now state the convergence guarantees for Algorithm~\ref{PDASGD}.

The following theorem gives the convergence rate of PDASGD.
\begin{theorem}
\label{generaltheorem}
Under Assumptions~\ref{assume0} and~\ref{assume}, suppose the dual problem
\eqref{generaldualmain} admits a solution $\lambda^\ast$. If one applies PDASGD to
the two bounds \eqref{generalprimalmain} and \eqref{generaldualmain},
the output $x^{S}$ of Algorithm \ref{PDASGD}
satisfies
\begin{align*}
\mathbb{E}[f(x^S)] - f(x^{\ast})
&= \mathcal{O}\!\left(\frac{\phi(0)-\phi(\lambda^{\ast})}{S^2}+ \frac{\overline{L}\left\Vert \lambda^{\ast}\right\Vert_{2}^2}{MS^2}\right),\\
\left\Vert \mathbb{E}[ Ax^{S}-b]\right\Vert_1
&=\mathcal{O}\left(\frac{\phi(0)-\phi(\lambda^{\ast})}{S^2R}+ \frac{\overline{L}\,dR}{MS^2}\right)
\quad\text{for any }R\ge\|\lambda^\ast\|_\infty,
\end{align*}
where $x^{\ast}$ is a solution to problem~\eqref{generalprimalmain}, $\lambda^{\ast}$ is a solution to problem~\eqref{generaldualmain}, $d$ is the dimension of the dual variable, and $\widetilde{\lambda}^{S}$ denotes the final value of the outer-loop iterate produced in Step~13 of PDASGD after $S$ outer iterations. In applications, $R$ is chosen as a positive upper bound on $\|\lambda^\ast\|_\infty$.
\end{theorem}

If the dual objective is additionally $L^{\prime}$-smooth with respect to
$\Vert\cdot\Vert_\infty$, the convergence bound simplifies as follows. This form clarifies the
conditions under which PDASGD attains a strictly better arithmetic bound, as developed in
Section~\ref{improve}.
\begin{corollary}\label{theorem1cor}
Under Assumptions~\ref{assume0} and~\ref{assume}, suppose the dual problem
\eqref{generaldualmain} admits a solution $\lambda^\ast$ and $\phi(\lambda)$ is $L^{\prime}$-smooth
with respect to $\Vert\cdot\Vert_\infty$. If one applies PDASGD to
the two bounds \eqref{generalprimalmain} and \eqref{generaldualmain},
 the output $x^{S}$ of Algorithm \ref{PDASGD}
satisfies
\[
\mathbb{E}[f(x^S)] -f(x^{\ast})=\mathcal{O}\left(\frac{L^\prime \left\Vert\lambda^\ast\right\Vert_\infty^2}{S^2}+ \frac{\overline{L}\left\Vert \lambda^{\ast}\right\Vert_{2}^2}{MS^2}\right), \]
\[ \left\Vert\mathbb{E}[Ax^{S}-b]\right\Vert_1=\mathcal{O}\left(\frac{L^\prime \left\Vert\lambda^\ast\right\Vert_\infty^2}{S^2R}+ \frac{\overline{L}\,dR}{MS^2}\right)\quad\text{for any }R\ge\|\lambda^\ast\|_\infty,\]
where $x^{\ast}$ is a solution to problem \eqref{generalprimalmain}, $\lambda^{\ast}$ is a solution to problem \eqref{generaldualmain}, and $d$ is the dual dimension.
\end{corollary}
\begin{remark}
If the inner-loop length $M$ equals $d$, the inequality $\left\Vert\lambda^\ast\right\Vert_2\leq \sqrt{d}\left\Vert\lambda^\ast\right\Vert_\infty$ gives
\[
\mathbb{E}[f(x^S)] -f(x^{\ast})=\mathcal{O}\left(\frac{(L^\prime+\overline{L})\left\Vert\lambda^\ast\right\Vert_\infty^2}{S^2}\right).\]

Thus the computational gain depends on the relative orders of $L^{\prime}$ and $\overline{L}$. For the
entropic-OT semi-dual, both quantities are constants independent of $n$, which is the mechanism behind
the improved complexity developed in Sections~\ref{improve} and~\ref{applytoot}.
\end{remark}
\section{When PDASGD Improves the Computational Complexity}\label{improve}
We state the scaling conditions under which PDASGD reduces the total arithmetic complexity of reaching
a prescribed objective gap by a factor of $\sqrt{n}$ compared with accelerated deterministic
primal-dual first-order methods.

We focus on the primal problem \eqref{generalprimalmain} and the associated dual problem \eqref{generaldualmain} introduced in Section \ref{general}.
   
We impose one additional scaling assumption to make the comparison transparent.

\begin{assumption}\label{assume2}
In problem \eqref{generaldualmain}, slightly abusing notation, assume that
\begin{itemize}
 \item  $d=\Theta(n)$.
\item $\phi(\lambda)$ admits the finite-sum decomposition
$\phi(\lambda)=(1/n)\sum_{i=1}^n\phi_i(\lambda)$.
\item $\phi(\lambda)$ is $\widetilde{L}$-smooth with respect to $\|\cdot\|_2$.
\item Each component $\phi_i(\lambda)$ is $L_i$-smooth with respect to $\|\cdot\|_2$, with average
smoothness $\overline{L}=(1/n)\sum_{i=1}^n L_i$.
\item $\phi(\lambda)$ is $L'$-smooth with respect to $\|\cdot\|_\infty$.
\item \emph{Same-order smoothness condition.} There exist constants $c_1,c_2>0$ and an exponent $a \ge 0$, all independent of $n$, such that
\[
c_1\, n^{a} \;\le\; \min\{\widetilde{L},\,\overline{L},\,L'\} \;\le\; \max\{\widetilde{L},\,\overline{L},\,L'\} \;\le\; c_2\, n^{a}.
\]
Equivalently, $\widetilde{L}=\Theta(n^{a})$, $\overline{L}=\Theta(n^{a})$, and $L'=\Theta(n^{a})$ with the same exponent $a$.
\end{itemize}
\end{assumption}

For comparison, we restate the convergence rate of APDAGD \citep{dvurechensky2018computational}.
\begin{theorem}
    [Theorem 3 in \cite{dvurechensky2018computational}]\label{apdagd}
 Suppose $f$ in problem \eqref{generalprimalmain} is strongly convex on the closed convex set $Q$, and the dual objective function is $\widetilde{L}$-smooth w.r.t. the $\ell_2$ norm. If one applies APDAGD to \eqref{generalprimalmain} and \eqref{generaldualmain}, the output $x^{k}$ of the kth iteration generated by APDAGD satisfies
 \[f(x^{k})-f(x^{\ast})=\mathcal{O}\left(  \frac{\widetilde{L}\left\Vert \lambda^{\ast}\right\Vert_2^2}{k^2}\right).\]
\end{theorem}

Using the APDAGD convergence rate, Corollary~\ref{coro} compares APDAGD and PDASGD. PDASGD computes the gradient of one component function, $\phi_i(\lambda)$, at each iteration, whereas APDAGD computes the full gradient over all $n$ components.
For clarity, we additionally assume that $f$ is strongly convex, ensuring smoothness of the dual
objective $\phi(\lambda)$ as discussed in Proposition~\ref{rem1}.
\begin{corollary}\label{coro}
Suppose that the computational complexity per iteration is $\mathcal{O}(\mathcal{K})$ for PDASGD, where only one gradient component is computed, and $\mathcal{O}(n\mathcal{K})$ for APDAGD, where the full gradient is computed. Additionally, assume that $f$ is strongly convex. Under Assumptions~\ref{assume0}, \ref{assume}, and~\ref{assume2}, applying PDASGD with $M=n$ inner loops and APDAGD to \eqref{generalprimalmain} and \eqref{generaldualmain} gives total computational complexity  $$\mathcal{O}\left(n^{\frac{3}{2}}\mathcal{K}\sqrt{\frac{L^{\prime}}{\epsilon}}\left\Vert \lambda^\ast\right\Vert_\infty\right)$$ for APDAGD and  $$\mathcal{O}\left(n\mathcal{K}\sqrt{\frac{L^{\prime}}{\epsilon}}\left\Vert \lambda^\ast\right\Vert_\infty\right)$$ for PDASGD, to obtain a solution $\vec{x}$ satisfying $f(\vec{x})-f(x^{\ast})\leq \epsilon$, or $\mathbb{E}[f(\vec{x})]-f(x^{\ast})\leq \epsilon$ for a random output.
\end{corollary}

Corollary~\ref{coro} shows that PDASGD reduces the total arithmetic complexity of producing a primal
output with objective gap at most $\epsilon$ by a factor of $\sqrt{n}$ relative to APDAGD.
Other accelerated primal-dual first-order algorithms have convergence rates or computational complexities comparable to APDAGD, including APDAMD \citep{lin2019efficient}, APDRCD \citep{guo2020fast}, and AAM \citep{guminov2021combination}. We omit these parallel comparisons to conserve space.

Thus, Corollary~\ref{coro} identifies the conditions under which PDASGD has lower arithmetic
complexity than accelerated deterministic primal-dual first-order methods for problems of the
form of \eqref{generalprimalmain} and \eqref{generaldualmain}. The requirement is that the smoothness
parameters $\widetilde{L}, \overline{L}, L^{\prime}$ in Assumption~\ref{assume2} have the same order.
Section~\ref{applytoot} verifies this condition for the semi-dual of the entropic OT problem.

\section{Two-Marginal Entropic OT Specialization}\label{applytoot}
We next specialize the general PDASGD analysis to fixed-regularization two-marginal entropic OT. This
case serves both as a standalone application and as the basic transport subproblem behind the
barycenter analysis in Section~\ref{barycenter}. We derive the semi-dual, prove the primal gradient
identity, verify the smoothness assumptions used in Section~\ref{improve}, and obtain the
$\widetilde{\mathcal O}_\eta\!\left(n^2/\sqrt{\epsilon}\right)$ arithmetic complexity for the
entropic OT objective.

The entropic OT problem \citep{cuturi2013sinkhorn} is
 \begin{equation}\label{entropy}
 \min_{X\in \mathcal{U}(\alpha,\beta)}\langle C,X\rangle-\eta H(X),
\end{equation}
where $\eta$ is the regularization parameter and $H(X)$ is the entropy. It can be written in the
linearly constrained form of Section~\ref{general} as
\begin{equation}
\begin{aligned}
  	\label{stackmain}
	\min_{x\in\mathbb{R}^{n^2}_+}\{  f(x)&= c^{\top}x+\eta\, x^{\top}\ln x \}  \\ &\textrm{s.t.}\quad Ax=b,
	\end{aligned}
\end{equation}
where $c=\text{Vec}(C)$, $x=\text{Vec}(X)$, $b=(\alpha^{\top},\beta^{\top})^{\top}$, and $A\in\mathbb{R}^{2n\times n^2}$ is the linear operator such that
$Ax=A\text{Vec}(X)= \begin{pmatrix}
 X\textbf{1}_n\\
 X^{\top}\textbf{1}_n
\end{pmatrix}$.  Under the column-major vectorization convention, the operator is
$$A = \left(\begin{array}{c}\textbf{1}_n^\top\otimes \text{I}_n\\\text{I}_n \otimes \textbf{1}_n^\top \end{array}\right).$$

Solving the primal response problem \eqref{generalconvert} gives
\begin{equation}\label{xlambda}
x(\lambda)=\exp\left(\frac{A^{\top}\lambda-c-\eta\textbf{1}_{n^2}}{\eta}\right),  \quad
\left[A^\top\lambda\right]_{i+n(j-1)}=\lambda_i+\lambda_{n+j},\quad 1\le i,j\le n,
\end{equation}
where $\left[A^\top\lambda\right]_{i+n(j-1)}$ follows the column-major vectorization convention for the
entry $X_{ij}$.

Substituting \eqref{xlambda} into \eqref{generaldualdualmain} gives the following dual of
\eqref{stackmain}.
\begin{equation}
\label{section3dual}
    \min_{\lambda\in \mathbb{R}^{2n}}\left\{-\langle b,\lambda\rangle
    +\eta\left\langle \exp\left(\frac{A^{\top}\lambda-c-\eta\textbf{1}_{n^2}}{\eta}\right),
    \textbf{1}_{n^2}\right\rangle\right\},
\end{equation}

PDASGD requires a finite-sum dual objective. To obtain this structure, we follow
\cite{aude2016stochastic} and derive the semi-dual from \eqref{section3dual}. Split
$\lambda$ into $ \lambda = (u^{\top},v^{\top})^{\top}$, where $u,v\in \mathbb{R}^n$. Then
\eqref{section3dual} can be written as
\begin{equation}
\label{originaldual}
    \min_{u\in\mathbb{R}^{n},v\in\mathbb{R}^{n}}-\langle \alpha,u \rangle-\langle \beta,v \rangle+\eta\sum_{i=1}^{n}\sum_{j=1}^{n}\exp\left(\frac{u_i+v_j-c_{ij}-\eta}{\eta}\right).
\end{equation}

For a fixed $v$, minimizing \eqref{originaldual} with respect to $u$ gives
\begin{align*} u_i(v)=\eta\ln(\alpha_i)-\eta\ln\sum_{j=1}^{n}\exp\left(\frac{v_j-c_{ij}-\eta}{\eta}\right). 
 \end{align*}

Substituting $u(v)$ into \eqref{originaldual} gives
\begin{align*}
\begin{split}
    \min_{v\in\mathbb{R}^n}\left\{G(v)=\eta\sum_{i=1}^n\left(\alpha_i\ln\sum_{j=1}^{n}\exp\left(\frac{v_j-c_{ij}-\eta}{\eta}\right)\right)  -\sum_{j=1}^{n}\beta_jv_j-\eta\sum_{i=1}^n\alpha_i\ln\alpha_i+\eta\right\}.
\end{split}    
\end{align*}

Since $\sum_{i=1}^n\alpha_i=1$, we have the reformulations
\[\eta=\sum_{i=1}^n\eta\alpha_i, \quad\sum_{j=1}^n\beta_jv_j=\sum_{i=1}^n\alpha_i\left(\sum_{j=1}^n\beta_jv_j\right).\]

This gives the finite-sum semi-dual
\begin{align}
\begin{split}
\label{finitesumsemidual}
    \min_{v\in\mathbb{R}^n}\left\{G(v)=\frac{1}{n}\sum_{i=1}^{n} g_i(v)=\frac{1}{n}\sum_{i=1}^{n}n\alpha_ih_i(v)\right\},
\end{split}    
\end{align}
where
\begin{equation*}
h_i(v) = \eta\ln\sum_{j=1}^{n}\exp\left(\frac{v_j-c_{ij}-\eta}{\eta}\right)  -\sum_{j=1}^{n}\beta_jv_j-\eta\ln\alpha_i+\eta.
\end{equation*}

Problem \eqref{finitesumsemidual} is the semi-dual of the entropy-regularized OT problem. Since
$\lambda = (u^{\top},v^{\top})^{\top}=(u(v)^{\top},v^{\top})^{\top}$, \eqref{xlambda} gives the
primal response induced by $v$.
\begin{align*}
&x(v)=\exp\left(\frac{A^{\top}(u(v)^{\top},v^{\top})^{\top}-c-\eta\textbf{1}_{n^2}}{\eta}\right).  \label{relationxvmain} 
\end{align*}

  Hence $x(v)$ has the closed form
 \begin{equation}
 \label{closedformofprimalandsemidual}
 \begin{aligned}
     \left[x(v)\right]_{i+n(j-1)}&=\exp \left(\frac{u_i+v_j-c_{ij}-\eta}{\eta}\right)\\
     &=\exp\left(\frac{\eta\ln(\alpha_i)-\eta\ln\sum_{j=1}^{n}\exp\left(\frac{v_j-c_{ij}-\eta}{\eta}\right)+v_j-c_{ij}-\eta}{\eta}\right)\\
     &=\frac{\alpha_i\exp(\frac{v_j-c_{ij}-\eta}{\eta})}{\sum_{l=1}^{n}\exp(\frac{v_l-c_{il}-\eta}{\eta})}.
 \end{aligned}    
 \end{equation}
 
For the entropic OT problem \eqref{stackmain} and its semi-dual \eqref{finitesumsemidual},
Assumption~\ref{assume0} holds. To apply PDASGD, it remains to verify Assumption~\ref{assume}. The
gradient identity in \eqref{assumption} becomes the semi-dual identity in
Proposition~\ref{gradientequal}.
\begin{proposition}\label{gradientequal}
    With the notation above,
\[\left[Ax(v)-b\right]_{i=1,\cdots,n}=0,\]\[\nabla G(v)=\left[Ax(v)-b\right]_{i=n+1,\cdots,2n}, \]
where $\left[\cdot\right]_{i=1,\cdots,n}$ denotes the first $n$ coordinates and
$\left[\cdot\right]_{i=n+1,\cdots,2n}$ denotes the last $n$ coordinates.
\end{proposition}

For the finite-sum and smoothness parts of Assumption~\ref{assume}, it remains to verify that the
components $g_i$ in \eqref{finitesumsemidual} are convex and smooth. The next proposition gives the
needed constants.

\begin{proposition}
   \label{proposition1}
   In problem \eqref{finitesumsemidual}, 
   \begin{itemize}
       \item 
   $g_i$
   are convex.
   \item
   $g_i(v)$ is $n\alpha_i/\eta$-smooth w.r.t. $\Vert \cdot\Vert_2$ for all $1\leq i\leq n$. The average $L$-smooth parameter w.r.t. $\Vert\cdot\Vert_2$ of $G(v)$ is $1/\eta.$
   \item
   $G(v)$ is $5/\eta$-smooth w.r.t. $\Vert\cdot\Vert_{\infty}$.
   \end{itemize}
\end{proposition}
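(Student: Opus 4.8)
The plan is to reduce the whole statement to the Hessian of a single log-sum-exp term. For each $i$ I would write
\[
g_i(v)=n\alpha_i h_i(v)=n\alpha_i\Bigl(\eta\,\ell_i(v)-\langle\beta,v\rangle-\eta\ln\alpha_i+\eta\Bigr),\qquad \ell_i(v)=\ln\sum_{j=1}^n\exp\!\Bigl(\tfrac{v_j-c_{ij}-\eta}{\eta}\Bigr),
\]
so that the affine and constant parts contribute nothing to convexity or to either smoothness constant, and it suffices to differentiate $\ell_i$. A direct computation gives $\nabla\ell_i(v)=\tfrac1\eta\pi^{(i)}(v)$ and $\nabla^2\ell_i(v)=\tfrac1{\eta^2}\Sigma_i(v)$, where $\pi^{(i)}(v)\in\mathbb R^n$ is the softmax (Gibbs) vector with $\pi^{(i)}_j(v)=\exp((v_j-c_{ij}-\eta)/\eta)\big/\sum_{l}\exp((v_l-c_{il}-\eta)/\eta)$ and $\Sigma_i(v):=\mathrm{diag}(\pi^{(i)}(v))-\pi^{(i)}(v)\pi^{(i)}(v)^{\top}$. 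Hence $\nabla^2 g_i(v)=\tfrac{n\alpha_i}{\eta}\,\Sigma_i(v)$ and $\nabla^2 G(v)=\tfrac1\eta\sum_{i=1}^n\alpha_i\,\Sigma_i(v)$; since everything in sight is $C^\infty$, each of the three claims becomes a uniform bound on an appropriate induced norm of these Hessians.

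For convexity, I would note that $\Sigma_i(v)$ is precisely the covariance matrix of the one-hot encoding of a categorical law $\pi^{(i)}(v)$, so $\Sigma_i(v)\succeq 0$, hence $g_i$ (and therefore $G$) is convex. For $\Vert\cdot\Vert_2$-smoothness of $g_i$, I would use $\Sigma_i(v)\succeq0$ to get $\lambda_{\max}(\Sigma_i(v))\le\mathrm{tr}\,\Sigma_i(v)=1-\Vert\pi^{(i)}(v)\Vert_2^2\le1$, so that $\Vert\nabla^2 g_i(v)\Vert_{2\to2}\le n\alpha_i/\eta$; averaging then gives $\overline L=\tfrac1n\sum_{i=1}^n n\alpha_i/\eta=\tfrac1\eta$ because $\sum_i\alpha_i=1$ (the same estimate also yields $\widetilde L\le1/\eta$, consistent with the same-order condition in Assumption \ref{assume2}). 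For $\Vert\cdot\Vert_\infty$-smoothness of $G$, I would recall that for a $C^2$ function this is equivalent to $\sup_v\Vert\nabla^2 G(v)\Vert_{\infty\to1}\le L^{\prime}$ (integrate the Hessian along the segment and measure the image in $\Vert\cdot\Vert_1$), and then bound that induced norm by the entrywise $\ell_1$-norm: for any probability vector $\pi$, $\sum_{k,l}\bigl|[\mathrm{diag}(\pi)-\pi\pi^{\top}]_{kl}\bigr|=\sum_k\pi_k(1-\pi_k)+\sum_{k\ne l}\pi_k\pi_l=2(1-\Vert\pi\Vert_2^2)\le2$. This gives $\Vert\nabla^2 G(v)\Vert_{\infty\to1}\le\tfrac1\eta\sum_i\alpha_i\cdot2=\tfrac2\eta\le\tfrac5\eta$.

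The calculations are essentially routine once the softmax Hessian $\tfrac1{\eta^2}\Sigma_i$ is in hand; the only step that needs care is the $\Vert\cdot\Vert_\infty$ item, where one must work with the operator norm $\Vert\cdot\Vert_{\infty\to1}$ (equivalently, the sum of absolute values of the Hessian entries) rather than the spectral norm, and where the stated constant $5$ is deliberately not tight — $2/\eta$ already suffices, so there is slack to absorb any cruder bound one prefers. A small bookkeeping point to keep in mind throughout is that the linear term $-\langle\beta,v\rangle$ and the additive constants genuinely drop out of all three estimates, which they do because they do not enter the Hessian.
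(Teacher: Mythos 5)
Your proof is correct. For the first two bullets it is essentially the paper's argument in cleaner clothing: the paper computes the entries of $\nabla^2 g_i$ explicitly, proves positive semidefiniteness by Cauchy--Schwarz, and bounds the top eigenvalue by a Rayleigh-quotient computation, which is exactly your observation that $\nabla^2 g_i=\tfrac{n\alpha_i}{\eta}\bigl(\mathrm{diag}(\pi^{(i)})-\pi^{(i)}{\pi^{(i)}}^{\top}\bigr)$ is a nonnegative multiple of a categorical covariance matrix with $\lambda_{\max}\le 1$. The genuine divergence is in the third bullet: the paper does not prove the $\Vert\cdot\Vert_\infty$-smoothness at all but simply cites Lemma 1 of the PDASMD paper \cite{luo2023improved} for the constant $5/\eta$, whereas you give a short self-contained argument --- integrate the Hessian along the segment to reduce $\Vert\cdot\Vert_\infty$-smoothness to a uniform bound on $\Vert\nabla^2 G\Vert_{\infty\to 1}$, dominate that induced norm by the entrywise $\ell_1$-sum, and compute $\sum_{k,l}\bigl|[\mathrm{diag}(\pi)-\pi\pi^{\top}]_{kl}\bigr|=2(1-\Vert\pi\Vert_2^2)\le 2$. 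This buys you both independence from the external reference and the sharper constant $2/\eta$, which of course still implies the stated $5/\eta$; the only mild caveat is that if one wants the proposition verbatim one should say explicitly (as you do) that a smaller smoothness constant subsumes a larger one.
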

\begin{remark}
    Proposition~\ref{proposition1} shows that the semi-dual smoothness parameters relevant to
    PDASGD are independent of $n$ for fixed $\eta$. Together with the low component-gradient cost,
    this yields the complexity improvement stated in Section~\ref{improve}.
\end{remark}

The entropic OT semi-dual therefore satisfies the assumptions of Section~\ref{PDASGDintro}, so PDASGD
applies to \eqref{stackmain} through \eqref{finitesumsemidual}. We denote the vector-form output by
$\vec{x}\in\mathbb{R}^{n^2}$ and the corresponding matrix by
$\vec{X}\in\mathbb{R}^{n\times n}$, where $\vec{x}=\text{Vec}(\vec{X})$.

\begin{corollary}\label{coro2}
Assume $\min_i\alpha_i\ge n^{-O(1)}$ and
$\min_j\beta_j\ge n^{-O(1)}$. For fixed $\eta$, the total
arithmetic complexity of obtaining an $\epsilon$-accurate solution to the entropic OT problem
\eqref{stackmain} is
$\widetilde{\mathcal{O}}_\eta\!\left((n^{2.5}(1+\|C\|_\infty))/\sqrt{\epsilon}\right)$
for APDAGD and
$\widetilde{\mathcal{O}}_\eta\!\left((n^2(1+\|C\|_\infty))/\sqrt{\epsilon}\right)$ for PDASGD.
\end{corollary}

The semi-dual finite-sum structure gives PDASGD the improved complexity
$\widetilde{\mathcal{O}}_\eta\!\left(n^2/\sqrt{\epsilon}\right)$ for a fixed regularization parameter. The smoothness
parameter of the full dual with respect to $\Vert\cdot\Vert_2$ is $2/\eta$
\citep{dvurechensky2018computational,guo2020fast,lin2019efficient}, the same order as the semi-dual
parameters in Proposition~\ref{proposition1}. This same-order smoothness, together with low-cost
component gradients, yields the factor-$\sqrt n$ improvement.
\begin{remark}\label{rem:unreg}
We restrict attention to the entropically regularized OT problem. Rounding the PDASGD output to the
transport polytope $\mathcal{U}(\alpha,\beta)$ via the procedure of \cite{altschuler2017near} would be
the standard route to an $\epsilon$-approximation for unregularized OT, but the stochastic output introduces
an additional difficulty. The standard rounding analysis would have to control
$\mathbb{E}[\|A\vec{X}-b\|_1]$, whereas Theorem~\ref{generaltheorem} controls
$\|\mathbb{E}[A\vec{X}-b]\|_1$.
Converting between these two quantities requires a variance
bound on the primal iterate, which to our knowledge is not available for stochastic primal-dual
algorithms in this setting. A similar gap exists implicitly in \citealt{guo2020fast,luo2023improved}.
We therefore leave the unregularized extension to future work.
\end{remark}

\section{The Entropic Wasserstein Barycenter}\label{barycenter}
We now specialize the preceding analysis to the fixed-support entropic Wasserstein barycenter. The
inputs are distributions $\mu_1,\ldots,\mu_m\in\Delta_n$, weights $w_a>0$ with
$\sum_{a=1}^m w_a=1$, cost matrices $C_a\in\mathbb R^{n\times n}$, and a fixed regularization parameter
$\eta>0$. The unknown barycenter is the common target marginal. Each input distribution has its own
transport plan $\pi_a$, whose row marginal is fixed at $\mu_a$. The column marginals
$\pi_a^\top\mathbf1$ all equal the same vector $\nu$.

\subsection{Barycenter formulation}\label{bc:consensus}
The fixed-support entropic Wasserstein barycenter is obtained by solving
\begin{equation}\label{bc:primal}
\begin{aligned}
\min_{\{\pi_a\ge0\},\,\nu\in\mathbb R^n}\quad
&\sum_{a=1}^m w_a\Big(\langle C_a,\pi_a\rangle+\eta\langle\pi_a,\ln\pi_a\rangle\Big)\\
\text{s.t.}\quad
&\pi_a\mathbf1=\mu_a,\quad \pi_a^\top\mathbf1=\nu,\quad a=1,\ldots,m .
\end{aligned}
\end{equation}
The matrix $\pi_a\in\mathbb R^{n\times n}$ transports $\mu_a$ to the free barycenter marginal $\nu$. The
constraint $\nu\in\Delta_n$ is redundant because nonnegativity and $\pi_a\mathbf1=\mu_a\in\Delta_n$ imply
$\pi_a^\top\mathbf1\in\Delta_n$.

The variable $\nu$ only records the common column marginal. Equivalently, one may remove $\nu$ from
the formulation and require the column marginals $\pi_a^\top\mathbf1$ to be equal. With
$x=(\mathrm{vec}(\pi_1),\ldots,\mathrm{vec}(\pi_m))$, \eqref{bc:primal} is then an instance of
\eqref{generalprimalmain}. This constrained form is the starting point for the semi-dual derivation
below.

\subsection{Semi-dual structure}\label{bc:semidual}
Introduce row multipliers $\widehat f_a\in\mathbb R^n$ for $\pi_a\mathbf1=\mu_a$ and column multipliers
$\widehat g_a\in\mathbb R^n$ for $\pi_a^\top\mathbf1=\nu$. With the sign convention used in
Section~\ref{general}, the Lagrangian of \eqref{bc:primal} is
\[
\mathcal L
=\sum_{a=1}^m w_a\Big(\langle C_a,\pi_a\rangle+\eta\langle\pi_a,\ln\pi_a\rangle\Big)
-\sum_{a=1}^m\langle \widehat f_a,\pi_a\mathbf1-\mu_a\rangle
-\sum_{a=1}^m\langle \widehat g_a,\pi_a^\top\mathbf1-\nu\rangle .
\]
Since $\nu$ is free, minimizing over $\nu$ is finite only when
\[
\sum_{a=1}^m\widehat g_a=0.
\]
Set $f_a=\widehat f_a/w_a$ and $g_a=\widehat g_a/w_a$. The condition above becomes the weighted
column-dual constraint
\[
H:=\left\{\mathbf g=(g_1,\ldots,g_m):\sum_{a=1}^m w_a g_a=0\right\}.
\]
In these scaled dual coordinates, the dual before eliminating the row variables is
\[
\min_{\mathbf g\in H,\,f_1,\ldots,f_m}
\sum_{a=1}^m w_a\left\{
-\langle \mu_a,f_a\rangle
+\eta\sum_{i=1}^n\sum_{j=1}^n
\exp\!\left(\frac{f_a^i+g_{a,j}-C_a^{ij}-\eta}{\eta}\right)
\right\}.
\]
For fixed $g_a$, minimizing with respect to the row variable $f_a$ gives
\[
f_a^i(g_a)
=\eta\ln\mu_{a,i}
-\eta\ln\sum_{j=1}^n
\exp\!\left(\frac{g_{a,j}-C_a^{ij}-\eta}{\eta}\right),
\qquad i=1,\ldots,n .
\]
Substituting $f_a(g_a)$ into the dual and dropping constants independent of $\mathbf g$ gives the
barycenter semi-dual on $H$,
\begin{equation}\label{bc:phi}
\Phi(\mathbf g)
=\sum_{a=1}^m w_a\sum_{i=1}^n\mu_{a,i}h_i^{(a)}(g_a)
=\frac{1}{mn}\sum_{a=1}^m\sum_{i=1}^n \Phi_{a,i}(\mathbf g),
\end{equation}
where
\[
h_i^{(a)}(g_a)
=\eta\ln\sum_{j=1}^n
\exp\!\left(\frac{g_{a,j}-C_a^{ij}-\eta}{\eta}\right),
\qquad
\Phi_{a,i}(\mathbf g)=mn\,w_a\mu_{a,i}h_i^{(a)}(g_a).
\]

The same substitution gives the primal response induced by $\mathbf g$. For support point $i$ of input
distribution $a$, define
\begin{equation}\label{bc:softmax}
p_{ij}^{(a)}(g_a)
=\frac{\exp\!\left(\frac{g_{a,j}-C_a^{ij}}{\eta}\right)}{\sum_{\ell=1}^n\exp\!\left(\frac{g_{a,\ell}-C_a^{i\ell}}{\eta}\right)},
\qquad j=1,\ldots,n .
\end{equation}
Then
\begin{equation}\label{bc:plan}
\pi_a(\mathbf g)^{ij}=\mu_{a,i}p_{ij}^{(a)}(g_a).
\end{equation}
Thus $p_i^{(a)}(g_a)\in\Delta_n$, and the recovered plan satisfies
$\pi_a(\mathbf g)\mathbf1=\mu_a$ by construction.

The component gradient of $\Phi_{a,i}$ has the explicit block form
\[
\nabla_{g_b}\Phi_{a,i}(\mathbf g)
=
\begin{cases}
mn\,w_a\mu_{a,i}\,p_i^{(a)}(g_a), & b=a,\\
0, & b\ne a.
\end{cases}
\]
Thus one stochastic component gradient requires one $n$-dimensional softmax, and its nonzero entries
are confined to the block $g_a$.

It remains to control whether the column marginals coincide. Define
\[
q_a(\mathbf g):=\pi_a(\mathbf g)^\top\mathbf1,
\qquad
\bar q(\mathbf g):=\sum_{b=1}^m w_bq_b(\mathbf g).
\]
The following identity is the barycenter analogue of Proposition~\ref{gradientequal}. Since the
semi-dual variable is constrained to $H$, the second line is stated through derivatives along
feasible directions.
An admissible direction is a vector $\delta=(\delta_1,\ldots,\delta_m)\in H$, so that
$\mathbf g+t\delta$ remains in $H$ for small $t$.

\begin{proposition}\label{bc:gradid}
With the notation above, for every $\mathbf g\in H$,
\[
\pi_a(\mathbf g)\mathbf1-\mu_a=0,\qquad a=1,\ldots,m,
\]
\[
\left.\frac{d}{dt}\Phi(\mathbf g+t\delta)\right|_{t=0}
=\sum_{a=1}^m\left\langle w_a\big(q_a(\mathbf g)-\bar q(\mathbf g)\big),\delta_a\right\rangle,
\qquad \delta\in H.
\]
\end{proposition}

Thus stationarity of $\Phi$ on $H$ means that this derivative is zero for every admissible direction
$\delta\in H$. By Proposition~\ref{bc:gradid}, this is equivalent to
$q_a(\mathbf g)=\bar q(\mathbf g)$ for every $a$, namely the recovered transport plans share a common
column marginal.

For the finite-sum and smoothness parts of Assumption~\ref{assume}, it remains to verify component
convexity and smoothness for \eqref{bc:phi}. The next proposition is the barycenter counterpart of
Proposition~\ref{proposition1}.
\begin{proposition}\label{bc:smooth}
In problem \eqref{bc:phi},
\begin{itemize}
\item the components $\Phi_{a,i}$ are convex;
\item $\Phi_{a,i}$ is $(mn\,w_a\mu_{a,i})/\eta$-smooth with respect to $\|\cdot\|_2$ for every
pair $(a,i)$, and the average $L$-smooth parameter is $\overline L=1/\eta$;
\item $\Phi$ is $5/\eta$-smooth with respect to $\|\cdot\|_\infty$ on $H$.
\end{itemize}
\end{proposition}
\begin{remark}
Proposition~\ref{bc:smooth} shows that the barycenter semi-dual has the same dimension-free
smoothness scale as the two-marginal semi-dual. Together with the fact that each component gradient is
one softmax over the barycenter support, this yields the improved support-size
dependence in Theorem~\ref{bc:main}.
\end{remark}

\subsection{PDASGD-BC and complexity}\label{bc:algo}
The constraint $\mathbf g\in H=\{\mathbf g:\sum_a w_ag_a=0\}$ is the dual feasibility condition obtained
when the barycenter marginal $\nu$ is eliminated. PDASGD-BC therefore runs Algorithm~\ref{PDASGD} on
this subspace. In block coordinates, this means that every full or stochastic gradient estimator is
projected onto $H$ before the inner-loop $z$- and $y$-updates. Since the iterates are initialized in $H$,
the affine updates and snapshot averages then remain in $H$.

Proposition~\ref{bc:smooth} gives
\[
N=mn,\qquad \overline L=\frac{1}{\eta},\qquad L'=\frac{5}{\eta}.
\]
The component indexed by $(a,i)$ differentiates only the block $g_a$ and requires the softmax
\eqref{bc:softmax}, so one component step costs $\mathcal O(n)$. A full-gradient snapshot evaluates all
$mn$ components and costs $\mathcal O(mn^2)$. The Euclidean projection onto $H$ has the explicit form
\[
(P_Hv)_a=v_a-w_a\frac{\sum_b w_bv_b}{\sum_b w_b^2},\qquad a=1,\ldots,m,
\]
and is applied to the gradient estimator. For single-component inner steps, the all-block correction is
kept as a lazy $n$-vector offset, so the projected component step remains $\mathcal O(n)$; full
materialization is needed only for snapshot gradients. We set the inner-loop length to $M=N$, so one
outer loop costs $\mathcal O(mn^2)$.

Combining these operation counts with Corollary~\ref{theorem1cor} gives the following complexity
bound.

\begin{theorem}[PDASGD-BC complexity]\label{bc:main}
Assume
$\min_{a,i}\mu_{a,i}\ge n^{-O(1)}$ and
$\min_j\nu^\star_j\ge n^{-O(1)}$, where $\nu^\star$ is the
entropic barycenter. For fixed $\eta$, PDASGD-BC returns a primal-dual output for \eqref{bc:primal}
with expected entropic barycenter objective residual at most $\epsilon$ in
\[
\widetilde{\mathcal O}_\eta\!\left(\frac{mn^2(1+\max_a\|C_a\|_\infty)}{\sqrt{\epsilon}}\right)
\]
arithmetic operations.
\end{theorem}

Theorem~\ref{bc:main} controls the barycenter objective residual, while
Proposition~\ref{bc:feas-main} below controls the remaining barycenter-feasibility residual. Equation~\eqref{bc:plan}
enforces the row marginals for every sampled primal response, and averaging preserves these
constraints. The residual below measures whether the recovered column marginals define a common
barycenter marginal, using their weighted average as the reference.

\begin{proposition}[Column-marginal residual of the averaged output]\label{bc:feas-main}
Let $x^S=(\pi_1^S,\ldots,\pi_m^S)$ be the averaged primal output of PDASGD-BC, and let
$\bar\nu^S:=\sum_{b=1}^m w_b(\pi_b^S)^\top\mathbf1$.
The row marginals satisfy $\pi_a^S\mathbf1=\mu_a$ exactly. Let $\widehat{\mathbf g}_s$ be the sampled
semi-dual point used to form the $s$th primal response, and set
$\delta_s=\mathbb E[\Phi(\widehat{\mathbf g}_s)-\Phi(\mathbf g^\star)]$. Then
\begin{equation}\label{bc:feas-bound}
\mathbb E\sum_{a=1}^m w_a\big\|(\pi_a^S)^\top\mathbf1-\bar\nu^S\big\|_1
\le
2\sqrt{\frac{10}{\eta\sum_{s=0}^{S-1}1/\tau_{1,s}}}
\sqrt{\sum_{s=0}^{S-1}\frac{\delta_s}{\tau_{1,s}}}.
\end{equation}
\end{proposition}

The residual on the left-hand side of \eqref{bc:feas-bound} is zero exactly when the recovered
transport plans share one column marginal. Thus the row constraints require no additional correction,
while the remaining barycenter-feasibility error decreases with the same semi-dual gaps used in the
objective analysis.

In the same fixed-$\eta$ entropic regime, deterministic accelerated gradient has rate
$\widetilde{\mathcal O}_\eta\!\left(mn^{2.5}/\sqrt{\epsilon}\right)$.
Theorem~\ref{bc:main} therefore improves the support-size dependence by a factor $\sqrt n$. For fixed
$m$, the $n^2$ dependence matches the two-marginal PDASGD specialization, although the guarantee now
concerns the barycenter objective.

\begin{table}[H]\centering
\caption{First-order complexity for the fixed-support Wasserstein barycenter of $m$ distributions of
dimension $n$. In each block, $\epsilon$ denotes the accuracy target for the objective named in that
block. The lower block is the fixed-$\eta$ entropic objective.}
\label{tab:bc-landscape}
{\small\setlength{\tabcolsep}{5pt}
\begin{tabular}{@{}p{0.52\textwidth}ll@{}}
\hline
Method & Complexity & Type \\
\hline
\multicolumn{3}{@{}l}{\textit{Unregularized $\epsilon$-accurate barycenter}}\\
\quad IBP, or Sinkhorn-barycenter \citep{benamou2015iterative,kroshnin2019complexity} & $\widetilde{\mathcal O}\!\left(mn^2/\epsilon^2\right)$ & deterministic \\
\quad FastIBP \citep{guminov2021combination} & $\widetilde{\mathcal O}\!\left(mn^{7/3}/\epsilon^{4/3}\right)$ & deterministic \\
\quad Area-convexity and dual extrapolation \citep{dvinskikh2021improved,lin2020fixed} & $\widetilde{\mathcal O}\!\left(mn^2/\epsilon\right)$ & deterministic \\
\hline
\multicolumn{3}{@{}l}{\textit{Entropic objective, fixed $\eta$}}\\
\quad Accelerated gradient \citep{kroshnin2019complexity,dvurechensky2018computational} & $\widetilde{\mathcal O}_\eta\!\left(mn^{2.5}/\sqrt{\epsilon}\right)$ & det.\ accel. \\
\quad \textbf{PDASGD-BC, this paper} & $\boldsymbol{\widetilde{\mathcal O}_\eta\!\left(mn^2/\sqrt{\epsilon}\right)}$ & \textbf{stoch.\ VR, accel.} \\
\hline
\end{tabular}}
\end{table}

\section{Numerical Experiments}\label{num}
We report numerical experiments for the two-marginal entropic OT specialization and for PDASGD-BC.
The two-marginal timing experiments were run on a 2023 16-inch MacBook Pro with 32 GB of RAM. The
larger barycenter experiments, including the DOTmark barycenter and the support-size scaling sweep,
were run on the Georgia Tech PACE Phoenix SLURM cluster. The scaling sweep used one-node array tasks with four
CPU cores and 4 GB of memory per core; the DOTmark image-barycenter run used one node with eight CPU
cores and 8 GB of memory per core. Because wall-clock times depend on the machine, the barycenter
scaling comparisons are reported primarily in full-gradient-equivalent work.

\subsection{Two-Marginal Entropic OT}\label{solveentropicot}
For the two-marginal tests in Figures~\ref{Fig:Data1} to \ref{Fig:Data4}, we use DOTmark images
\citep{schrieber2016dotmark} and the synthetic grayscale images of \cite{altschuler2017near}. The
synthetic images contain a randomly placed foreground square on a dark background. The foreground
occupies 20\% of the image; foreground and background intensities are sampled uniformly from
$[0,10]$ and $[0,1]$, respectively.

Each image is resized to $s\times s$, vectorized, and normalized to a marginal distribution of
dimension $n=s^2$. The cost matrix contains the $\ell_1$ distances between pixel locations. We use
five randomly selected image pairs for each data source and report error bars over these pairs. For
DOTmark, the source images are taken at $512\times512$ resolution before resizing. The regularizer is
$\eta=0.01$, and the stopping criterion is the primal-dual gap. For PDASGD, we set the inner-loop
length to $M=2\sqrt n$. The implementation uses the larger practical step size $15\gamma_s$ in
Step~10 of Algorithm~\ref{PDASGD}; the theory uses the conservative step size stated in the algorithm.

Figures~\ref{Fig:Data1} and \ref{Fig:Data2} use primal-dual gap tolerance $0.02$, while
Figures~\ref{Fig:Data3} and \ref{Fig:Data4} use tolerance $0.05$. Across the tested dimensions and
tolerances, PDASGD is faster than APDAGD, AAM, and PDASMD, the closest stochastic primal-dual
baseline. Sinkhorn is a specialized solver for the two-marginal entropic problem, so we report it as a
computational reference. The variability of
PDASGD across image pairs is small and comparable to the other first-order baselines.
\begin{figure}[htbp]
   \begin{minipage}{0.5\textwidth}
     \centering
     \includegraphics[width=0.8\linewidth]{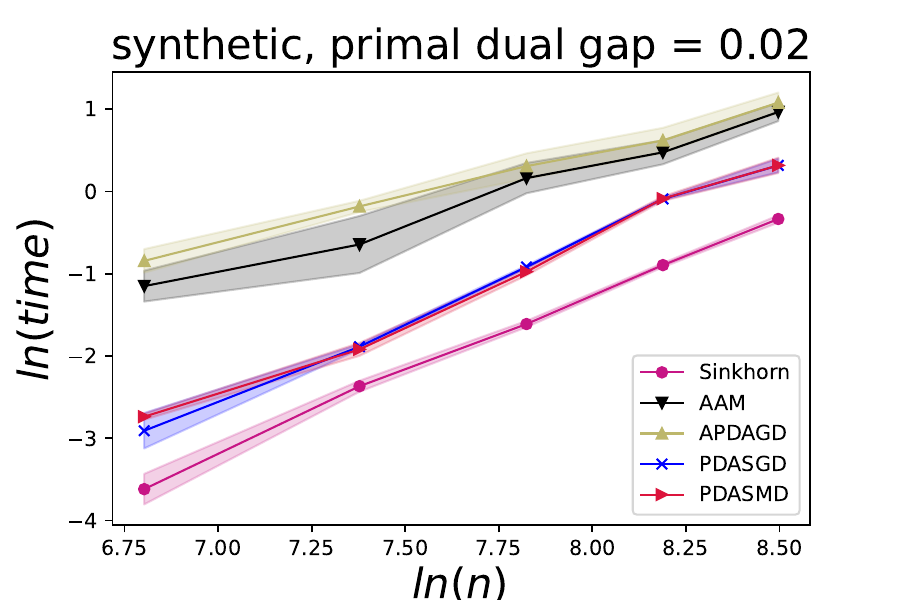}
     \caption{Entropic OT on synthetic data, gap tolerance $0.02$.}\label{Fig:Data1}
   \end{minipage}
   \begin{minipage}{0.5\textwidth}
     \centering
     \includegraphics[width=0.8\linewidth]{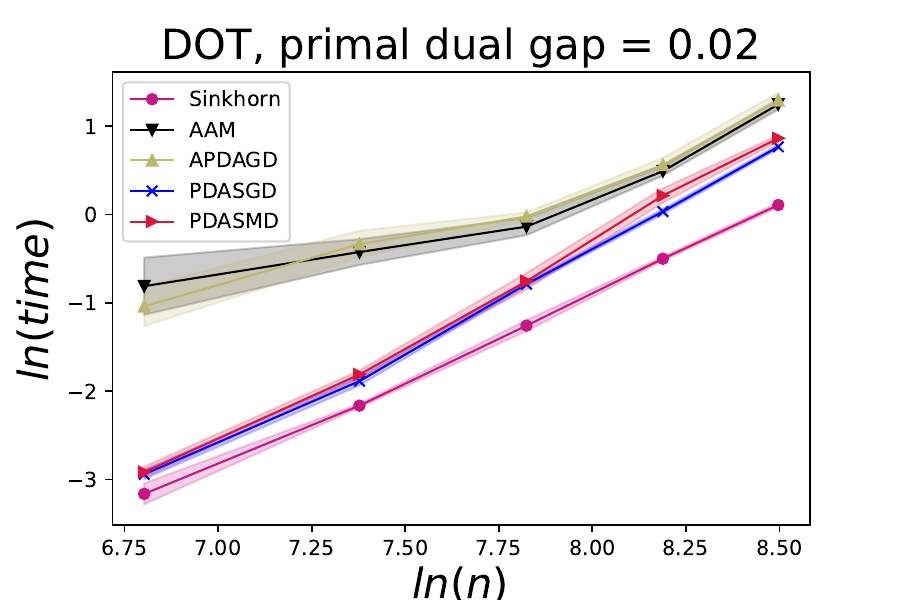}
     \caption{Entropic OT on DOTmark, gap tolerance $0.02$.}\label{Fig:Data2}
   \end{minipage}
\end{figure}
\begin{figure}[htbp]
   \begin{minipage}{0.5\textwidth}
     \centering
     \includegraphics[width=0.8\linewidth]{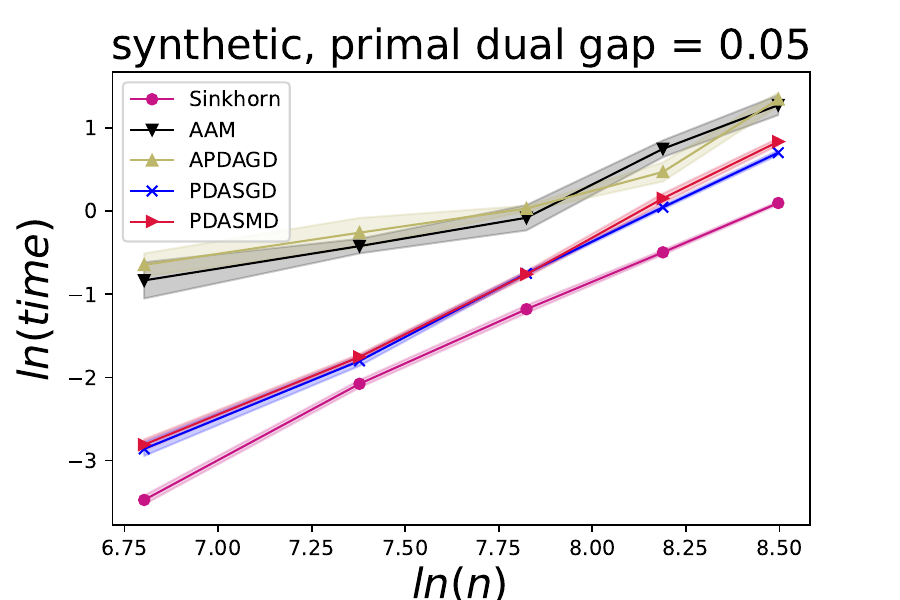}
     \caption{Entropic OT on synthetic data, gap tolerance $0.05$.}\label{Fig:Data3}
   \end{minipage}
   \begin{minipage}{0.5\textwidth}
     \centering
     \includegraphics[width=0.8\linewidth]{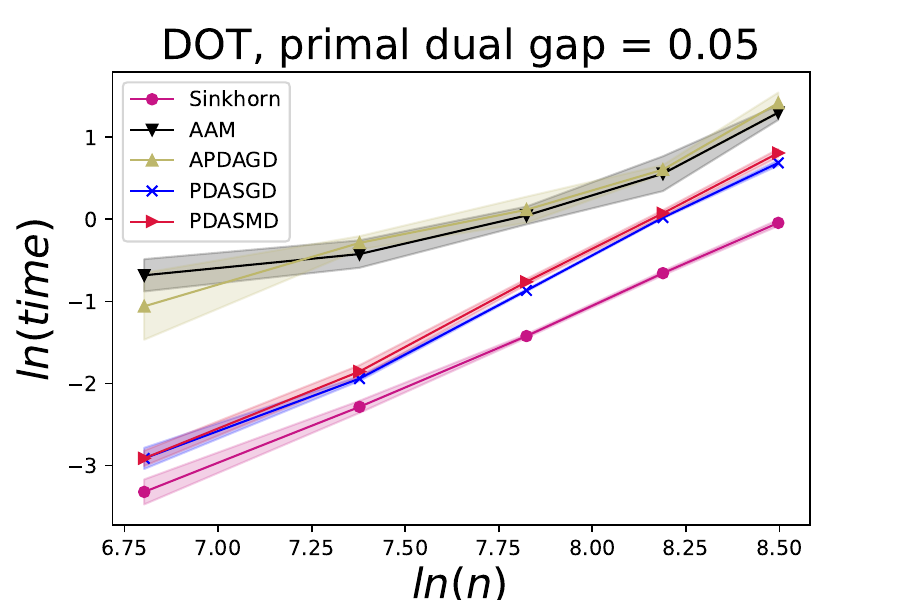}
     \caption{Entropic OT on DOTmark, gap tolerance $0.05$.}\label{Fig:Data4}
   \end{minipage}
\end{figure}

\subsection{Wasserstein Barycenter Experiments}\label{bc:experiments}
We compare PDASGD-BC with three baselines. IBP, or Sinkhorn-barycenter, is the standard deterministic
fixed-point method \citep{cuturi2014fast,benamou2015iterative}. APGD-BC is a deterministic
accelerated-gradient implementation for the same fixed-$\eta$ barycenter semi-dual, with the
$\widetilde{\mathcal O}_\eta\!\left(mn^{2.5}/\sqrt{\epsilon}\right)$ scaling in
Table~\ref{tab:bc-landscape}. SGD-BC
is a stochastic dual method without variance reduction, in the spirit of stochastic barycenter
methods \citep{claici2018stochastic,li2020continuous}. Together, these baselines distinguish
acceleration, stochastic sampling, and variance reduction. The FastIBP and area-convexity methods in
Table~\ref{tab:bc-landscape} are included for rate comparison but are not reimplemented in our experiments.

For these barycenter runs, the row marginals are enforced by construction, so we use the
column-marginal residual as the convergence diagnostic and report full-gradient-equivalent work as the
algorithmic cost proxy.

The Gaussian test uses three one-dimensional inputs, for which the unregularized Wasserstein
barycenter has a closed form \citep{agueh2011barycenters}. With $n=100$ and $\eta=0.004$, PDASGD-BC
recovers the closed-form barycenter mean of the unregularized model and differs from IBP by
$\|\nu_{\mathrm{PDASGD\text{-}BC}}-\nu_{\mathrm{IBP}}\|_1\approx1.3\times10^{-4}$; see
Figure~\ref{fig:bc-gauss}. A finite-difference check of the primal gradient identity in
Section~\ref{bc:semidual} gives a relative error of order $10^{-8}$. Figure~\ref{fig:bc-conv} then compares
PDASGD-BC, APGD-BC, and SGD-BC on the column-marginal residual, measured against
full-gradient-equivalent work. PDASGD-BC reaches about $10^{-5}$, while APGD-BC remains near $10^{-3}$
and SGD-BC near $10^{-2}$ on the same instance.

\begin{figure}[htbp]
  \begin{minipage}{0.52\textwidth}\centering
    \includegraphics[width=\linewidth]{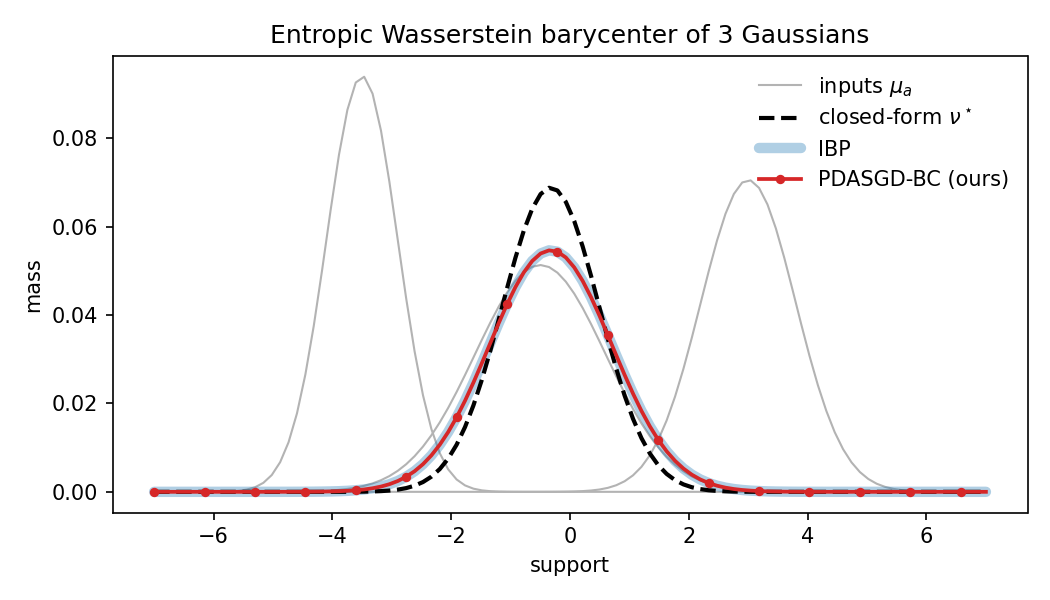}
    \caption{Three Gaussian inputs. PDASGD-BC matches the IBP reference and recovers the
    closed-form barycenter mean. The slight broadening relative to $\nu^\star$ is due to entropic
    regularization with $\eta=0.004$.}\label{fig:bc-gauss}
  \end{minipage}\hfill
  \begin{minipage}{0.46\textwidth}\centering
    \includegraphics[width=\linewidth]{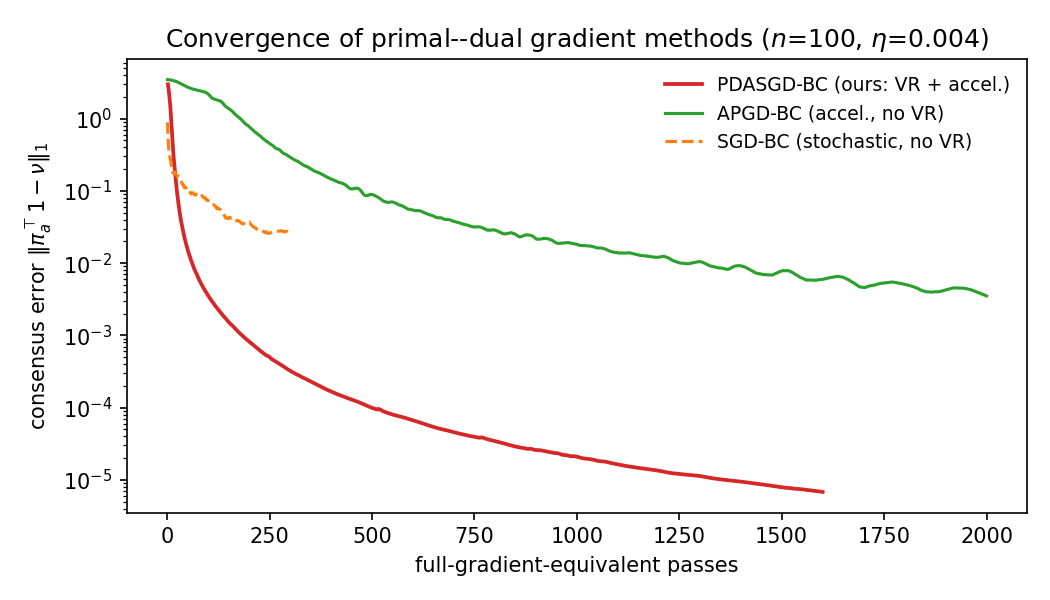}
    \caption{Column-marginal residual vs.\ work for $n=100$. PDASGD-BC combines acceleration and
    variance reduction; APGD-BC uses acceleration without variance reduction; SGD-BC is stochastic
    without variance reduction.}\label{fig:bc-conv}
  \end{minipage}
\end{figure}

Figures~\ref{fig:bc-shapes} and \ref{fig:bc-mnist} illustrate the recovered primal barycenters on
images. In Figure~\ref{fig:bc-shapes}, four binary shapes are represented as normalized histograms on
a $28\times28$ grid, so $n=784$. PDASGD-BC returns an equal-weight barycenter that matches the IBP
reference to $\|\cdot\|_1\approx1.7\times10^{-4}$. Sweeping the weights between two shapes gives the
entropic analogue of displacement interpolation \citep{mccann1997convexity}. Figure~\ref{fig:bc-mnist}
averages $12$ representative MNIST images of the digit $3$, selected from a larger candidate set
\citep{lecun1998gradient}. The Wasserstein barycenter aligns the strokes, whereas the Euclidean pixel
average blurs misaligned strokes.

\begin{figure}[htbp]\centering
  \includegraphics[width=0.92\linewidth]{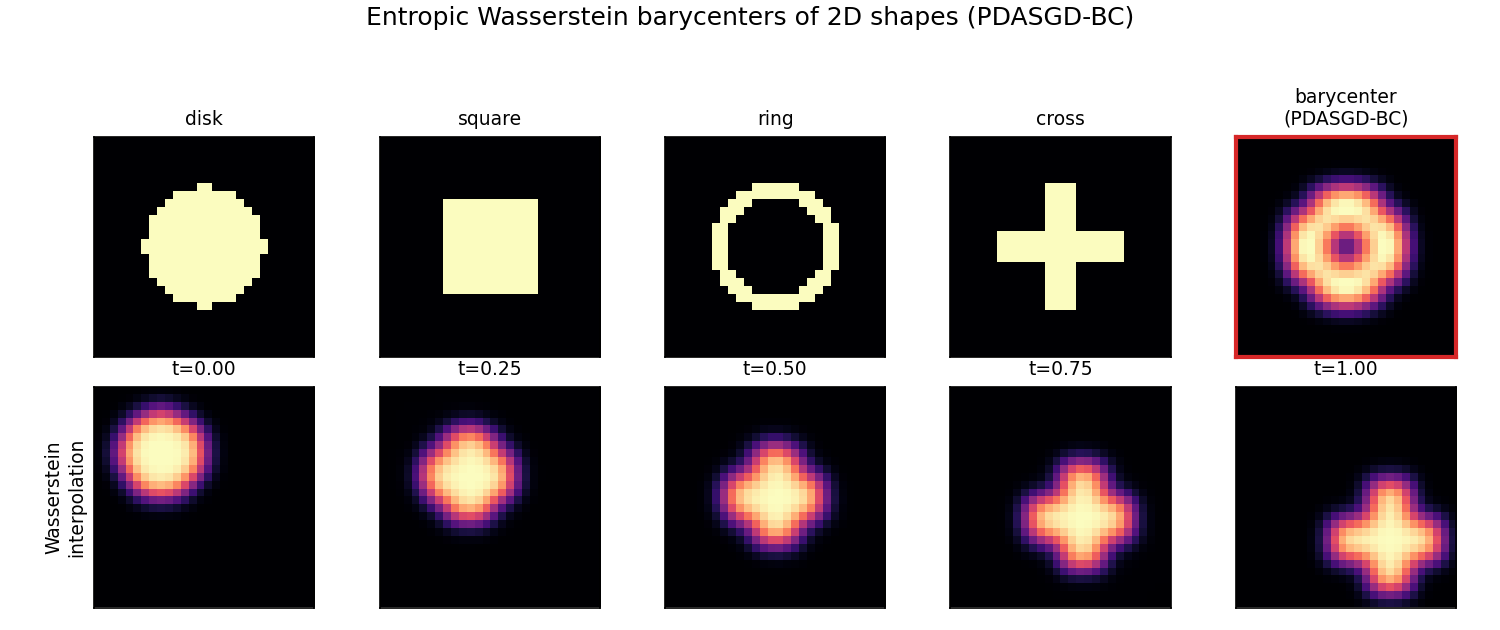}
  \caption{Shape barycenters with PDASGD-BC, with $n=28^2=784$. The top row shows four input shapes
  and their equal-weight entropic Wasserstein barycenter. The bottom row shows an interpolation
  obtained by sweeping the barycenter weight from $t=0$ to $t=1$.}\label{fig:bc-shapes}
\end{figure}

\begin{figure}[htbp]\centering
  \includegraphics[width=0.95\linewidth]{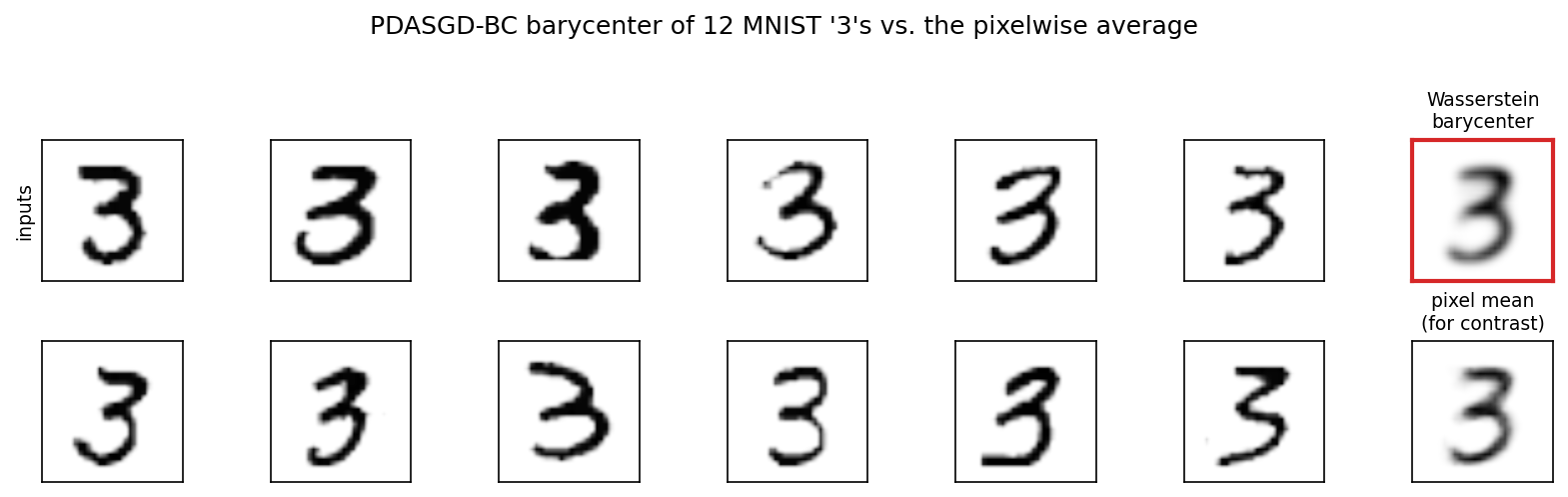}
  \caption{MNIST digit barycenter. The two left rows show $12$ handwritten instances of digit $3$. The
  right column compares the entropic Wasserstein barycenter, boxed in red, with the Euclidean
  pixelwise mean.}\label{fig:bc-mnist}
\end{figure}

We next test a larger image barycenter on DOTmark. Figure~\ref{fig:bc-dotmark} uses six images from
the \texttt{Shapes} class at $64\times64$ resolution, so $n=4096$. PDASGD-BC reaches a
column-marginal residual of $1.6\times10^{-5}$ and differs from IBP by
$\|\cdot\|_1\approx6.7\times10^{-4}$. In the recorded run, its wall-clock time is $93$ minutes,
compared with $206$ minutes for IBP.

\begin{figure}[htbp]\centering
  \includegraphics[width=\linewidth]{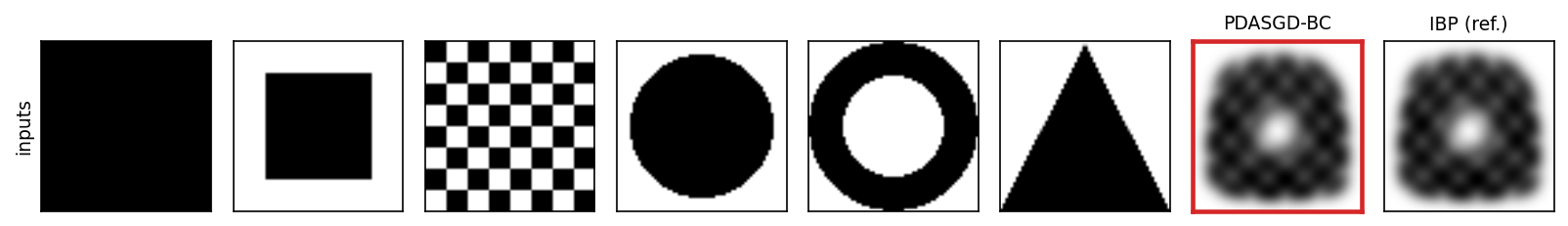}
  \caption{DOTmark barycenter at $64\times64$, with $n=4096$. The figure shows the six inputs, the
  PDASGD-BC barycenter in the red box, and the IBP reference. The two barycenters are visually
  indistinguishable, with $\|\cdot\|_1\approx6.7\times10^{-4}$.}\label{fig:bc-dotmark}
\end{figure}

Finally, Figures~\ref{fig:bc-scaling-pace} and \ref{fig:bc-scaling} examine the support-size scaling
suggested by Theorem~\ref{bc:main}, using a common residual tolerance as the stopping rule. In the
larger scaling run with six marginals, $\eta=0.02$, three seeds, and
$n\in\{60,120,240,480,720,960\}$, the fitted log-log slope of total arithmetic versus $n$ is $1.98$
for PDASGD-BC and $2.60$ for APGD-BC. These slopes are consistent with the predicted $n^2$ behavior of
PDASGD-BC and the approximately $\sqrt n$ separation from deterministic accelerated gradient. At
$n=960$, PDASGD-BC uses about $130$ full-gradient-equivalent passes, compared with about $27{,}000$
for APGD-BC. The smaller multi-seed run in Figure~\ref{fig:bc-scaling}(a) gives the same slopes
within sampling error. The residual-tolerance sweep in Figure~\ref{fig:bc-scaling}(b) has slope
$0.43$ against the inverse tolerance, close to the accelerated square-root dependence.

\begin{figure}[htbp]\centering
  \includegraphics[width=0.62\linewidth]{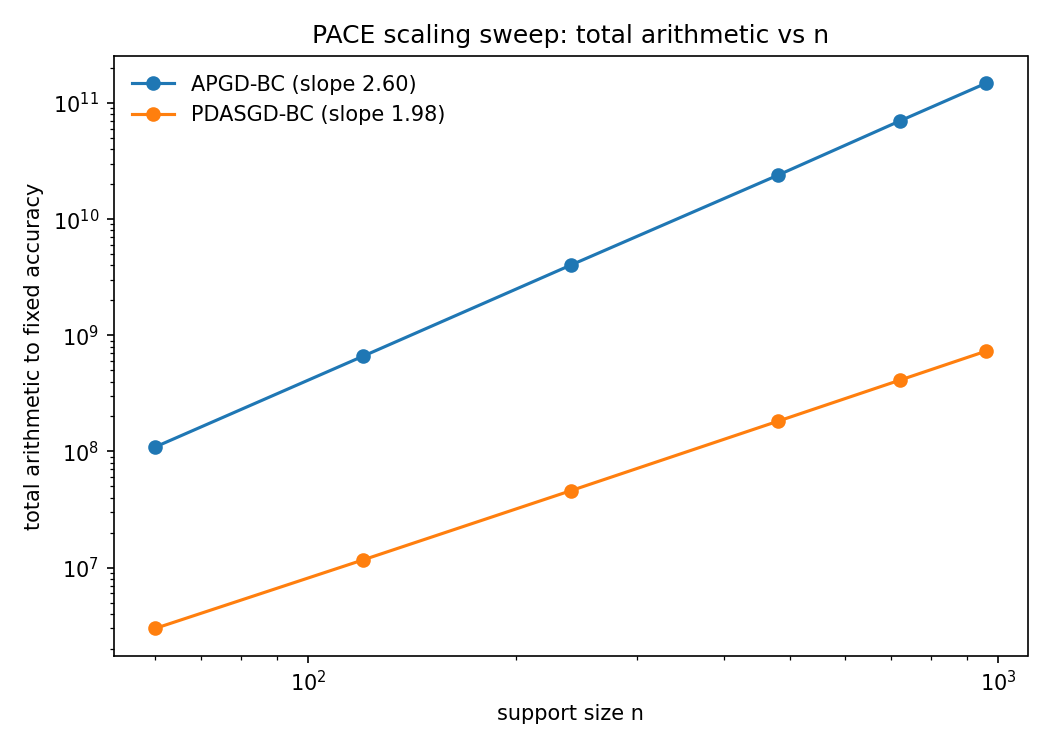}
  \caption{At-scale arithmetic to a fixed residual tolerance versus support size $n$. The fitted
  slopes are $1.98$ for PDASGD-BC and $2.60$ for APGD-BC, consistent with the $\sqrt n$ separation in
  Theorem~\ref{bc:main}.}\label{fig:bc-scaling-pace}
\end{figure}
\begin{figure}[htbp]\centering
  \includegraphics[width=0.92\linewidth]{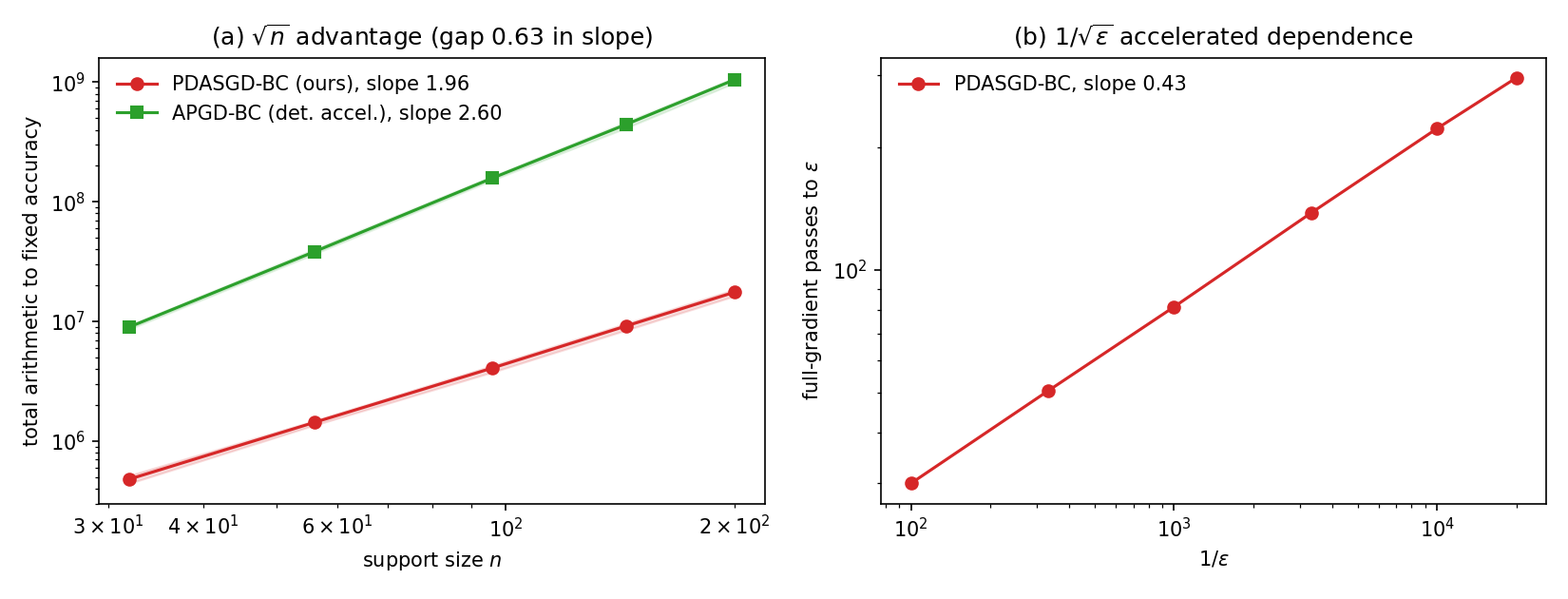}
  \caption{Multi-seed scaling and residual-tolerance dependence. Panel a reports arithmetic versus
  $n$ with seed-range bands. Panel b reports full-gradient-equivalent passes versus
  the inverse residual tolerance.}\label{fig:bc-scaling}
\end{figure}

\section{Discussion}\label{dis}
This paper develops PDASGD-BC, an accelerated stochastic variance-reduced algorithm for the
fixed-support entropic Wasserstein barycenter. The method uses a semi-dual with a linear
common-marginal constraint to compute a barycenter primal-dual output with expected entropic objective
residual at most $\epsilon$ in
$\widetilde{\mathcal O}_\eta\!\left(mn^2/\sqrt{\epsilon}\right)$ arithmetic operations, a
factor-$\sqrt n$
improvement over deterministic accelerated first-order methods in the same fixed-regularization regime.
The analysis also provides dimension-free smoothness bounds and a feasibility guarantee for the
recovered primal transport plans through an expected column-marginal residual bound. Experiments on
synthetic and image-based aggregation instances are consistent with the theoretical support-size and
accuracy behavior and include barycenter problems with many transport subproblems.

Fixed-$\eta$ entropic barycenters are widely used as regularized computational models. In imaging
and distributional aggregation, the regularization level is often chosen to control smoothness,
stability, and computational cost, and the resulting smoothed barycenter is the computed output
\citep{cuturi2013sinkhorn,cuturi2014fast,feydy2019interpolating}. The complexity bounds in this paper quantify the cost of solving this regularized model directly, while the
unregularized limit involves an additional regularization-tuning and rounding layer.

The fixed-regularization qualification is therefore substantive. The analysis exploits smoothness and finite-sum
separability; it does not rely on strong convexity of the semi-dual. If $\mu_\phi$ denotes the restricted
strong-convexity modulus of the entropic semi-dual, then in small-regularization regimes the softmax
weights can concentrate and $\mu_\phi$ can be exponentially small in $\|C\|_\infty/\eta$. Consequently,
arguments that convert Bregman divergence into squared Euclidean distance through $\mu_\phi$ would
carry exponentially poor constants. This explains why the paper separates fixed-$\eta$ entropic
complexity from unregularized complexity, and why Theorem~\ref{bc:main} does not claim a linear rate
or a rounded unregularized guarantee for the stochastic output. The improvement in
Theorem~\ref{bc:main} comes from the finite-sum components in \eqref{bc:phi} and their low evaluation
cost, not from a strongly convex reformulation in the number of input distributions.

Several directions remain. First, a variance bound on the primal iterate would enable a rigorous
rounding-based extension to the unregularized barycenter and to unregularized OT via
\cite{altschuler2017near}; see Remark~\ref{rem:unreg}. Second, a pathwise or high-probability theory
for stochastic primal-dual OT would require a mechanism that avoids relying on the poorly conditioned
semi-dual curvature. Third, extending the semi-dual construction to other structured multimarginal OT
problems, especially unbalanced and partial barycenters, is another direction.

\clearpage
\appendix

\section{Proofs for the General Results}\label{proofs}\label{prooftheorem2.5}
The proof uses the following auxiliary estimate, whose proof is given in Section~\ref{proofoffirstlemma}.
\subsection{Main Auxiliary Estimate}
\begin{lemma}\label{firstlemma}
    \begin{equation}
    \label{firstlemmacontent}
    \begin{aligned}
   &\frac{\tau_2M}{\tau_{1,S}^2}\left(\mathbb{E}[\phi(\widetilde{\lambda}^{S})]-\phi(\lambda^{\ast})\right)+\left(M\sum_{s=0}^{S-1}\frac{1}{\tau_{1,s}}\right) \left(\mathbb{E}[f(x^S)]-f(x(\lambda^{\ast}))\right)\\&\leq\left(M\sum_{s=0}^{S-1}\frac{1}{\tau_{1,s}}\right)\left\langle A\mathbb{E}[x^{S}]-b,\lambda\right\rangle +\frac{\tau_{2}M}{\tau_{1,0}^2}\left(\phi(\widetilde{\lambda}^0)-\phi(\lambda^{\ast})\right)\\
   &+\frac{1-\tau_{1,0}-\tau_2}{\tau_{1,0}^2} \left(\phi(y_{0})-\phi(\lambda^{\ast})\right)+9\overline{L}\left\Vert z_{0}-\lambda\right\Vert^2_2.
   \end{aligned}
\end{equation}
for all $\lambda\in\mathbb{R}^d$.
\end{lemma}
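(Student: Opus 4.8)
The plan is to adapt the accelerated variance-reduced (Katyusha-type) analysis of \cite{allen2017katyusha} to the primal--dual template of Section~\ref{general}, so that the dual inequality it produces can be re-read as a statement about the primal iterates $x(\lambda_k)$ and the residual $Ax-b$ via the identities in Assumption~\ref{assume} / Proposition~\ref{gradientequal}. First I would record the two basic properties of the variance-reduced gradient $\widetilde{\nabla}_{k+1}$ of Step~9. Because $p_i=L_i/(h\overline L)$ and $\phi=\frac1h\sum_i\phi_i$, conditioning on the history gives unbiasedness, $\mathbb{E}_i[\widetilde{\nabla}_{k+1}]=\nabla\phi(\lambda_{k+1})$, and the Katyusha variance bound
\[
\mathbb{E}_i\bigl\|\widetilde{\nabla}_{k+1}-\nabla\phi(\lambda_{k+1})\bigr\|_2^2\le 2\overline L\bigl(\phi(\widetilde{\lambda}^{s})-\phi(\lambda_{k+1})-\langle\nabla\phi(\lambda_{k+1}),\widetilde{\lambda}^{s}-\lambda_{k+1}\rangle\bigr),
\]
which follows from $\mathbb{E}\|Z-\mathbb{E}Z\|^2\le\mathbb{E}\|Z\|^2$ together with the convexity and $L_i$-smoothness of each $\phi_i$ (i.e.\ $\|\nabla\phi_i(u)-\nabla\phi_i(w)\|_2^2\le 2L_i(\phi_i(u)-\phi_i(w)-\langle\nabla\phi_i(w),u-w\rangle)$) and the specific choice of $p_i$.

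Next I would derive the one-iteration inequality. Applying the $\overline L$-smoothness of $\phi$ to the gradient step $y_{k+1}=\lambda_{k+1}-\widetilde{\nabla}_{k+1}/(9\overline L)$ (Step~11), splitting $\langle\nabla\phi(\lambda_{k+1})-\widetilde{\nabla}_{k+1},y_{k+1}-\lambda_{k+1}\rangle$ by Young's inequality, and inserting the variance bound yields, after $\mathbb{E}_i$, a descent estimate of the form $\mathbb{E}_i[\phi(y_{k+1})]\le\phi(\lambda_{k+1})-c_1\overline L^{-1}\mathbb{E}_i\|\widetilde{\nabla}_{k+1}\|_2^2+c_2(\phi(\widetilde{\lambda}^s)-\phi(\lambda_{k+1})-\langle\nabla\phi(\lambda_{k+1}),\widetilde{\lambda}^s-\lambda_{k+1}\rangle)$ with explicit absolute constants $c_1,c_2$ fixed by the step sizes $1/(9\overline L)$ and $\gamma_s=1/(9\tau_{1,s}\overline L)$. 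In parallel, the Euclidean prox step $z_{k+1}=z_k-\tfrac{\gamma_s}{2}\widetilde{\nabla}_{k+1}$ (Step~10) gives, for every $\lambda\in\mathbb{R}^d$,
\[
\langle\widetilde{\nabla}_{k+1},z_k-\lambda\rangle=\tfrac{1}{\gamma_s}\bigl(\|z_k-\lambda\|_2^2-\|z_{k+1}-\lambda\|_2^2+\|z_k-z_{k+1}\|_2^2\bigr),
\]
where the extra $\|z_k-z_{k+1}\|_2^2=\tfrac{\gamma_s^2}{4}\|\widetilde{\nabla}_{k+1}\|_2^2$ is absorbed by the negative $\|\widetilde{\nabla}_{k+1}\|_2^2$ term of the descent estimate (this is exactly why the step sizes carry the factor $9$). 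Finally I would use the coupling identity $\lambda_{k+1}=\tau_{1,s}z_k+\tau_2\widetilde{\lambda}^s+(1-\tau_{1,s}-\tau_2)y_k$ to split $\langle\nabla\phi(\lambda_{k+1}),\lambda_{k+1}-\lambda\rangle$ into $z_k$-, $\widetilde{\lambda}^s$- and $y_k$-pieces; convexity of $\phi$ converts the last two into $\phi(\lambda_{k+1})-\phi(\widetilde{\lambda}^s)$ and $\phi(\lambda_{k+1})-\phi(y_k)$, the $z_k$-piece is handled by the displayed prox identity, and the error $\langle\nabla\phi(\lambda_{k+1})-\widetilde{\nabla}_{k+1},z_k-\lambda\rangle$ vanishes in $\mathbb{E}_i$.

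The primal--dual bridge is the key extra ingredient: under the first bullet of Assumption~\ref{assume} one has $\nabla\phi(\lambda)=Ax(\lambda)-b$ and $\phi(\lambda)=-\langle b,\lambda\rangle-f(x(\lambda))+\langle Ax(\lambda),\lambda\rangle$, hence $f(x(\lambda_{k+1}))=-\phi(\lambda_{k+1})+\langle\nabla\phi(\lambda_{k+1}),\lambda_{k+1}\rangle$, and since $\lambda^\ast$ minimizes the smooth convex $\phi$ we have $\nabla\phi(\lambda^\ast)=0$ and $\phi(\lambda^\ast)=-f(x(\lambda^\ast))$. Substituting these into the one-iteration inequality turns every dual-gradient inner product into a primal objective value plus the linear term $\langle Ax(\lambda_{k+1})-b,\lambda\rangle$, giving a per-step bound of the form $w_k\bigl(f(x(\lambda_{k+1}))-f(x(\lambda^\ast))-\langle Ax(\lambda_{k+1})-b,\lambda\rangle\bigr)+\Phi_{k+1}\le\Phi_k$, where $\Phi_k$ is a Lyapunov functional built from the $\phi$-potential and the $z$-distance. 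I would then take $w_k=1/\tau_{1,s}$ at $k=sm+j$, sum over $j=0,\dots,m-1$ and over $s=0,\dots,S-1$. Two telescopings occur: the distances $\tfrac1{\gamma_s}\|z_k-\lambda\|_2^2$ collapse over the whole run to leave only $9\overline L\|z_0-\lambda\|_2^2$ (using $\gamma_s=1/(9\tau_{1,s}\overline L)$), and the weights $\tau_{1,s}=2/(s+4)$, $\tau_2=1/2$ are chosen so that $\tfrac{1-\tau_{1,s+1}-\tau_2}{\tau_{1,s+1}^2}\le\tfrac{\tau_2}{\tau_{1,s}^2}$, which lets the reference-point terms telescope across outer iterations once we use $\widetilde{\lambda}^{s+1}=\tfrac1m\sum_j y_{sm+j}$ together with convexity, $\phi(\widetilde{\lambda}^{s+1})\le\tfrac1m\sum_j\phi(y_{sm+j})$; the "variance debt" $\phi(\widetilde{\lambda}^s)-\phi(\lambda_{k+1})$ from the estimator bound is precisely what the $\tau_2\widetilde{\lambda}^s$ momentum term pays off, so no residual variance survives. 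On the primal side, $x^S=D_S/C_S=\bigl(\sum_s x(\widehat{\lambda}_s)/\tau_{1,s}\bigr)\big/\bigl(\sum_s 1/\tau_{1,s}\bigr)$; taking expectation over the uniform draw of $\widehat{\lambda}_s$ replaces $x(\widehat{\lambda}_s)$ by $\tfrac1m\sum_j x(\lambda_{sm+j})$, and Jensen's inequality for the convex $f$ gives $f(\mathbb{E}[x^S])\le\tfrac{1}{\sum_s 1/\tau_{1,s}}\sum_{s,j}\tfrac1{m\tau_{1,s}}f(x(\lambda_{sm+j}))$, while linearity converts the residual average into $\langle A\mathbb{E}[x^S]-b,\lambda\rangle$. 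Collecting the surviving boundary terms produces exactly the right-hand side of \eqref{firstlemmacontent} and the left-hand side retains $\tfrac{\tau_2 m}{\tau_{1,S}^2}(\phi(\widetilde{\lambda}^S)-\phi(\lambda^\ast))$.

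The main obstacle I anticipate is the bookkeeping that makes the inner-loop telescoping (over $j$, driven by the $z$- and $y$-updates) mesh cleanly with the outer-loop telescoping (over $s$, driven by the Katyusha weights and the restart-free reference point $\widetilde{\lambda}^{s+1}$), while simultaneously (i) absorbing the stochastic variance into the $\tau_2$-momentum term, (ii) keeping the negative $\|\widetilde{\nabla}_{k+1}\|_2^2$ term nonpositive after combining the descent and prox estimates, and (iii) keeping the arbitrary multiplier $\lambda$ attached only to the $z$-iterates and to the linear residual, so that after averaging it appears solely as $9\overline L\|z_0-\lambda\|_2^2$ and $\bigl(m\sum_s 1/\tau_{1,s}\bigr)\langle A\mathbb{E}[x^S]-b,\lambda\rangle$. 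Checking that the specific constants ($9$, $\tau_2=1/2$, $\gamma_s=1/(9\tau_{1,s}\overline L)$, $\tau_{1,s}=2/(s+4)$) make every one of these inequalities close is delicate but routine; the rest is standard convexity/smoothness manipulation.
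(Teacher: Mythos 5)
Your proposal is correct and follows essentially the same route as the paper: a Katyusha-type one-iteration inequality (coupling step plus convexity, variance-reduced gradient bound absorbed by the $\tau_2$-momentum term, Euclidean prox identity for the $z$-update), followed by the primal--dual substitution $\phi(\lambda_{k+1})+\langle\nabla\phi(\lambda_{k+1}),\lambda-\lambda_{k+1}\rangle=-f(x(\lambda_{k+1}))+\langle Ax(\lambda_{k+1})-b,\lambda\rangle$, double telescoping over the inner and outer loops using the weight inequalities for $\tau_{1,s}=2/(s+4)$, $\tau_2=1/2$, and Jensen's inequality together with the uniform draw of $\widehat{\lambda}_s$ to pass from the ancillary average of $x(\lambda_{sm+j})$ to $\mathbb{E}[x^S]$. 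The only cosmetic difference is that you re-derive the inner variance-reduction estimates from scratch, whereas the paper imports them directly as Lemma E.4 of \cite{allen2017katyusha}.
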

\subsection{Proof of Theorem~\ref{generaltheorem}}
\label{proofoftheorem3}
\begin{proof}
We prove the two bounds separately. Use the initialization $z_0=0$, $y_0=\widetilde{\lambda}^0=0$ and
$\tau_{1,0}=\tau_2=1/2$ (so $1-\tau_{1,0}-\tau_2=0$ and the $\phi(y_0)$ term of
\eqref{firstlemmacontent} vanishes), and abbreviate
\[
c_\phi:=\frac{\tau_2 M}{\tau_{1,S}^2},\qquad \Delta:=M\sum_{s=0}^{S-1}\frac{1}{\tau_{1,s}},\qquad
r:=A\,\mathbb{E}[x^S]-b,\qquad B_0:=\phi(0)-\phi(\lambda^{\ast}),\qquad f^{\ast}:=f(x(\lambda^{\ast})).
\]
Lemma~\ref{firstlemma} then states that, for \emph{every} $\lambda\in\mathbb{R}^d$,
\begin{equation}\label{master}
c_\phi\big(\mathbb{E}[\phi(\widetilde{\lambda}^{S})]-\phi(\lambda^{\ast})\big)
+\Delta\big(\mathbb{E}[f(x^S)]-f^{\ast}\big)
\ \le\ \Delta\langle r,\lambda\rangle+2M B_0+9\overline{L}\|\lambda\|_2^2 .
\end{equation}
We record three facts. First, by Jensen's inequality and optimality of $\lambda^{\ast}$,
$\mathbb{E}[\phi(\widetilde{\lambda}^{S})]\ge\phi(\mathbb{E}[\widetilde{\lambda}^{S}])\ge\phi(\lambda^{\ast})$,
so the first term on the left of \eqref{master} is nonnegative. Second, since $\nabla\phi(\lambda^{\ast})=0$
we have $\phi(\lambda^{\ast})=-f^{\ast}$, and, using the definition
$\phi(\lambda)=-\langle b,\lambda\rangle+\max_x\{-f(x)+\langle Ax,\lambda\rangle\}$ together with Jensen's
inequality, namely $\mathbb{E}[f(x^S)]\ge f(\mathbb{E}[x^S])$,
\begin{equation}\label{couldcel1}
\mathbb{E}[f(x^S)]-f^{\ast}=\mathbb{E}[f(x^S)]+\phi(\lambda^{\ast})\ \ge\ \langle\lambda^{\ast},r\rangle
\ \ge\ -\|\lambda^{\ast}\|_\infty\|r\|_1 ,
\end{equation}
the last step by H\"older's inequality. Third, for $S\ge2$,
\begin{equation}\label{couldcel2}
\sum_{s=0}^{S-1}\frac{1}{\tau_{1,s}}=\frac{S^2+7S}{4}\ \ge\ \frac{S^2+8S+16}{8}=\frac{\tau_2}{\tau_{1,S}^2},
\end{equation}
so $\Delta\ge c_\phi$, $\Delta\ge MS^2/4$, and $c_\phi=(M(S+4)^2)/8\ge MS^2/8$.

\medskip\noindent\textbf{Step 1. The objective error $\mathbb{E}[f(x^S)]-f(x^{\ast})$.}
Put $\lambda=\lambda^{\ast}$ in \eqref{master}. By \eqref{couldcel1},
$\langle r,\lambda^{\ast}\rangle\le\mathbb{E}[f(x^S)]-f^{\ast}$, so $\Delta\langle r,\lambda^{\ast}\rangle$ on
the right cancels the $f$-term on the left, leaving the dual-gap bound
\begin{equation}\label{dualgap}
c_\phi\big(\mathbb{E}[\phi(\widetilde{\lambda}^{S})]-\phi(\lambda^{\ast})\big)\ \le\ 2M B_0+9\overline{L}\|\lambda^{\ast}\|_2^2 .
\end{equation}
Putting instead $\lambda=0$ in \eqref{master} and discarding the nonnegative $\phi$-term,
\begin{equation}\label{primalup}
\Delta\big(\mathbb{E}[f(x^S)]-f^{\ast}\big)\ \le\ 2M B_0 .
\end{equation}
Dividing \eqref{dualgap} by $c_\phi$ and \eqref{primalup} by $\Delta$ and adding, with $\Delta\ge c_\phi$
(so $1/\Delta\le 1/(c_\phi)$) and $c_\phi\ge MS^2/8$,
\[
\big(\mathbb{E}[\phi(\widetilde{\lambda}^{S})]-\phi(\lambda^{\ast})\big)+\big(\mathbb{E}[f(x^S)]-f^{\ast}\big)
\ \le\ \frac{4M B_0+9\overline{L}\|\lambda^{\ast}\|_2^2}{c_\phi}
\ \le\ \frac{32 B_0}{S^2}+\frac{72\,\overline{L}\|\lambda^{\ast}\|_2^2}{MS^2}.
\]
Because $\mathbb{E}[\phi(\widetilde{\lambda}^{S})]\ge\phi(\lambda^{\ast})=-f^{\ast}$, the left side is at
least $\mathbb{E}[f(x^S)]-f^{\ast}$; hence
\[
\mathbb{E}[f(x^S)]-f(x^{\ast})=\mathcal{O}\!\left(\frac{\phi(0)-\phi(\lambda^{\ast})}{S^2}
+\frac{\overline{L}\|\lambda^{\ast}\|_2^2}{MS^2}\right).
\]

\medskip\noindent\textbf{Step 2. The constraint violation $\|\mathbb{E}[Ax^S-b]\|_1$.}
Fix any radius $R>0$ and take the \emph{norm-controlled} adversarial direction
$\lambda=-2R\,\mathrm{sign}(r)$, for which $\langle r,\lambda\rangle=-2R\|r\|_1$ and
$\|\lambda\|_2^2=4R^2\,|\mathrm{supp}(r)|\le 4dR^2$. The scaling keeps the test vector within the
$\ell_\infty$-radius $2R$, which controls the term $9\overline{L}\|\lambda\|_2^2$.
Using \eqref{couldcel1} and the nonnegativity of the dual-gap term, the left side of \eqref{master}
is at least $-\Delta\|\lambda^{\ast}\|_\infty\|r\|_1$. Hence
\[
-\Delta\|\lambda^{\ast}\|_\infty\|r\|_1\ \le\ -2\Delta R\|r\|_1+2M B_0+36\,\overline{L}\,dR^2 .
\]
Choosing $R\ge\|\lambda^\ast\|_\infty$ gives
$\Delta R\|r\|_1\le 2M B_0+36\,\overline{L}\,dR^2$. Dividing by $\Delta R$ and using
$\Delta\ge MS^2/4$,
\[
\big\|\mathbb{E}[Ax^S-b]\big\|_1\ \le\ \frac{8 B_0}{S^2R}+\frac{144\,\overline{L}\,dR}{MS^2}
=\mathcal{O}\!\left(\frac{\phi(0)-\phi(\lambda^{\ast})}{S^2R}+\frac{\overline{L}\,dR}{MS^2}\right).
\]
This proves the theorem for any positive radius $R$ with $R\ge\|\lambda^\ast\|_\infty$.
\end{proof}

\subsection{Proof of Corollary~\ref{theorem1cor}}
\begin{proof}
By $L^{\prime}$-smoothness of $\phi$ with respect to $\|\cdot\|_\infty$ and
$\nabla\phi(\lambda^\ast)=0$,
\[
\phi(0)-\phi(\lambda^{\ast})
\le
\langle \nabla \phi(\lambda^{\ast}),-\lambda^{\ast}\rangle
+\frac{L^{\prime}}{2}\|\lambda^{\ast}\|^2_\infty
=\frac{L^{\prime}}{2}\|\lambda^{\ast}\|^2_\infty.
\]

Applying Theorem~\ref{generaltheorem} gives
\[
\mathbb{E}[f(x^S)] -f(x^{\ast})
=\mathcal{O}\left(
\frac{L^\prime \|\lambda^\ast\|_\infty^2}{S^2}
+ \frac{\overline{L}\|\lambda^{\ast}\|_{2}^2}{MS^2}
\right),
\]
and, for any $R\ge\|\lambda^\ast\|_\infty$,
\[
\left\Vert \mathbb{E}[ Ax^{S}-b]\right\Vert_1
=\mathcal{O}\left(
\frac{L^\prime \|\lambda^\ast\|_\infty^2}{S^2R}
+ \frac{\overline{L}\,dR}{MS^2}
\right).
\]
\end{proof}
\subsection{Proof of Corollary~\ref{coro}}\label{proof:coro}
\begin{proof}
For the PDASGD output $x^{S}$, Corollary~\ref{theorem1cor} with $M=n$ gives
\[
\mathbb{E}[f(x^S)]-f(x^{\ast})
=\mathcal{O}\left(\frac{L^{\prime}\left\Vert \lambda^{\ast}\right\Vert_{\infty}^2}{S^2}
+ \frac{\overline{L}\left\Vert \lambda^{\ast}\right\Vert_{2}^2}{nS^2}\right)
=\mathcal{O}\left(\frac{(L^{\prime}+\overline{L})\left\Vert \lambda^{\ast}\right\Vert_{\infty}^2}{S^2}\right).
\]
Since $L^{\prime}$ and $\overline{L}$ have the same order,
\[
\mathbb{E}[f(x^S)] -f(x^{\ast})
=\mathcal{O}\left(\frac{L^{\prime}\left\Vert \lambda^{\ast}\right\Vert_{\infty}^2}{S^2}\right).
\]
Since one outer loop uses $n$ component steps, the total number of PDASGD component steps is $k=nS$.
The condition $\mathbb{E}[f(x^S)] -f(x^{\ast})\leq\epsilon$ is achieved with
\[
k=\mathcal{O}\left(n\sqrt{\frac{L^\prime}{\epsilon}}\left\Vert\lambda^\ast\right\Vert_\infty\right),
\]
component steps.

For the APDAGD output $x^{k}$,
\[
f(x^{k})-f(x^{\ast})
=\mathcal{O}\left(\frac{\widetilde{L}\left\Vert \lambda^{\ast}\right\Vert_2^2}{k^2}\right)
=\mathcal{O}\left(\frac{n\widetilde{L}\left\Vert \lambda^{\ast}\right\Vert_\infty^2}{k^2}\right).
\]
Since $L^{\prime}$ and $\widetilde{L}$ have the same order,
\[
f(x^{k})-f(x^\ast)=\mathcal{O}\left( \frac{nL^\prime\left\Vert \lambda^{\ast}\right\Vert_\infty^2}{k^2}\right).
\]
The deterministic method therefore reaches $f(x^k)-f(x^{\ast})\leq\epsilon$ after
\[
k=\mathcal{O}\left(\sqrt{n}\sqrt{\frac{L^\prime}{\epsilon}}\left\Vert\lambda^\ast\right\Vert_\infty\right),
\]
full-gradient iterations.

If one component step costs $\mathcal{O}(\mathcal{K})$ and one deterministic full-gradient step costs
$\mathcal{O}(n\mathcal{K})$, the total arithmetic costs are
\[
\mathcal{O}\left(n\mathcal{K}\sqrt{\frac{L^{\prime}}{\epsilon}}\|\lambda^\ast\|_\infty\right)
\quad\text{for PDASGD},
\]
and
\[
\mathcal{O}\left(n^{3/2}\mathcal{K}\sqrt{\frac{L^{\prime}}{\epsilon}}\|\lambda^\ast\|_\infty\right)
\quad\text{for APDAGD}.
\]
\end{proof}
\subsection{Proof of Proposition~\ref{gradientequal}}
\begin{proof}
\label{gradientequalproof}
Fix $v$ and define the row softmax probabilities
\[
p_{ij}(v)=
\frac{\exp\!\left(\frac{v_j-c_{ij}-\eta}{\eta}\right)}
{\sum_{\ell=1}^{n}\exp\!\left(\frac{v_\ell-c_{i\ell}-\eta}{\eta}\right)} .
\]
Equation~\eqref{closedformofprimalandsemidual} gives
\[
x(v)_{i+n(j-1)}=\alpha_i p_{ij}(v).
\]
Since $\sum_j p_{ij}(v)=1$, the first $n$ constraints are satisfied exactly:
\[
\big[Ax(v)-b\big]_{1:n}=0.
\]
The column-marginal block is
\[
\big[Ax(v)-b\big]_{n+j}
=\sum_{i=1}^n \alpha_i p_{ij}(v)-\beta_j,\qquad j=1,\ldots,n.
\]
Differentiating $G$ gives the same expression,
\[
[\nabla G(v)]_j
=-\beta_j+\sum_{i=1}^{n}\alpha_i p_{ij}(v).
\]
Hence $\nabla G(v)=\big[Ax(v)-b\big]_{n+1:2n}$, while
$\big[Ax(v)-b\big]_{1:n}=0$.
\end{proof}
\subsection{Proof of Proposition~\ref{proposition1}}
\begin{proof}
Fix $i$ and set
\[
p_j(v)=
\frac{\exp\!\left(\frac{v_j-c_{ij}-\eta}{\eta}\right)}
{\sum_{\ell=1}^{n}\exp\!\left(\frac{v_\ell-c_{i\ell}-\eta}{\eta}\right)} ,
\qquad p(v)\in\Delta_n .
\]
Then
\[
\nabla g_i(v)=n\alpha_i\big(p(v)-\beta\big),
\qquad
\nabla^2g_i(v)=\frac{n\alpha_i}{\eta}\big(\diag(p(v))-p(v)p(v)^\top\big).
\]
For any $y\in\mathbb R^n$,
\[
y^\top\nabla^2g_i(v)y
=\frac{n\alpha_i}{\eta}\left(\sum_{j=1}^n p_j(v)y_j^2
-\left(\sum_{j=1}^n p_j(v)y_j\right)^2\right)
=\frac{n\alpha_i}{\eta}\operatorname{Var}_{p(v)}(y).
\]
This quantity is nonnegative, so $g_i$ is convex. It also satisfies
\[
y^\top\nabla^2g_i(v)y
\le
\frac{n\alpha_i}{\eta}\sum_{j=1}^n p_j(v)y_j^2
\le
\frac{n\alpha_i}{\eta}\|y\|_2^2.
\]
Therefore $\|\nabla^2g_i(v)\|_2\le n\alpha_i/\eta$, and $g_i$ is
$L_i$-smooth with $L_i=n\alpha_i/\eta$. The average component smoothness is
\[
\overline{L}
=\frac{1}{n}\sum_{i=1}^{n}L_i
=\frac{1}{n}\sum_{i=1}^{n}\frac{n\alpha_i}{\eta}
=\frac{1}{\eta}.
\]
Lemma 1 in \cite{luo2023improved} shows that $G(v)$ is $5/\eta$-smooth with respect to
$\|\cdot\|_\infty$.
\end{proof}
\subsection{Proof of Corollary~\ref{coro2}}
\begin{proof}
Let $R^\prime$ denote an upper bound on the $\ell_\infty$ norm of an optimal dual variable for the
entropic OT problem. Lemma 3.2 in \cite{lin2019efficient} gives
\[
R^{\prime}\leq \eta\left(\frac{\left\Vert C\right\Vert_{\infty}}{\eta}+\ln n
-2\ln\min\left\{\min_i\alpha_i,\min_j\beta_j\right\}+\frac{1}{2}\right).
\]
Under the polynomial mass lower bounds in
Corollary~\ref{coro2}, $R'=\widetilde{\mathcal O}_\eta(1+\|C\|_\infty)$.

Since the semi-dual of entropic OT is $5/\eta$-smooth w.r.t. $\left\Vert \cdot\right\Vert_\infty$,
Corollary~\ref{coro} gives
\[
\mathcal{O}\left(n^{2.5}\sqrt{\frac{5}{\eta \epsilon}} R^\prime\right)
=\widetilde{\mathcal{O}}_\eta\left(\frac{n^{2.5}(1+\|C\|_\infty)}{\sqrt{\epsilon}}\right)
\]
for APDAGD, and
\[
\mathcal{O}\left(n^{2}\sqrt{\frac{5}{\eta \epsilon}} R^\prime\right)
=\widetilde{\mathcal{O}}_\eta\left(\frac{n^{2}(1+\|C\|_\infty)}{\sqrt{\epsilon}}\right)
\]
for PDASGD.
\end{proof}
\subsection{Proof of Proposition~\ref{rem1}}\label{app:proof-rem1}
\begin{proof}
Let
\[
\phi(\lambda)=\max_{x\in Q}\{-f(x)+\langle Ax,\lambda\rangle\}-\langle b,\lambda\rangle .
\]
Since $f$ is $\sigma$-strongly convex on the convex set $Q$, the maximizer
$x(\lambda)$ is unique for every $\lambda$ for which the maximum is finite. Danskin's theorem therefore
applies and gives
\[
\nabla\phi(\lambda)=Ax(\lambda)-b.
\]
The function $\phi$ is also the pointwise supremum of affine functions of $\lambda$ and is therefore convex.
This proves part (i).

For part (ii), let $x(\lambda)$ and $x(\lambda')$ be the two primal responses. The first-order
optimality conditions for the two strongly concave maximization problems imply, for all feasible
directions in $Q$,
\[
\langle \nabla f(x(\lambda))-A^\top\lambda,x(\lambda')-x(\lambda)\rangle\ge0,\qquad
\langle \nabla f(x(\lambda'))-A^\top\lambda',x(\lambda)-x(\lambda')\rangle\ge0.
\]
Adding the two inequalities yields
\[
\langle \nabla f(x(\lambda))-\nabla f(x(\lambda')),x(\lambda)-x(\lambda')\rangle
\le
\langle A^\top(\lambda-\lambda'),x(\lambda)-x(\lambda')\rangle .
\]
Strong convexity of $f$ gives
\[
\sigma\|x(\lambda)-x(\lambda')\|_E^2
\le
\langle \lambda-\lambda',A(x(\lambda)-x(\lambda'))\rangle
\le
\|\lambda-\lambda'\|_2\,\|A(x(\lambda)-x(\lambda'))\|_2 .
\]
By definition of the operator norm $\|A\|_{E,2}$,
\[
\|A(x(\lambda)-x(\lambda'))\|_2\le \|A\|_{E,2}\|x(\lambda)-x(\lambda')\|_E,
\]
and hence
\[
\|x(\lambda)-x(\lambda')\|_E\le \frac{\|A\|_{E,2}}{\sigma}\|\lambda-\lambda'\|_2 .
\]
Using the gradient identity from part (i),
\[
\|\nabla\phi(\lambda)-\nabla\phi(\lambda')\|_2
=\|A(x(\lambda)-x(\lambda'))\|_2
\le \frac{\|A\|_{E,2}^2}{\sigma}\|\lambda-\lambda'\|_2 .
\]
Thus $\phi$ is $L$-smooth with $L\le \|A\|_{E,2}^2/\sigma$.

For part (iii), assume $\phi=h^{-1}\sum_{i=1}^h\phi_i$ and each $\phi_i$ is convex and differentiable. The smoothness of
$\phi$ gives, for all $x,y$,
\[
\phi(y)-\phi(x)-\langle\nabla\phi(x),y-x\rangle\le \frac{L}{2}\|y-x\|_2^2.
\]
Multiplying by $h$ and using the finite-sum representation,
\[
\sum_{i=1}^h
\Big(\phi_i(y)-\phi_i(x)-\langle\nabla\phi_i(x),y-x\rangle\Big)
\le \frac{hL}{2}\|y-x\|_2^2 .
\]
Convexity of each $\phi_i$ makes every summand nonnegative. Therefore, for every fixed $i$,
\[
\phi_i(y)-\phi_i(x)-\langle\nabla\phi_i(x),y-x\rangle
\le \frac{hL}{2}\|y-x\|_2^2,
\]
which is exactly $hL$-smoothness of $\phi_i$ with respect to $\|\cdot\|_2$. Since
$L\le\|A\|_{E,2}^2/\sigma$, the average component smoothness satisfies
$\overline L=h^{-1}\sum_i hL\le h\|A\|_{E,2}^2/\sigma$.
\end{proof}
\section{Proof of the Main Auxiliary Estimate}\label{proofoffirstlemma}

For any integer $k \ge 0$, let
$\mathcal{F}_k = \sigma(y_0,z_0,\lambda_0,\ldots,y_k,z_k,\lambda_k)$ be the natural filtration generated
by the first $k$ iterates of PDASGD, and write
$\mathbb{E}[\cdot \mid k] := \mathbb{E}[\cdot \mid \mathcal{F}_k]$. Conditional inequalities below are
understood to hold almost surely with respect to $\mathcal F_k$.

Recall that PDASGD uses $\tau_{1,s} = 2/(s+4)$, $\tau_2 = 1/2$, and $\gamma_s = 1/(9\tau_{1,s}\overline{L})$. The Euclidean prox step for $z$ uses step length $\alpha_s=\gamma_s/2$. Hence
\[
\tau_{1,s}\ =\ \frac{1}{9\gamma_s\overline{L}}\ =\ \frac{1}{18\alpha_s\overline{L}}
\ \le\ \frac{1}{9\alpha_s\overline{L}},
\]
which is the step-size condition required by Lemma~E.4 of \cite{allen2017katyusha}. In that result,
the prox step length is denoted by $\alpha$; in PDASGD it is $\alpha_s=\gamma_s/2$. Lemma~E.2 of
\cite{allen2017katyusha} supplies the variance control for the same non-uniform sampling
$p_i = L_i/(h\overline{L})$.

\subsection{One-Step Estimates}
We use two one-step estimates before summing over the inner and outer loops.

Fix an iteration $k$. Conditional on $\mathcal F_k$, the variables $y_k$, $z_k$, and $\lambda_{k+1}$
are fixed, and the only randomness is the sampled component index $i$. The iteration is
   \begin{equation}
   \label{generalstep1}
    \lambda_{k+1}=\tau_{1}z_k+\tau_2\widetilde{\lambda}+(1-\tau_{1}-\tau_2)y_k,
   \end{equation}
   \begin{equation*}
   \widetilde{\nabla}_{k+1}=u+\frac{\nabla \phi_{i}(\lambda_{k+1})-\nabla \phi_{i}(\widetilde{\lambda})}{hp_i},
   \end{equation*}
   \begin{equation*}
    z_{k+1}= z_{k}-\frac{\gamma\widetilde{\nabla}_{k+1}}{2},
    \end{equation*}
    \begin{equation*}
    y_{k+1}=\lambda_{k+1}-\frac{\widetilde{\nabla}_{k+1}}{9\overline{L}}.   
   \end{equation*}

\begin{lemma}
  \label{generallemma1}
\begin{equation*}
   \begin{aligned}
   &\gamma\langle\nabla \phi(\lambda_{k+1}),z_k-\lambda\rangle\\
   &\leq \frac{\gamma}{\tau_1} \left( \phi(\lambda_{k+1})-\mathbb{E}\left[\phi(y_{k+1})\big\vert k\right] +\tau_2 \phi(\widetilde{\lambda})-\tau_2 \phi(\lambda_{k+1})-\tau_2\langle \nabla \phi(\lambda_{k+1}),\widetilde{\lambda}-\lambda_{k+1}\rangle \right)
   \\&+\left\Vert z_k-\lambda\right\Vert^2_2-\mathbb{E}\left[\left\Vert z_{k+1}-\lambda\right\Vert^2_2\big\vert k\right],
   \end{aligned}
   \end{equation*}
holds for all $\lambda\in\mathbb{R}^d$.
   \end{lemma}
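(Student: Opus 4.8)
The plan is to adapt the variance-reduced accelerated analysis of \cite{allen2017katyusha} to the single-iteration notation introduced around \eqref{generalstep1} and the accompanying updates, working conditionally on $y_k,z_k,\lambda_{k+1},\widetilde\lambda$ so that the only randomness is the index $i$ drawn with $p_i=L_i/(h\overline L)$, and recalling $u=\nabla\phi(\widetilde\lambda)$. I would first isolate two elementary facts about the estimator $\widetilde\nabla_{k+1}=u+(\nabla\phi_i(\lambda_{k+1})-\nabla\phi_i(\widetilde\lambda))/(hp_i)$: (i) unbiasedness, $\mathbb{E}[\widetilde\nabla_{k+1}]=\nabla\phi(\lambda_{k+1})$, which is immediate from $\sum_i p_i(\nabla\phi_i(\lambda_{k+1})-\nabla\phi_i(\widetilde\lambda))/(hp_i)=\nabla\phi(\lambda_{k+1})-\nabla\phi(\widetilde\lambda)$ together with $u=\nabla\phi(\widetilde\lambda)$; and (ii) the SVRG-type second-moment bound $\mathbb{E}\|\widetilde\nabla_{k+1}-\nabla\phi(\lambda_{k+1})\|_2^2\le 2\overline L\big(\phi(\widetilde\lambda)-\phi(\lambda_{k+1})-\langle\nabla\phi(\lambda_{k+1}),\widetilde\lambda-\lambda_{k+1}\rangle\big)$, which follows from convexity and $L_i$-smoothness of each $\phi_i$ (Assumption \ref{assume}) together with the sampling weights $p_i\propto L_i$. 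Both are standard and would be quoted from \cite{allen2017katyusha} or verified in a line or two.

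Next I would expand the squared Euclidean distance along the $z$-update $z_{k+1}=z_k-\tfrac{\gamma}{2}\widetilde\nabla_{k+1}$, which gives for every $\lambda\in\mathbb{R}^d$ the identity $\gamma\langle\widetilde\nabla_{k+1},z_k-\lambda\rangle=\|z_k-\lambda\|_2^2-\|z_{k+1}-\lambda\|_2^2+\tfrac{\gamma^2}{4}\|\widetilde\nabla_{k+1}\|_2^2$. Taking $\mathbb{E}$ and using (i), the left-hand side becomes $\gamma\langle\nabla\phi(\lambda_{k+1}),z_k-\lambda\rangle$, so the lemma reduces to the one-sided estimate $\tfrac{\gamma^2}{4}\mathbb{E}\|\widetilde\nabla_{k+1}\|_2^2\le\tfrac{\gamma}{\tau_1}\big(\phi(\lambda_{k+1})-\mathbb{E}[\phi(y_{k+1})]+\tau_2(\phi(\widetilde\lambda)-\phi(\lambda_{k+1})-\langle\nabla\phi(\lambda_{k+1}),\widetilde\lambda-\lambda_{k+1}\rangle)\big)$. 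To obtain it, I would apply $\overline L$-smoothness of $\phi$ (which holds since $\phi=\tfrac1h\sum\phi_i$ with each $\phi_i$ being $L_i$-smooth) to the gradient step $y_{k+1}=\lambda_{k+1}-\tfrac{1}{9\overline L}\widetilde\nabla_{k+1}$, getting $\phi(y_{k+1})\le\phi(\lambda_{k+1})-\tfrac{1}{9\overline L}\langle\nabla\phi(\lambda_{k+1}),\widetilde\nabla_{k+1}\rangle+\tfrac{1}{162\overline L}\|\widetilde\nabla_{k+1}\|_2^2$; then take $\mathbb{E}$, use (i) so that $\mathbb{E}\langle\nabla\phi(\lambda_{k+1}),\widetilde\nabla_{k+1}\rangle=\|\nabla\phi(\lambda_{k+1})\|_2^2$, use the bias--variance split $\mathbb{E}\|\widetilde\nabla_{k+1}\|_2^2=\|\nabla\phi(\lambda_{k+1})\|_2^2+\mathbb{E}\|\widetilde\nabla_{k+1}-\nabla\phi(\lambda_{k+1})\|_2^2$, and absorb the variance term using (ii). The constants are precisely calibrated for this: the $\tfrac19$ in the $y$-step, the factor $\tfrac12$ on $\widetilde\nabla_{k+1}$ in the $z$-step, $\tau_2=\tfrac12$, and $\gamma=1/(9\tau_1\overline L)$ (so that $\gamma\tau_1/4=1/(36\overline L)$) are chosen so that, after collecting terms, the residual multiples of $\|\nabla\phi(\lambda_{k+1})\|_2^2$ and of the Bregman gap $\phi(\widetilde\lambda)-\phi(\lambda_{k+1})-\langle\nabla\phi(\lambda_{k+1}),\widetilde\lambda-\lambda_{k+1}\rangle\ge 0$ remain nonnegative. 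Substituting this estimate into the $z$-step identity then yields the claim.

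The main obstacle is exactly this second-moment estimate: structurally it is routine (smoothness along the $y$-step, the bias--variance decomposition, and the variance-reduction bound), but making the numerical constants close requires the precise interplay of the two step sizes $\gamma$ and $1/(9\overline L)$, the weight $\tfrac12$ in the $z$-update, and the value $\tau_2=\tfrac12$; it is this bookkeeping, not any conceptual difficulty, that is delicate. The remaining pieces---the three-point expansion of the $z$-update and plain $\overline L$-smoothness along the $y$-update---are immediate.
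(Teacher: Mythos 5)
Your proposal is correct and is in substance the same argument as the paper's: the paper proves this lemma by simply citing Lemma E.4 of \cite{allen2017katyusha} with $\psi=0$ and $V_z(u)=\Vert z-u\Vert_2^2$, and your three ingredients (unbiasedness of $\widetilde{\nabla}_{k+1}$, the importance-sampled variance bound via convexity and $L_i$-smoothness with $p_i\propto L_i$, and the three-point expansion of the $z$-step combined with $\overline{L}$-smoothness along the $y$-step) are exactly the proof of that cited lemma specialized to the Euclidean setting. The bookkeeping you flag as delicate does close: with $\gamma=1/(9\tau_1\overline{L})$ the leftover term $\tfrac{\gamma^2}{4}\mathbb{E}\Vert\widetilde{\nabla}_{k+1}\Vert_2^2=\tfrac{\gamma}{\tau_1}\cdot\tfrac{1}{36\overline{L}}\left(\Vert\nabla\phi(\lambda_{k+1})\Vert_2^2+\sigma^2\right)$ is dominated coefficientwise by $\tfrac{\gamma}{\tau_1}\left(\tfrac{17}{162\overline{L}}\Vert\nabla\phi(\lambda_{k+1})\Vert_2^2+\left(\tfrac{\tau_2}{2\overline{L}}-\tfrac{1}{162\overline{L}}\right)\sigma^2\right)$, since $\tfrac{1}{36}\leq\tfrac{17}{162}$ and $\tfrac{1}{36}\leq\tfrac{79}{324}$.
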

\begin{proof}
We apply Lemma~E.4 of \cite{allen2017katyusha} with $\psi \equiv 0$, smooth objective
$f=\phi$, and Euclidean Bregman divergence $V_{z_k}(u)=\|z_k-u\|_2^2/2$. The $z$-update in that
lemma is $z_{k+1}=z_k-\alpha\widetilde\nabla_{k+1}$, which coincides with Step~10 of PDASGD when
$\alpha=\gamma/2$. The dual norm is therefore $\|\cdot\|_2$. The step-size condition
$\tau_1\le 1/(9\alpha\overline L)$ holds because $\gamma=1/(9\tau_1\overline L)$ implies
$\tau_1=1/(18\alpha\overline L)$. Lemma~E.2 of \cite{allen2017katyusha} gives the second-moment
control for the SVRG estimator under the non-uniform sampling $p_i=L_i/(h\overline L)$. Substituting
these choices into Lemma~E.4 yields the stated inequality.
\end{proof}
  \begin{lemma}
  \label{generallemma2}
  \begin{equation*}
  \begin{aligned}
  \mathbb{E}\left[\phi(y_{k+1})\big\vert k\right]&\leq \tau_1 \left(\phi(\lambda_{k+1})+\langle \nabla \phi(\lambda_{k+1}),\lambda-\lambda_{k+1}\rangle\right)+\tau_2 \phi(\widetilde{\lambda})+(1-\tau_1-\tau_2)\phi(y_k)\\
  &+\frac{\tau_1}{\gamma}\left\Vert{z_k}- \lambda\right\Vert^2_2-\frac{\tau_1}{\gamma}\mathbb{E}\left[\left\Vert z_{k+1}-\lambda \right\Vert^2_2\big\vert k\right]
  \end{aligned}
  \end{equation*}
    holds for all $\lambda\in\mathbb{R}^d$.
  \end{lemma}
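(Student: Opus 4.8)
The plan is to derive Lemma~\ref{generallemma2} as a purely algebraic consequence of Lemma~\ref{generallemma1}, the coupling identity~\eqref{generalstep1}, and the convexity of $\phi$. Throughout one works at a fixed $k$, so that $y_k$, $z_k$, $\widetilde{\lambda}$ and $\lambda_{k+1}$ are deterministic and the only randomness is the choice of the index $i$ at step $k$; moreover every inner product appearing below involves the \emph{full} gradient $\nabla\phi(\lambda_{k+1})$, so no care about stochastic gradients is needed at this stage.

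First I would multiply Lemma~\ref{generallemma1} through by $\tau_1/\gamma$ and rearrange to isolate $\mathbb{E}[\phi(y_{k+1})]$, obtaining
\[
\mathbb{E}[\phi(y_{k+1})]\le \phi(\lambda_{k+1})-\tau_2\phi(\lambda_{k+1})+\tau_2\phi(\widetilde{\lambda})-\tau_2\langle\nabla\phi(\lambda_{k+1}),\widetilde{\lambda}-\lambda_{k+1}\rangle-\tau_1\langle\nabla\phi(\lambda_{k+1}),z_k-\lambda\rangle+\frac{\tau_1}{\gamma}\Vert z_k-\lambda\Vert_2^2-\frac{\tau_1}{\gamma}\mathbb{E}\Vert z_{k+1}-\lambda\Vert_2^2.
\]
Next I would rewrite the term $-\tau_1\langle\nabla\phi(\lambda_{k+1}),z_k-\lambda\rangle$ using~\eqref{generalstep1}. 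Since $\tau_1 z_k=\lambda_{k+1}-\tau_2\widetilde{\lambda}-(1-\tau_1-\tau_2)y_k$ and the weights sum to one, one has the exact identity $\tau_1(z_k-\lambda)=\tau_1(\lambda_{k+1}-\lambda)-\tau_2(\widetilde{\lambda}-\lambda_{k+1})-(1-\tau_1-\tau_2)(y_k-\lambda_{k+1})$, hence
\[
-\tau_1\langle\nabla\phi(\lambda_{k+1}),z_k-\lambda\rangle=\tau_1\langle\nabla\phi(\lambda_{k+1}),\lambda-\lambda_{k+1}\rangle+\tau_2\langle\nabla\phi(\lambda_{k+1}),\widetilde{\lambda}-\lambda_{k+1}\rangle+(1-\tau_1-\tau_2)\langle\nabla\phi(\lambda_{k+1}),y_k-\lambda_{k+1}\rangle.
\]
Substituting this back, the two $\pm\tau_2\langle\nabla\phi(\lambda_{k+1}),\widetilde{\lambda}-\lambda_{k+1}\rangle$ terms cancel.

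It then remains to regroup. Write $\phi(\lambda_{k+1})-\tau_2\phi(\lambda_{k+1})=\tau_1\phi(\lambda_{k+1})+(1-\tau_1-\tau_2)\phi(\lambda_{k+1})$ and combine the last term with $(1-\tau_1-\tau_2)\langle\nabla\phi(\lambda_{k+1}),y_k-\lambda_{k+1}\rangle$. With the parameter choices of Algorithm~\ref{PDASGD}, namely $\tau_2=1/2$ and $\tau_1=\tau_{1,s}=2/(s+4)\le 1/2$, the weight $1-\tau_1-\tau_2$ is nonnegative, so the convexity of $\phi$ (valid under Assumption~\ref{assume}, since each $\phi_i$ is convex) gives $\phi(\lambda_{k+1})+\langle\nabla\phi(\lambda_{k+1}),y_k-\lambda_{k+1}\rangle\le\phi(y_k)$, and therefore $(1-\tau_1-\tau_2)\bigl[\phi(\lambda_{k+1})+\langle\nabla\phi(\lambda_{k+1}),y_k-\lambda_{k+1}\rangle\bigr]\le(1-\tau_1-\tau_2)\phi(y_k)$. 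Collecting $\tau_1\phi(\lambda_{k+1})+\tau_1\langle\nabla\phi(\lambda_{k+1}),\lambda-\lambda_{k+1}\rangle$ into $\tau_1\bigl(\phi(\lambda_{k+1})+\langle\nabla\phi(\lambda_{k+1}),\lambda-\lambda_{k+1}\rangle\bigr)$ then produces exactly the claimed inequality, for every $\lambda\in\mathbb{R}^d$.

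I do not expect a genuine obstacle in this lemma: all of the substantive work — the $L_i$-smoothness of the $\phi_i$, the variance-reduction estimate for $\widetilde{\nabla}_{k+1}$, and the calibration of the step sizes $\gamma$ and $1/(9\overline{L})$ — is already packaged inside Lemma~\ref{generallemma1} (Lemma~E.4 of \cite{allen2017katyusha}). The only points that need care here are keeping the coupling-identity bookkeeping precise so that the $\widetilde{\lambda}$ inner products cancel exactly, and checking $1-\tau_{1,s}-\tau_2\ge 0$ so that convexity may be invoked with a nonnegative weight; both are immediate from the definitions in Algorithm~\ref{PDASGD}.
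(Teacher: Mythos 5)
Your proposal is correct and follows essentially the same route as the paper's proof: both combine Lemma \ref{generallemma1} with the coupling identity \eqref{generalstep1} to decompose the $z_k-\lambda$ inner product, cancel the $\widetilde{\lambda}$ terms, and apply convexity of $\phi$ to the $y_k$ term with the nonnegative weight $1-\tau_1-\tau_2$. The only difference is cosmetic — the paper starts from $\gamma\langle\nabla\phi(\lambda_{k+1}),\lambda_{k+1}-\lambda\rangle$ and invokes convexity before Lemma \ref{generallemma1}, whereas you substitute in the opposite order — and your explicit check that $1-\tau_{1,s}-\tau_2\ge 0$ is a point the paper leaves implicit.
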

 \begin{proof}
Following the argument in the proof of Lemma~2.7 of \cite{allen2017katyusha}, expand
$\gamma\langle \nabla\phi(\lambda_{k+1}),\lambda_{k+1}-\lambda\rangle$ about $z_k$ using
\eqref{generalstep1}. Convexity of $\phi$ bounds the term involving $y_k$:
   \begin{equation}\label{SM1}
       \begin{aligned}
       &\gamma\langle \nabla \phi(\lambda_{k+1}),\lambda_{k+1}-\lambda\rangle\\
       =&\gamma \langle \nabla \phi(\lambda_{k+1}),\lambda_{k+1}-z_{k}\rangle +\gamma\langle \nabla \phi(\lambda_{k+1}),z_k-\lambda\rangle\\
       \overset{(a)}{=}&\frac{\gamma\tau_2}{\tau_1}\langle\nabla \phi(\lambda_{k+1}),\widetilde{\lambda}-\lambda_{k+1}\rangle+\frac{\gamma(1-\tau_1-\tau_2)}{\tau_1}\langle\nabla \phi(\lambda_{k+1}),y_k-\lambda_{k+1}\rangle+\gamma \langle\nabla \phi(\lambda_{k+1}),z_k-\lambda\rangle\\
       \overset{(b)}{\leq}&\frac{\gamma\tau_2}{\tau_1}\langle\nabla \phi(\lambda_{k+1}),\widetilde{\lambda}-\lambda_{k+1}\rangle+\frac{\gamma(1-\tau_1-\tau_2)}{\tau_1}(\phi(y_k)-\phi(\lambda_{k+1}))+\gamma \langle\nabla \phi(\lambda_{k+1}),z_k-\lambda\rangle,
       \end{aligned}
   \end{equation}
   where (a) uses \eqref{generalstep1} and the combination 
   \[\lambda_{k+1}=\left(\frac{1-\tau_2}{\tau_1}-\frac{1-\tau_1-\tau_2}{\tau_1}\right)\lambda_{k+1},\] 
   and (b) uses the convexity of $\phi$.

   Applying Lemma \ref{generallemma1} to \eqref{SM1} gives
   \begin{equation*}
   \begin{aligned}
   &\gamma\langle \nabla \phi(\lambda_{k+1}),\lambda_{k+1}-\lambda\rangle\\
     &\leq\frac{\gamma\tau_2}{\tau_1}\langle\nabla \phi(\lambda_{k+1}),\widetilde{\lambda}-\lambda_{k+1}\rangle+\frac{\gamma(1-\tau_1-\tau_2)}{\tau_1}(\phi(y_k)-\phi(\lambda_{k+1}))\\
     &+\frac{\gamma}{\tau_1}\left(\phi(\lambda_{k+1})-\mathbb{E}\left[\phi(y_{k+1})\big\vert k\right]+\tau_2 \phi(\widetilde{\lambda})-\tau_2 \phi(\lambda_{k+1})-\tau_2\langle \nabla \phi(\lambda_{k+1}),\widetilde{\lambda}-\lambda_{k+1}\rangle\right)\\
    &+ \left\Vert z_k-\lambda\right\Vert^2_2-\mathbb{E}\left[\left\Vert z_{k+1}-\lambda\right\Vert^2_2\big\vert k\right]\\
    &=\frac{\gamma(1-\tau_1-\tau_2)}{\tau_1}\phi(y_k)+\gamma \phi(\lambda_{k+1})-\frac{\gamma}{\tau_1}\mathbb{E}\left[\phi(y_{k+1})\big\vert k\right]+\frac{\gamma\tau_2}{\tau_1}\phi(\widetilde{\lambda})\\
    &+\left\Vert z_k-\lambda\right\Vert^2_2- \mathbb{E}\left[\left\Vert z_{k+1}-\lambda\right\Vert^2_2\big\vert k\right].
   \end{aligned}
   \end{equation*}
   
   Rearranging gives
   \begin{equation*}
   \begin{aligned}
   \mathbb{E}\left[\phi(y_{k+1})\big\vert k\right]&\leq \tau_1 \left(\phi(\lambda_{k+1})+\langle \nabla \phi(\lambda_{k+1}),\lambda-\lambda_{k+1}\rangle\right)+\tau_2\phi(\widetilde{\lambda})+(1-\tau_1-\tau_2)\phi(y_k)\\
   &+\frac{\tau_1}{\gamma}\left\Vert z_k-\lambda\right\Vert^2_2-\frac{\tau_1}{\gamma}\mathbb{E}\left[\left\Vert z_{k+1}-\lambda\right\Vert^2_2\big\vert k\right].
   \end{aligned}
\end{equation*}
\end{proof}

\subsection{Telescoping Argument}
We now prove Lemma \ref{firstlemma} using Lemmas \ref{generallemma1} and \ref{generallemma2}.
\begin{proof}
We replace $\tau_1$ by $\tau_{1,s}$ and sum both sides of the inequality in Lemma~\ref{generallemma2}
for $k=sM,\cdots,sM+M-1$, obtaining
   \begin{equation}
   \begin{aligned}
   \label{firstsum}
   &\sum_{j=1}^{M}\mathbb{E}\left[\phi(y_{sM+j})\big\vert sM+j-1\right]\\
   &\leq \tau_{1,s} \sum_{j=1}^{M}\left(\phi(\lambda_{sM+j})+\langle \nabla \phi(\lambda_{sM+j}),\lambda-\lambda_{sM+j}\rangle\right)+\tau_{2}M\phi(\widetilde{\lambda})+(1-\tau_{1,s}-\tau_{2})\sum_{j=0}^{M-1}\phi(y_{sM+j})\\
   &+\frac{\tau_{1,s}}{\gamma_s}\sum_{j=0}^{M-1}\left\Vert z_{sM+j}-\lambda\right\Vert^2_2-\frac{\tau_{1,s}}{\gamma_s}\sum_{j=1}^{M} \mathbb{E}\left[\left\Vert z_{sM+j}-\lambda\right\Vert^2_2\big\vert sM+j-1\right],
   \end{aligned}
\end{equation}
where $\mathbb{E}\left[\cdot\big\vert sM+j-1\right]$ denotes the expectation conditional on previous $sM+j-1$ steps.

Step 13 in PDASGD implies $\widetilde{\lambda}^{s}=M^{-1}\sum_{j=1}^{M}y_{(s-1)M+j}$ for $s\ge1$.
For $s=0$ we use the initialization convention $y_p=0$ for all $p\le0$, so the same identity gives
$\widetilde{\lambda}^0=0$. Substituting this expression into \eqref{firstsum} and applying Jensen's
inequality to the convex function $\phi$ gives
  \begin{equation*}
   \begin{aligned}
   \sum_{j=1}^{M}\mathbb{E}\left[\phi(y_{sM+j})\big\vert sM+j-1\right]&\leq \tau_{1,s} \sum_{j=1}^{M}(\phi(\lambda_{sM+j})+\langle \nabla \phi(\lambda_{sM+j}),\lambda-\lambda_{sM+j}\rangle)\\
   &+\tau_{2}\sum_{j=1}^{M}\phi(y_{(s-1)M+j})+(1-\tau_{1,s}-\tau_{2})\sum_{j=0}^{M-1}\phi(y_{sM+j})\\
   &+\frac{\tau_{1,s}}{\gamma_s}\sum_{j=0}^{M-1}\left\Vert z_{sM+j}-\lambda\right\Vert^2_2-\frac{\tau_{1,s}}{\gamma_s}\sum_{j=1}^{M}\mathbb{E}\left[\left\Vert z_{sM+j}-\lambda\right\Vert^2_2\big\vert sM+j-1\right].
   \end{aligned}
\end{equation*}

   We now take the total expectation of both sides, without conditioning. For the left-hand side, the tower rule gives $\mathbb{E}\big[\mathbb{E}[\,\cdot\mid sM+j-1]\big] = \mathbb{E}[\,\cdot\,]$, which removes all conditional expectations $\mathbb{E}[\,\cdot\mid sM+j-1]$. The $z$-terms then telescope, collapsing the inner sum $\sum_{j=0}^{M-1}\|z_{sM+j}-\lambda\|^2 - \sum_{j=1}^{M}\mathbb{E}[\|z_{sM+j}-\lambda\|^2]$ to $\|z_{sM}-\lambda\|^2 - \mathbb{E}[\|z_{sM+M}-\lambda\|^2]$. We obtain
   \begin{equation*}
   \begin{aligned}
   \mathbb{E}\left[\sum_{j=1}^{M}\phi(y_{sM+j})\right]&\leq \tau_{1,s} \sum_{j=1}^{M}\mathbb{E}\left[(\phi(\lambda_{sM+j})+\langle \nabla \phi(\lambda_{sM+j}),\lambda-\lambda_{sM+j}\rangle)\right]\\
   &+\tau_{2}\mathbb{E}\left[\sum_{j=1}^{M}\phi(y_{(s-1)M+j})\right]+(1-\tau_{1,s}-\tau_{2})\mathbb{E}\left[\sum_{j=0}^{M-1}\phi(y_{sM+j})\right]\\
   &+\frac{\tau_{1,s}}{\gamma_s}\mathbb{E}\left[\left\Vert z_{sM}-\lambda\right\Vert^2_2\right]-\frac{\tau_{1,s}}{\gamma_s}\mathbb{E}\left[\left\Vert z_{sM+M}-\lambda\right\Vert^2_2\right].
   \end{aligned}
\end{equation*}

Divide both sides by $\tau_{1,s}^2$, rewrite $\sum_{j=0}^{M-1}\phi(y_{sM+j})$  as $\sum_{j=1}^{M}\phi(y_{sM+j})+\phi(y_{sM})-\phi(y_{sM+M})$ on the right side, and rearrange the terms.
 \begin{equation*}
   \begin{aligned}
   \label{rewritefirstsum}
   &\mathbb{E}\left[\frac{\tau_{1,s}+\tau_{2}}{\tau_{1,s}^2}\sum_{j=1}^{M}\phi(y_{sM+j})+\frac{1-\tau_{1,s}-\tau_{2}}{\tau_{1,s}^2}\phi(y_{sM+M})\right]\\
   &\leq \frac{1}{\tau_{1,s}} \sum_{j=1}^{M}\mathbb{E}\left[(\phi(\lambda_{sM+j})+\langle \nabla \phi(\lambda_{sM+j}),\lambda-\lambda_{sM+j}\rangle)\right]+\frac{\tau_{2}}{\tau_{1,s}^2}\mathbb{E}\left[\sum_{j=1}^{M}\phi(y_{(s-1)M+j})\right]\\
   &+\frac{1-\tau_{1,s}-\tau_{2}}{\tau_{1,s}^2}\mathbb{E}\left[\phi(y_{sM})\right]
   +9\overline{L}\mathbb{E}\left[\left\Vert z_{sM}-\lambda\right\Vert^2_2\right]-9\overline{L}\mathbb{E}\left[\left\Vert z_{sM+M}-\lambda\right\Vert^2_2\right].
   \end{aligned}
\end{equation*}

Split the first term on the left as
$\sum_{j=1}^{M}\phi(y_{sM+j})=\sum_{j=1}^{M-1}\phi(y_{sM+j})+\phi(y_{sM+M})$ and split the
second term on the right as
$\sum_{j=1}^{M}\phi(y_{(s-1)M+j})=\sum_{j=1}^{M-1}\phi(y_{(s-1)M+j})+\phi(y_{sM})$.
\begin{equation}
   \begin{aligned}\label{ineqdistribute}
   &\mathbb{E}\left[\frac{\tau_{1,s}+\tau_2}{\tau_{1,s}^2}\sum_{j=1}^{M-1}\phi(y_{sM+j})\right]+\mathbb{E}\left[\frac{1}{\tau_{1,s}^2}\phi(y_{sM+M})\right]\\
   &\leq \frac{1}{\tau_{1,s}} \sum_{j=1}^{M}\mathbb{E}\left[(\phi(\lambda_{sM+j})+\langle \nabla \phi(\lambda_{sM+j}),\lambda-\lambda_{sM+j}\rangle)\right]+\frac{\tau_{2}}{\tau_{1,s}^2}\mathbb{E}\left[\sum_{j=1}^{M-1}\phi(y_{(s-1)M+j})\right]\\
   &+\frac{1-\tau_{1,s}}{\tau_{1,s}^2}\mathbb{E}\left[\phi(y_{sM})\right]
   +9\overline{L}\mathbb{E}\left[\left\Vert z_{sM}-\lambda\right\Vert^2_2\right]-9\overline{L}\mathbb{E}\left[\left\Vert z_{sM+M}-\lambda\right\Vert^2_2\right].
   \end{aligned}
\end{equation}

We distribute the optimal value term $\phi(\lambda^{\ast})$ over $\left( (M-1)(\tau_{1,s}+\tau_{2})/\tau_{1,s}^2+1/\tau_{1,s}^2\right)$-times on each side of inequality \eqref{ineqdistribute}.
\begin{equation}\label{stillhold}
    \begin{aligned}
    &\mathbb{E}\left[\frac{\tau_{1,s}+\tau_2}{\tau_{1,s}^2}\sum_{j=1}^{M-1}\left(\phi(y_{sM+j})-\phi(\lambda^{\ast})\right)\right]+\mathbb{E}\left[\frac{1}{\tau_{1,s}^2}\left(\phi(y_{sM+M})-\phi(\lambda^{\ast})\right)\right]\\
   &\leq \frac{1}{\tau_{1,s}} \sum_{j=1}^{M}\mathbb{E}\left[\left(\phi(\lambda_{sM+j})-\phi(\lambda^{\ast})+\langle \nabla \phi(\lambda_{sM+j}),\lambda-\lambda_{sM+j}\rangle\right)\right]\\
   &+\frac{\tau_{2}}{\tau_{1,s}^2}\mathbb{E}\left[\sum_{j=1}^{M-1}\left(\phi(y_{(s-1)M+j})-\phi(\lambda^{\ast})\right)\right]\\
   &+\frac{1-\tau_{1,s}}{\tau_{1,s}^2}\mathbb{E}\left[\left(\phi(y_{sM})-\phi(\lambda^{\ast})\right)\right]
   +9\overline{L}\mathbb{E}\left[\left\Vert z_{sM}-\lambda\right\Vert^2_2\right]-9\overline{L}\mathbb{E}\left[\left\Vert z_{sM+M}-\lambda\right\Vert^2_2\right].
    \end{aligned}
\end{equation}

From $\tau_{1,s}=2/(s+4)$ and $\tau_2=1/2$, we have
\begin{equation}\label{inqus}
\frac{1}{\tau_{1,s}^2}\geq \frac{1-\tau_{1,s+1}}{\tau_{1,s+1}^2},
\qquad
\frac{\tau_{1,s}+\tau_2}{\tau_{1,s}^2}\geq\frac{\tau_2}{\tau_{1,s+1}^2}.
\end{equation}

Substituting the inequalities \eqref{inqus} into \eqref{stillhold} gives
\begin{equation}\label{rea}
    \begin{aligned}
    &\mathbb{E}\left[\frac{\tau_2}{\tau_{1,s+1}^2}\sum_{j=1}^{M-1}\left(\phi(y_{sM+j})-\phi(\lambda^{\ast})\right)\right]+\mathbb{E}\left[\frac{1-\tau_{1,s+1}}{\tau_{1,s+1}^2}\left(\phi(y_{sM+M})-\phi(\lambda^{\ast})\right)\right]\\
   &\leq \frac{1}{\tau_{1,s}} \sum_{j=1}^{M}\mathbb{E}\left[\left(\phi(\lambda_{sM+j})-\phi(\lambda^{\ast})+\langle \nabla \phi(\lambda_{sM+j}),\lambda-\lambda_{sM+j}\rangle\right)\right]\\
   &+\frac{\tau_{2}}{\tau_{1,s}^2}\mathbb{E}\left[\sum_{j=1}^{M-1}\left(\phi(y_{(s-1)M+j})-\phi(\lambda^{\ast})\right)\right]\\
   &+\frac{1-\tau_{1,s}}{\tau_{1,s}^2}\mathbb{E}\left[\left(\phi(y_{sM})-\phi(\lambda^{\ast})\right)\right]
   +9\overline{L}\mathbb{E}\left[\left\Vert z_{sM}-\lambda\right\Vert^2_2\right]-9\overline{L}\mathbb{E}\left[\left\Vert z_{sM+M}-\lambda\right\Vert^2_2\right].
    \end{aligned}
\end{equation}

Rearranging the terms of \eqref{rea} gives
\begin{equation}\label{rea2}
    \begin{aligned}
    &\frac{\tau_2}{\tau_{1,s+1}^2}\mathbb{E}\left[\sum_{j=1}^{M-1}\left(\phi(y_{sM+j})-\phi(\lambda^{\ast})\right)\right] - \frac{\tau_{2}}{\tau_{1,s}^2}\mathbb{E}\left[\sum_{j=1}^{M-1}\left(\phi(y_{(s-1)M+j})-\phi(\lambda^{\ast})\right)\right]\\
    &+\frac{1-\tau_{1,s+1}}{\tau_{1,s+1}^2}\mathbb{E}\left[\left(\phi(y_{sM+M})-\phi(\lambda^{\ast})\right)\right]-\frac{1-\tau_{1,s}}{\tau_{1,s}^2}\mathbb{E}\left[\left(\phi(y_{sM})-\phi(\lambda^{\ast})\right)\right]\\
   &\leq \frac{1}{\tau_{1,s}} \sum_{j=1}^{M}\mathbb{E}\left[\left(\phi(\lambda_{sM+j})-\phi(\lambda^{\ast})+\langle \nabla \phi(\lambda_{sM+j}),\lambda-\lambda_{sM+j}\rangle\right)\right]\\
   &   +9\overline{L}\mathbb{E}\left[\left\Vert z_{sM}-\lambda\right\Vert^2_2\right]-9\overline{L}\mathbb{E}\left[\left\Vert z_{sM+M}-\lambda\right\Vert^2_2\right].
    \end{aligned}
\end{equation}
 For $s=0$, the initialization convention $y_p=0$ for $p\le0$ gives
\begin{equation}\label{zero}
\sum_{j=1}^{M-1}\left(\mathbb{E}\left[\phi(y_{(s-1)M+j})\right]-\phi(\lambda^{\ast})\right)
= M(\phi(\widetilde{\lambda}^0)-\phi(\lambda^\ast))-(\phi(y_0)-\phi(\lambda^\ast)),
\end{equation}
which equals $(M-1)(\phi(0)-\phi(\lambda^\ast))$ because $y_0=\widetilde\lambda^0=0$.

Summing \eqref{rea2} over $s = 0,\cdots, S-1$ gives
\begin{equation}\label{telescope}
    \begin{aligned}
    &\sum_{s=0}^{S-1}\left(\frac{\tau_2}{\tau_{1,s+1}^2}\mathbb{E}\left[\sum_{j=1}^{M-1}\left(\phi(y_{sM+j})-\phi(\lambda^{\ast})\right)\right] - \frac{\tau_{2}}{\tau_{1,s}^2}\mathbb{E}\left[\sum_{j=1}^{M-1}\left(\phi(y_{(s-1)M+j})-\phi(\lambda^{\ast})\right)\right]\right)\\
    &+\sum_{s=0}^{S-1}\left(\frac{1-\tau_{1,s+1}}{\tau_{1,s+1}^2}\mathbb{E}\left[\left(\phi(y_{sM+M})-\phi(\lambda^{\ast})\right)\right]-\frac{1-\tau_{1,s}}{\tau_{1,s}^2}\mathbb{E}\left[\left(\phi(y_{sM})-\phi(\lambda^{\ast})\right)\right]\right)\\
   &\leq \sum_{s=0}^{S-1}\left(\frac{1}{\tau_{1,s}} \sum_{j=1}^{M}\mathbb{E}\left[\left(\phi(\lambda_{sM+j})-\phi(\lambda^{\ast})+\langle \nabla \phi(\lambda_{sM+j}),\lambda-\lambda_{sM+j}\rangle\right)\right]\right)\\
   &   +\sum_{s=0}^{S-1}\left(9\overline{L}\mathbb{E}\left[\left\Vert z_{sM}-\lambda\right\Vert^2_2\right]-9\overline{L}\mathbb{E}\left[\left\Vert z_{sM+M}-\lambda\right\Vert^2_2\right]\right).
    \end{aligned}
\end{equation}

The intermediate terms in \eqref{telescope} cancel. Together with \eqref{zero}, this gives
\begin{equation}\label{final}
    \begin{aligned}
    &\frac{\tau_2}{\tau_{1,S}^2}\mathbb{E}\left[\sum_{j=1}^{M-1}\left(\phi(y_{(S-1)M+j})-\phi(\lambda^{\ast})\right)\right] - \frac{\tau_{2}}{\tau_{1,0}^2}\left( M(\phi(\widetilde{\lambda}^0)-\phi(\lambda^\ast))-(\phi(y_0)-\phi(\lambda^\ast))\right)\\
    &+\frac{1-\tau_{1,S}}{\tau_{1,S}^2}\mathbb{E}\left[\left(\phi(y_{SM})-\phi(\lambda^{\ast})\right)\right]-\frac{1-\tau_{1,0}}{\tau_{1,0}^2}\left(\left(\phi(y_{0})-\phi(\lambda^{\ast})\right)\right)\\
   &\leq \sum_{s=0}^{S-1}\left(\frac{1}{\tau_{1,s}} \sum_{j=1}^{M}\mathbb{E}\left[\left(\phi(\lambda_{sM+j})-\phi(\lambda^{\ast})+\langle \nabla \phi(\lambda_{sM+j}),\lambda-\lambda_{sM+j}\rangle\right)\right]\right)\\
   &   +9\overline{L}\mathbb{E}\left[\left\Vert z_{0}-\lambda\right\Vert^2_2\right]-9\overline{L}\mathbb{E}\left[\left\Vert z_{SM}-\lambda\right\Vert^2_2\right].
    \end{aligned}
\end{equation}

Rearranging the terms gives
 \begin{equation}
 \label{generaltelecoperesults}
   \begin{aligned}
   &\frac{\tau_2}{\tau_{1,S}^2}\mathbb{E}\left[\sum_{j=1}^{M-1}\left(\phi(y_{(S-1)M+j})-\phi(\lambda^{\ast})\right)\right]+\frac{1-\tau_{1,S}}{\tau_{1,S}^2}\mathbb{E}\left[\phi(y_{SM})-\phi(\lambda^{\ast})\right]
   \\
   &\leq \sum_{s=0}^{S-1}\frac{1}{\tau_{1,s}} \sum_{j=1}^{M}\mathbb{E}\left[(\phi(\lambda_{sM+j})-\phi(\lambda^{\ast})+\langle \nabla \phi(\lambda_{sM+j}),\lambda-\lambda_{sM+j}\rangle)\right]\\
   &+\frac{1-\tau_{1,0}-\tau_2}{\tau_{1,0}^2}\left(\phi(y_{0})-\phi(\lambda^{\ast})\right)+ \frac{\tau_{2}M}{\tau_{1,0}^2}(\phi(\widetilde{\lambda}^0)-\phi(\lambda^{\ast}))+9\overline{L}\left\Vert z_{0}-\lambda\right\Vert^2_2-9\overline{L}\mathbb{E}\left[\left\Vert z_{SM}-\lambda\right\Vert^2_2\right].
   \end{aligned}
\end{equation}

Next, express $\phi(\lambda)$ as
\begin{equation}
\label{phiexpression}
\begin{aligned}
\phi(\lambda)&=-f(x(\lambda))+\langle\lambda,Ax(\lambda)-b\rangle\overset{(a)}{=}-f(x(\lambda))+\langle \lambda,\nabla \phi(\lambda)\rangle,
\end{aligned}
\end{equation}
where (a) uses Assumption \ref{assume} ($\nabla\phi(\lambda)=Ax(\lambda)-b$). Since $\nabla\phi(\lambda^\ast)=0$,
\begin{equation}
\label{phiexpression2}
\phi(\lambda^\ast)=-f(x(\lambda^\ast))\end{equation}

Equation~\eqref{phiexpression} and Assumption~\ref{assume}
($\nabla\phi(\lambda)=Ax(\lambda)-b$) also give
\begin{equation}
\label{difffinal}
\begin{aligned}
    \phi(\lambda_{sM+j})+\langle \nabla \phi(\lambda_{sM+j}),\lambda-\lambda_{sM+j}\rangle
    =-f(x(\lambda_{sM+j}))+\langle Ax(\lambda_{sM+j})-b,\lambda\rangle.
    \end{aligned}
\end{equation}

Next, we introduce ancillary variables
\[
x^S_a=
\frac{\sum_{s=0}^{S-1}\frac{1}{\tau_{1,s}}\sum_{j=1}^{M}x(\lambda_{sM+j})}{M\sum_{s=0}^{S-1}\frac{1}{\tau_{1,s}}},
\qquad
f^S_a=
\frac{\sum_{s=0}^{S-1}\frac{1}{\tau_{1,s}}\sum_{j=1}^{M}f(x(\lambda_{sM+j}))}{M\sum_{s=0}^{S-1}\frac{1}{\tau_{1,s}}}.
\]

Substituting \eqref{phiexpression2} and \eqref{difffinal} into the first term on the right side of
\eqref{generaltelecoperesults} gives
\begin{equation}
    \begin{aligned}
    \label{righthand}
    &\sum_{s=0}^{S-1}\frac{1}{\tau_{1,s}} \sum_{j=1}^{M}\mathbb{E}\left[(\phi(\lambda_{sM+j})-\phi(\lambda^{\ast})+\langle \nabla \phi(\lambda_{sM+j}),\lambda-\lambda_{sM+j}\rangle)\right]\\
    &=\sum_{s=0}^{S-1}\frac{1}{\tau_{1,s}}\sum_{j=1}^{M}\left(\mathbb{E}\left[-f(x(\lambda_{sM+j}))+f(x(\lambda^{\ast}))+\langle Ax(\lambda_{sM+j})-b,\lambda\rangle\right]\right)\\
    &=\left(M\sum_{s=0}^{S-1}\frac{1}{\tau_{1,s}}\right) \left(-\left(\mathbb{E}[f_a^S]-f(x(\lambda^{\ast}))\right)+\langle A\mathbb{E}[x_a^{S}]-b,\lambda\rangle\right).
    \end{aligned}
\end{equation}

    By the choice of $\tau_{1,s}=2/(s+4)$ and $\tau_2=1/2$,
    \[\frac{\tau_2}{\tau_{1,S}^2}\leq\frac{1-\tau_{1,S}}{\tau_{1,S}^2}.\] 
    
    Hence the left side of \eqref{generaltelecoperesults} satisfies
    \begin{equation}
    \label{lefthand}
    \begin{aligned}
   &\frac{\tau_2}{\tau_{1,S}^2}\mathbb{E}\left[\sum_{j=1}^{M-1}\left(\phi(y_{(S-1)M+j})-\phi(\lambda^{\ast})\right)\right]+\frac{1-\tau_{1,S}}{\tau_{1,S}^2}\mathbb{E}\left[\phi(y_{SM})-\phi(\lambda^{\ast})\right]
   \\&\geq  \frac{\tau_2}{\tau_{1,S}^2}\mathbb{E}\left[\sum_{j=1}^{M}\left(\phi(y_{(S-1)M+j})-\phi(\lambda^{\ast})\right)\right].
       \end{aligned}
    \end{equation}
    
 Substituting \eqref{righthand} and \eqref{lefthand} into \eqref{generaltelecoperesults} gives
    \begin{equation*}
   \begin{aligned}
   &\frac{\tau_2}{\tau_{1,S}^2}\mathbb{E}\left[\sum_{j=1}^{M}\left(\phi(y_{(S-1)M+j})-\phi(\lambda^{\ast})\right)\right]\\
   &\leq \left(M\sum_{s=0}^{S-1}\frac{1}{\tau_{1,s}}\right) \left(-\left(\mathbb{E}[f_a^S]-f(x(\lambda^{\ast}))\right)\right)+\left(M\sum_{s=0}^{S-1}\frac{1}{\tau_{1,s}}\right)\langle A\mathbb{E}[x_a^{S}]-b,\lambda\rangle\\ &+\frac{\tau_{2}M}{\tau_{1,0}^2}\left(\phi(\widetilde{\lambda}^0)-\phi(\lambda^{\ast})\right)+\frac{1-\tau_{1,0}-\tau_2}{\tau_{1,0}^2}\left(\phi(y_{0})-\phi(\lambda^{\ast})\right)+9\overline{L}\left\Vert z_{0}-\lambda\right\Vert^2_2-9\overline{L}\mathbb{E}\left[\left\Vert z_{SM}-\lambda\right\Vert^2_2\right].
   \end{aligned}
\end{equation*}

By definition of $\widetilde{\lambda}^S=1/M\sum_{j=1}^M y_{(S-1)M+j}$ and convexity of $\phi$,
  \begin{equation}
   \begin{aligned}
   \label{defandconvex}
   &\frac{\tau_2M}{\tau_{1,S}^2}\mathbb{E}\left[\phi(\widetilde{\lambda}^{S})-\phi(\lambda^{\ast})\right]\leq \frac{\tau_2}{\tau_{1,S}^2}\mathbb{E}\left[\sum_{j=1}^{M}\left(\phi(y_{(S-1)M+j})-\phi(\lambda^{\ast})\right)\right]\\
   &\leq\left(M\sum_{s=0}^{S-1}\frac{1}{\tau_{1,s}}\right) \left(-\left(\mathbb{E}[f_a^S]-f(x(\lambda^{\ast}))\right)\right)+\left(M\sum_{s=0}^{S-1}\frac{1}{\tau_{1,s}}\right)\langle A\mathbb{E}[x_a^{S}]-b,\lambda\rangle\\ &+\frac{\tau_{2}M}{\tau_{1,0}^2}\left(\phi(\widetilde{\lambda}^0)-\phi(\lambda^{\ast})\right)+\frac{1-\tau_{1,0}-\tau_2}{\tau_{1,0}^2}\left(\phi(y_{0})-\phi(\lambda^{\ast})\right)+9\overline{L}\left\Vert z_{0}-\lambda\right\Vert^2_2-9\overline{L}\mathbb{E}\left[\left\Vert z_{SM}-\lambda\right\Vert^2_2\right].
   \end{aligned}
\end{equation}

Notice that
\begin{equation}\label{xs}
x^{S}=
\frac{\sum_{s=0}^{S-1}\frac{1}{\tau_{1,s}}x(\widehat{\lambda}_s)}{\sum_{s=0}^{S-1}\frac{1}{\tau_{1,s}}},
\end{equation}
\[
x^S_a=
\frac{\sum_{s=0}^{S-1}\frac{1}{\tau_{1,s}}\sum_{j=1}^{M}x(\lambda_{sM+j})}{M\sum_{s=0}^{S-1}\frac{1}{\tau_{1,s}}},
\qquad
f^S_a=
\frac{\sum_{s=0}^{S-1}\frac{1}{\tau_{1,s}}\sum_{j=1}^{M}f(x(\lambda_{sM+j}))}{M\sum_{s=0}^{S-1}\frac{1}{\tau_{1,s}}},
\]
and $\widehat{\lambda}_s$ is chosen uniformly (with probability $1/M$) from
$\{\lambda_{sM+1},\ldots,\lambda_{sM+M}\}$, conditionally on the inner iterates generated during
outer loop $s$. Therefore,
\begin{equation}\label{uselater2}  \mathbb{E}[ x(\widehat{\lambda}_s)]=\frac{1}{M}\sum_{j=1}^M x(\lambda_{sM+j}),\end{equation}
\begin{equation}\label{uselater} \mathbb{E}[ f(x(\widehat{\lambda}_s))]=\frac{1}{M}\sum_{j=1}^M f(x(\lambda_{sM+j})),\end{equation}
where these identities are conditional on the past and on the realized inner-loop iterates; applying total expectation gives the unconditional identities used below.

It follows from \eqref{xs} that
\[
\mathbb{E}[x^S]
=\frac{\sum_{s=0}^{S-1}\frac{1}{\tau_{1,s}}\mathbb{E}[x(\widehat{\lambda}_s)]}{\sum_{s=0}^{S-1}\frac{1}{\tau_{1,s}}}
\overset{\eqref{uselater2}}{=}
\frac{\sum_{s=0}^{S-1}\frac{1}{\tau_{1,s}}\mathbb{E}\left[\frac{1}{M}\sum_{j=1}^{M}x(\lambda_{sM+j})\right]}{\sum_{s=0}^{S-1}\frac{1}{\tau_{1,s}}}
=\mathbb{E}[x_a^S].
\]
\begin{equation*}
\begin{aligned}
\mathbb{E}[f(x^S)]&\overset{\text{convexity of }f}{\leq}
\mathbb{E}\left[
\frac{\sum_{s=0}^{S-1}\frac{1}{\tau_{1,s}}f(x(\widehat{\lambda}_s))}{\sum_{s=0}^{S-1}\frac{1}{\tau_{1,s}}}
\right]\\
&\overset{\eqref{uselater}}{=}
\frac{\sum_{s=0}^{S-1}\frac{1}{\tau_{1,s}}\mathbb{E}\left[\frac{1}{M}\sum_{j=1}^{M}f(x(\lambda_{sM+j}))\right]}{\sum_{s=0}^{S-1}\frac{1}{\tau_{1,s}}}\\
&=\mathbb{E}[f_a^S].
\end{aligned}\end{equation*}

Using these identities, rearranging \eqref{defandconvex} gives
\begin{equation*}
\label{generalequivalent}
    \begin{aligned}
   &\frac{\tau_2M}{\tau_{1,S}^2}\left(\mathbb{E}[\phi(\widetilde{\lambda}^{S})]-\phi(\lambda^{\ast})\right)+\left(M\sum_{s=0}^{S-1}\frac{1}{\tau_{1,s}}\right) \left(\mathbb{E}[f(x^S)]-f(x(\lambda^{\ast}))\right)\\&\leq\left(\sum_{s=0}^{S-1}\frac{1}{\tau_{1,s}}M\right)\left\langle A\mathbb{E}[x^{S}]-b,\lambda\right\rangle +\frac{\tau_{2}M}{\tau_{1,0}^2}(\phi(\widetilde{\lambda}^0)-\phi(\lambda^{\ast}))\\
   &+\frac{1-\tau_{1,0}-\tau_2}{\tau_{1,0}^2} \left(\phi(y_{0})-\phi(\lambda^{\ast})\right)+9\overline{L}\left\Vert z_{0}-\lambda\right\Vert^2_2.
   \end{aligned}
\end{equation*}
\end{proof}

\section{Proofs for the Barycenter Results}\label{app:bc}
This supplement gives the full proofs of the results of Section~\ref{barycenter}. We use the notation
of Section~\ref{barycenter}. The variable $g_a$ denotes the scaled column dual variable, and the feasible
semi-dual subspace is $H=\{\mathbf g:\sum_a w_ag_a=0\}$. Write
$P^{(a)}_{ij}(g_a):=p^{(a)}_{ij}(g_a)$, where $p^{(a)}$ is defined in
\eqref{bc:softmax}, and set
\[
G_a(g_a):=\sum_i\mu_{a,i}h_i^{(a)}(g_a),
\qquad
\Phi(\mathbf g)=\sum_{a=1}^m w_aG_a(g_a).
\]
For a probability vector $p\in\Delta_n$ write
$\mathrm{Cov}(p)=\mathrm{diag}(p)-pp^\top$ for its categorical covariance and
$\mathrm{Var}_p(y)=y^\top\mathrm{Cov}(p)y=\sum_j p_jy_j^2-(\sum_jp_jy_j)^2$.

\subsection{Proof of Proposition~\ref{bc:gradid}}
\begin{proof}
\emph{Step 1. Uniqueness and the full gradient identity.}
The entropy term is strictly convex on each positive transport simplex, so the primal response in
\eqref{bc:plan} is unique. The Lagrangian dual objective is a pointwise supremum of affine
functions of the multipliers, and Danskin's theorem gives
\[
    \nabla_\lambda\Phi(\lambda)=Ax(\lambda)-b.
\]
Since the row constraints have already been eliminated,
\[
    \pi_a(\mathbf g)\mathbf1=\mu_a,\qquad a=1,\ldots,m,
\]
which proves the first displayed identity in Proposition~\ref{bc:gradid}.

\emph{Step 2. Block gradient of the semi-dual.}
Differentiating the row log-sum-exp term gives
\[
\nabla h_i^{(a)}(g_a)=P^{(a)}_{i\cdot}(g_a),
\]
where $P^{(a)}$ is the softmax matrix in \eqref{bc:softmax}. Therefore
\[
\nabla G_a(g_a)=\sum_i\mu_{a,i}P^{(a)}_{i\cdot}(g_a)
   =\pi_a(\mathbf g)^\top\mathbf1=:q_a(\mathbf g),
\]
and the ambient derivative of $\Phi(\mathbf g)=\sum_a w_aG_a(g_a)$ is
$\nabla_{g_a}\Phi=w_aq_a(\mathbf g)$.

\emph{Step 3. Restriction to the constraint subspace.}
The feasible semi-dual subspace is
\[
H:=\{\mathbf g=(g_1,\ldots,g_m):\sum_a w_ag_a=0\}.
\]
For $\delta\in H$ and any $\nu\in\mathbb R^n$,
\[
\sum_a w_a\delta_a=0,\qquad
\left.\frac{d}{dt}\Phi(\mathbf g+t\delta)\right|_{t=0}
=\sum_a\langle w_aq_a(\mathbf g),\delta_a\rangle
=\sum_a\langle w_a(q_a(\mathbf g)-\nu),\delta_a\rangle .
\]
Taking $\nu=\bar q(\mathbf g):=\sum_b w_bq_b(\mathbf g)$ gives
\[
\left.\frac{d}{dt}\Phi(\mathbf g+t\delta)\right|_{t=0}
=\sum_a\left\langle w_a\big(q_a(\mathbf g)-\bar q(\mathbf g)\big),\delta_a\right\rangle,
\qquad \delta\in H,
\]
which proves the second identity.

\emph{Step 4. Stationarity imposes a common column marginal.}
If all column marginals coincide, then $q_a(\mathbf g)=\bar q(\mathbf g)$ for all $a$, and the
derivative above is zero for every direction in $H$. Conversely, if the derivative is zero for every
direction in $H$, choose
$\delta_a=q_a(\mathbf g)-\bar q(\mathbf g)$. This direction belongs to $H$ because
$\sum_a w_a\delta_a=0$, and the displayed identity gives
\[
0=\left.\frac{d}{dt}\Phi(\mathbf g+t\delta)\right|_{t=0}
=\sum_a w_a\|q_a(\mathbf g)-\bar q(\mathbf g)\|_2^2.
\]
Thus $q_a(\mathbf g)=\bar q(\mathbf g)$ for every $a$. Hence stationarity on $H$ is exactly the
condition that the recovered plans share a common column marginal.

\emph{Step 5. Finite-sum form.}
The finite-sum representation in \eqref{bc:phi} is
\[
\Phi(\mathbf g)=\frac{1}{mn}\sum_{a=1}^m\sum_{i=1}^n mn\,w_a\mu_{a,i}h_i^{(a)}(g_a),
\]
with one component per pair $(a,i)$.
\end{proof}

\subsection{Proof of Proposition~\ref{bc:smooth}}\label{app:bc-smooth}
\begin{proof}
\emph{Step 1. Hessian of one row log-sum-exp.}
Fix $(a,i)$ and write $p=P^{(a)}_{i\cdot}(g_a)$. The gradient and Hessian of
$h_i^{(a)}$ are
\[
\nabla_{g_a}h_i^{(a)}=p,\qquad
\nabla^2_{g_a}h_i^{(a)}
=\frac{1}{\eta}\big(\mathrm{diag}(p)-pp^\top\big).
\]
For any $y\in\mathbb R^n$,
\[
y^\top\nabla^2_{g_a}h_i^{(a)}y
=\frac{1}{\eta}\left(\sum_j p_jy_j^2-\left(\sum_jp_jy_j\right)^2\right)
=\frac{1}{\eta}\operatorname{Var}_{p}(y).
\]
Therefore
\[
0\le y^\top\nabla^2_{g_a}h_i^{(a)}y
\le \frac{1}{\eta}\sum_jp_jy_j^2
\le \frac{1}{\eta}\|y\|_2^2,
\]
and hence
\[
0\preceq \nabla^2_{g_a}h_i^{(a)}\preceq \frac{1}{\eta} I_n.
\]

\emph{Step 2. Convexity and Euclidean smoothness of each component.}
The component
\[
\Phi_{a,i}(\mathbf g)=mn\,w_a\mu_{a,i}h_i^{(a)}(g_a)
\]
depends only on the block $g_a$. Its Hessian on $\mathbb R^{mn}$ is zero outside the $a$-th diagonal
block, and the nonzero block is
\[
mn\,w_a\mu_{a,i}\nabla^2_{g_a}h_i^{(a)}.
\]
Thus $\Phi_{a,i}$ is convex and
\[
\|\nabla^2\Phi_{a,i}\|_2
\le \frac{mn\,w_a\mu_{a,i}}{\eta}
=:L_{a,i}.
\]

\emph{Step 3. Average smoothness.}
Using $\mu_a\in\Delta_n$ and $\sum_a w_a=1$,
\[
\overline L=\frac{1}{mn}\sum_{a=1}^m\sum_{i=1}^n L_{a,i}
=\frac{1}{\eta}\sum_{a}w_a\sum_i\mu_{a,i}=\frac{1}{\eta}\sum_a w_a=\frac{1}{\eta} ,
\]
which proves the stated average parameter.

\emph{Step 4. $\ell_\infty$ smoothness of the full semi-dual.}
The Hessian of $\Phi=\sum_a w_aG_a$ is block diagonal.
\[
\nabla^2\Phi(\mathbf g)
=\operatorname{blkdiag}\big(w_a\nabla^2G_a(g_a)\big)_{a=1}^m.
\]
Let $u=(u_1,\ldots,u_m)$ with $\|u\|_\infty\le1$. For each row-softmax covariance,
\[
\operatorname{Var}_{P^{(a)}_{i\cdot}}(u_a)\le \|u_a\|_\infty^2\le1,
\]
so
\[
u_a^\top\nabla^2G_a(g_a)u_a
=\frac{1}{\eta}\sum_i\mu_{a,i}\mathrm{Var}_{P^{(a)}_{i\cdot}}(u_a)
\le\frac{1}{\eta} .
\]
Therefore
\[
u^\top\nabla^2\Phi\,u=\sum_a w_a\,u_a^\top\nabla^2 G_a(g_a)\,u_a
\le\frac{1}{\eta}\sum_a w_a=\frac{1}{\eta}\le\frac{5}{\eta} ,
\]
for every $u$ with $\|u\|_\infty\le1$.

Since $\nabla^2\Phi\succeq0$, Cauchy-Schwarz in the Hessian inner product gives, for any $u,v$ with
$\|u\|_\infty,\|v\|_\infty\le1$,
\[
|v^\top\nabla^2\Phi\,u|
\le
\sqrt{u^\top\nabla^2\Phi\,u}\sqrt{v^\top\nabla^2\Phi\,v}
\le \frac{1}{\eta} .
\]
Equivalently, $\|\nabla^2\Phi\,u\|_1\le1/\eta\|u\|_\infty$, so $\Phi$ is
$1/\eta$-smooth with respect to $\|\cdot\|_\infty$ on the ambient space and hence also on $H$.
The stated $L'=5/\eta$ is therefore a conservative valid bound.

\emph{Step 5. Block structure.}
For each block,
\[
\nabla^2G_a(g_a)
=\sum_i\mu_{a,i}\nabla^2_{g_a}h_i^{(a)}
\preceq \frac{1}{\eta}\left(\sum_i\mu_{a,i}\right)I
=\frac{1}{\eta} I.
\]
Consequently,
\[
\lambda_{\max}(\nabla^2\Phi)
=\max_a w_a\lambda_{\max}(\nabla^2G_a)
\le \frac{1}{\eta}.
\]
The smoothness quantities used in the rate are independent of $m$ and $n$, with
$\overline L=1/\eta$ exactly.
\end{proof}

\subsection{Auxiliary Estimates for Theorem~\ref{bc:main}}\label{app:bc-aux}
\paragraph{Variance estimate.}
\begin{lemma}[Variance bound]\label{bc:var}
With sampling $p_{a,i}=L_{a,i}/(N\overline L)$, the PDASGD-BC estimator $\widetilde\nabla$ satisfies
\[
\mathbb E\|\widetilde\nabla-\nabla\Phi(\lambda)\|_2^2
\le 2\overline L\,D_\Phi(\lambda,\widetilde\lambda)
\]
with $\overline L=1/\eta$.
\end{lemma}

\begin{proof}
\emph{Step 1. Estimator and unbiasedness.}
Write
\[
\Phi=\frac{1}{N}\sum_{(a,i)}\Phi_{a,i},\qquad N=mn,
\]
where each $\Phi_{a,i}$ is convex and $L_{a,i}$-smooth by Proposition~\ref{bc:smooth}. With
\[
p_{a,i}=\frac{L_{a,i}}{N\overline L},
\]
the estimator at $\lambda$ from snapshot $\widetilde\lambda$ is
\[
\widetilde\nabla
=\nabla\Phi(\widetilde\lambda)
+\frac{1}{Np_{a,i}}\big(\nabla\Phi_{a,i}(\lambda)-\nabla\Phi_{a,i}(\widetilde\lambda)\big).
\]
Taking expectation over $(a,i)$ gives
\[
\mathbb E[\widetilde\nabla]=\nabla\Phi(\lambda).
\]

\emph{Step 2. Second-moment bound.}
Using $\mathbb E\|X-\mathbb EX\|^2\le\mathbb E\|X\|^2$ with
\[
X=\frac{1}{Np_{a,i}}\big(\nabla\Phi_{a,i}(\lambda)-\nabla\Phi_{a,i}(\widetilde\lambda)\big),
\]
we obtain
\[
\mathbb E\|\widetilde\nabla-\nabla\Phi(\lambda)\|_2^2
\le\sum_{(a,i)}p_{a,i}\frac{1}{(Np_{a,i})^2}
\big\|\nabla\Phi_{a,i}(\lambda)-\nabla\Phi_{a,i}(\widetilde\lambda)\big\|_2^2.
\]
Since
\[
\frac{1}{Np_{a,i}}=\frac{\overline L}{L_{a,i}},
\]
this becomes
\[
\mathbb E\|\widetilde\nabla-\nabla\Phi(\lambda)\|_2^2
\le
\frac{\overline L}{N}\sum_{(a,i)}\frac{1}{L_{a,i}}
\big\|\nabla\Phi_{a,i}(\lambda)-\nabla\Phi_{a,i}(\widetilde\lambda)\big\|_2^2.
\]

\emph{Step 3. Co-coercivity and Bregman divergence.}
Let
\[
D_F(x,y):=F(x)-F(y)-\langle\nabla F(y),x-y\rangle .
\]
Convexity and $L_{a,i}$-smoothness imply
\[
\big\|\nabla\Phi_{a,i}(\lambda)-\nabla\Phi_{a,i}(\widetilde\lambda)\big\|_2^2
\le
2L_{a,i}D_{\Phi_{a,i}}(\lambda,\widetilde\lambda).
\]
Therefore
\[
\mathbb E\|\widetilde\nabla-\nabla\Phi(\lambda)\|_2^2
\le
\frac{2\overline L}{N}\sum_{(a,i)}D_{\Phi_{a,i}}(\lambda,\widetilde\lambda)
=2\overline L\,D_\Phi(\lambda,\widetilde\lambda),
\]
where the last equality uses linearity of the Bregman divergence in the averaged objective
$\Phi=1/N\sum_{(a,i)}\Phi_{a,i}$. With $\overline L=1/\eta$, this proves the variance bound.

\emph{Step 4. Restriction to the constraint subspace.}
Running on $H=\{\sum_a w_ag_a=0\}$ is equivalent to reduced coordinates
\[
\mathbf g=U\theta,\qquad U^\top U=I,\qquad \operatorname{range}(U)=H.
\]
Each component Hessian becomes $U^\top\nabla^2\Phi_{a,i}U$, and
\[
\lambda_{\max}\!\left(U^\top\nabla^2\Phi_{a,i}U\right)
\le
\lambda_{\max}\!\left(\nabla^2\Phi_{a,i}\right).
\]
Thus the same $L_{a,i}$ and sampling probabilities remain valid after restriction to $H$.

\emph{Step 5. Projected component cost.}
The Euclidean projection onto $H$ has the explicit form
\[
    (P_H v)_a=v_a-w_a r(v),\qquad
    r(v)=\frac{\sum_b w_bv_b}{\sum_b w_b^2}.
\]
If $v$ is a single component gradient supported on one block, then $r(v)$ is computed from one
$n$-vector. The terms $w_br(v)$ can be stored as a shared lazy offset, together with the local update
to the sampled block. Hence a projected inner component step costs one softmax and $\mathcal O(n)$
vector arithmetic. Full materialization is needed only for snapshot full gradients, whose cost is
already $\mathcal O(mn^2)$.
\end{proof}

\paragraph{Dual diameter estimate.}
\begin{lemma}[Dual diameter]\label{bc:diam}
Assume the mass lower bounds
$\min_{a,i}\mu_{a,i}\ge n^{-O(1)}$ and
$\min_j\nu^\star_j\ge n^{-O(1)}$, where $\nu^\star$ is the
entropic barycenter. Then the optimal dual point in
$H=\{\mathbf g:\sum_a w_ag_a=0\}$ satisfies
$D^2=\|\mathbf g^\star\|_2^2\le 4mn(R')^2
=\widetilde{\mathcal O}_\eta\!\left(mn(1+\max_a\|C_a\|_\infty)^2\right)$, where
$R'=\widetilde{\mathcal O}_\eta(1+\max_a\|C_a\|_\infty)$ is the per-subproblem $\ell_\infty$ dual bound obtained by
applying Lemma~3.2 of \citealt{lin2019efficient} to each separable block.
\end{lemma}

\begin{proof}
\emph{Step 1. Per-subproblem dual bound.}
For each $a$, consider the two-marginal entropic OT problem between $\mu_a$ and the optimal barycenter
$\nu^\star$. Lemma~3.2 of \cite{lin2019efficient} gives an optimal centered column dual variable
$\widetilde g_a$ satisfying
\[
\|\widetilde g_a\|_\infty\le
R'_a:=\eta\left(\frac{\|C_a\|_\infty}{\eta}+\ln n-2\ln\min_{i,j}\{\mu_{a,i},\nu^\star_j\}+\frac{1}{2}\right).
\]
Under the stated polynomial lower bounds,
\[
R':=\max_aR'_a
=\widetilde{\mathcal O}_\eta(1+\max_a\|C_a\|_\infty).
\]

\emph{Step 2. Align the dual gauges.}
Use the equivalent standard barycenter formulation with the redundant constraint $\nu\in\Delta_n$.
Since $\nu^\star$ has strictly positive entries, the KKT condition for minimizing the barycenter
objective over the simplex gives
\[
\sum_{a=1}^m w_a\widetilde g_a=\rho\,\mathbf 1
\]
for some scalar $\rho$. Since $\|\widetilde g_a\|_\infty\le R'$ for all $a$,
\[
|\rho|
=\left\|\sum_a w_a\widetilde g_a\right\|_\infty
\le \sum_a w_a\|\widetilde g_a\|_\infty
\le R'.
\]
The column dual gauge shift does not change any transport plan. Define
\[
g_a^\star:=\widetilde g_a-\rho\,\mathbf1.
\]
Then
\[
\sum_a w_ag_a^\star
=\sum_aw_a\widetilde g_a-\rho\mathbf1
=0,
\]
so $\mathbf g^\star=(g_1^\star,\ldots,g_m^\star)\in H$ is an optimal semi-dual point.
Moreover
\[
\|g_a^\star\|_\infty
\le
\|\widetilde g_a\|_\infty+|\rho|
\le 2R'.
\]

\emph{Step 3. Euclidean diameter.}
Thus
\[
D^2=\|\mathbf g^\star\|_2^2=\sum_a\|g_a^\star\|_2^2\le 4mn(R')^2
=\widetilde{\mathcal O}_\eta\!\left(mn(1+\max_a\|C_a\|_\infty)^2\right).
\]
\end{proof}

\subsection{Proof of Theorem~\ref{bc:main}}\label{app:proof-bc-main}
\begin{proof}
\emph{Step 1. Apply the general theorem on the constraint subspace.}
Let
\[
H=\{\mathbf g:\sum_a w_ag_a=0\}.
\]
Choose an orthonormal matrix $U$ with $\operatorname{range}(U)=H$ and write
\[
\mathbf g=U\theta,\qquad \Psi(\theta)=\Phi(U\theta).
\]
The Euclidean prox geometry is unchanged under this parametrization, so Theorem~\ref{generaltheorem}
and Corollary~\ref{theorem1cor} apply to $\Psi$.

\emph{Step 2. Plug in the barycenter constants.}
By Proposition~\ref{bc:gradid}, Proposition~\ref{bc:smooth}, and Lemma~\ref{bc:var},
\[
N=mn,\qquad
\overline L=\frac{1}{\eta},\qquad
L'=\frac{5}{\eta}.
\]
Lemma~\ref{bc:diam} gives an optimal semi-dual point $\mathbf g^\star\in H$ such that
\[
\|\mathbf g^\star\|_\infty\le 2R',
\qquad
\|\mathbf g^\star\|_2^2\le 4mn(R')^2,
\qquad
R'=\widetilde{\mathcal O}_\eta(1+\max_a\|C_a\|_\infty).
\]

\emph{Step 3. Objective residual after $S$ outer loops.}
Set $M=N=mn$. Corollary~\ref{theorem1cor} yields
\[
\mathbb E[f(x^S)]-f^\star
=\mathcal O\!\left(
\frac{L'(R')^2}{S^2}
+\frac{\overline L\,mn(R')^2}{mn\,S^2}
\right)
=\widetilde{\mathcal O}_\eta\!\left(\frac{(1+\max_a\|C_a\|_\infty)^2}{S^2}\right).
\]
Thus it is sufficient to take
\[
S=\widetilde{\mathcal O}_\eta\!\left(\frac{1+\max_a\|C_a\|_\infty}{\sqrt{\epsilon}}\right).
\]

\emph{Step 4. Arithmetic cost per outer loop.}
One outer loop consists of
\[
\text{one snapshot full gradient}
\quad+\quad
M=mn\text{ inner component gradients}.
\]
Each component gradient costs $\mathcal O(n)$, so
\[
\text{outer-loop cost}
=\mathcal O(mn\cdot n)+\mathcal O(mn\cdot n)
=\mathcal O(mn^2).
\]

\emph{Step 5. Total complexity.}
Multiplying the number of outer loops by the outer-loop cost gives
\[
\widetilde{\mathcal O}_\eta\!\left(\frac{1+\max_a\|C_a\|_\infty}{\sqrt{\epsilon}}\right)\cdot\mathcal O(mn^2)
=\widetilde{\mathcal O}_\eta\!\left(\frac{mn^2(1+\max_a\|C_a\|_\infty)}{\sqrt{\epsilon}}\right).
\]
\end{proof}

\begin{remark}[Alternative component grouping]\label{bc:support-index-grouping}
Grouping all terms with the same support index $i$ into a single component would give
$N=n$ components, but each component would differentiate all $m$ blocks and cost $\mathcal O(mn)$. The
same rate calculation yields
$\widetilde{\mathcal O}_\eta(m^{1.5}n^2/\sqrt{\epsilon})$, so the separate $(a,i)$ components in
\eqref{bc:phi} are preferable.
\end{remark}

\subsection{Proof of Proposition~\ref{bc:feas-main}}\label{app:proof-bc-feas}
\begin{proof}
For the proof set $W:=\sum_{s=0}^{S-1}1/\tau_{1,s}$ and $L':=5/\eta$. For a collection of plans
$x=(\pi_1,\ldots,\pi_m)$, define
\[
\bar\nu(x):=\sum_b w_b\pi_b^\top\mathbf1,\qquad
\mathcal R(x)_a:=w_a\big(\pi_a^\top\mathbf1-\bar\nu(x)\big).
\]
The bound in \eqref{bc:feas-bound} is equivalent to a bound on
$\mathbb E\|\mathcal R(x^S)\|_1$. We prove the intermediate estimate
\[
\mathbb E\|\mathcal R(x^S)\|_1
\le
2\sqrt{2L'}\,\frac{1}{W}\sum_{s=0}^{S-1}\frac{\sqrt{\delta_s}}{\tau_{1,s}},
\]
which implies \eqref{bc:feas-bound} by Cauchy-Schwarz.

\emph{Step 1. Linearity of the residual under primal averaging.}
Write $q_a(\mathbf g):=\pi_a(\mathbf g)^\top\mathbf1$ and
$\bar\nu(\mathbf g):=\sum_b w_bq_b(\mathbf g)$. For a primal response $x(\mathbf g)$ define
\[
\mathcal R(x(\mathbf g))_a=w_a(q_a(\mathbf g)-\bar\nu(\mathbf g)).
\]
The row constraints are satisfied exactly by \eqref{bc:plan}, so the residual has no row-marginal
component.
Because column marginals and $\bar\nu$ are linear in the transport plans, the residual is linear in the
primal response. For the weighted output
\[
x^S=\frac{1}{W}\sum_{s=0}^{S-1}\frac{1}{\tau_{1,s}}x(\widehat{\mathbf g}_s),
\qquad W=\sum_{s=0}^{S-1}\frac{1}{\tau_{1,s}},
\]
we have
\[
\mathcal R(x^S)=\frac{1}{W}\sum_{s=0}^{S-1}\frac{1}{\tau_{1,s}}\mathcal R(x(\widehat{\mathbf g}_s)).
\]
The triangle inequality gives
\begin{equation}\label{app:feas-triangle}
\mathbb E\|\mathcal R(x^S)\|_1
\le
\frac{1}{W}\sum_{s=0}^{S-1}\frac{1}{\tau_{1,s}}\,
\mathbb E\|\mathcal R(x(\widehat{\mathbf g}_s))\|_1 .
\end{equation}

\emph{Step 2. Reduce fixed-point residual to distance from the optimal barycenter marginal.}
Fix a semi-dual point $\mathbf g\in H$. Let
$\mathbf g^\ast$ be an optimal point and let
$\nu^\star:=q_a(\mathbf g^\ast)$, which is independent of $a$ by Proposition~\ref{bc:gradid}. By
definition of $\bar\nu$,
\[
\sum_a w_a\|q_a(\mathbf g)-\bar\nu(\mathbf g)\|_1
\le
\sum_a w_a\|q_a(\mathbf g)-\nu^\star\|_1+\|\bar\nu(\mathbf g)-\nu^\star\|_1
\le
2\sum_a w_a\|q_a(\mathbf g)-\nu^\star\|_1 .
\]
Since $\|\mathcal R(x(\mathbf g))\|_1=\sum_aw_a\|q_a(\mathbf g)-\bar\nu(\mathbf g)\|_1$, it remains to
control the deviations $q_a(\mathbf g)-\nu^\star$.

\emph{Step 3. Bregman control of the column marginals.}
For each block $G_a$, $\nabla G_a(g_a)=q_a(g_a)$ and $G_a$ is $L'$-smooth with respect to
$\|\cdot\|_\infty$ for the valid bound $L'=5/\eta$. The standard smooth-convex Bregman inequality for
the norm pair $(\|\cdot\|_\infty,\|\cdot\|_1)$ gives
\[
\|q_a(g_a)-q_a(g_a^\ast)\|_1^2
\le
2L'\Big(G_a(g_a)-G_a(g_a^\ast)-\langle q_a(g_a^\ast),g_a-g_a^\ast\rangle\Big).
\]
Since $q_a(g_a^\ast)=\nu^\star$ for all $a$ and both $\mathbf g,\mathbf g^\ast$ lie in $H$,
\[
\sum_a w_a\langle \nu^\star,g_a-g_a^\ast\rangle
=\left\langle \nu^\star,\sum_a w_a(g_a-g_a^\ast)\right\rangle=0.
\]
Therefore the weighted sum of the block Bregman divergences equals the global dual gap.
\[
\sum_a w_a\Big(G_a(g_a)-G_a(g_a^\ast)-\langle \nu^\star,g_a-g_a^\ast\rangle\Big)
=\Phi(\mathbf g)-\Phi(\mathbf g^\ast).
\]
By Cauchy-Schwarz with weights $w_a$,
\[
\sum_a w_a\|q_a(\mathbf g)-\nu^\star\|_1
\le
\sqrt{\sum_a w_a\|q_a(\mathbf g)-\nu^\star\|_1^2}
\le
\sqrt{2L'\big(\Phi(\mathbf g)-\Phi(\mathbf g^\ast)\big)}.
\]
Combining the preceding displays,
\begin{equation}\label{app:feas-fixed}
\|\mathcal R(x(\mathbf g))\|_1
\le
2\sqrt{2L'\big(\Phi(\mathbf g)-\Phi(\mathbf g^\ast)\big)} .
\end{equation}

\emph{Step 4. Pass from fixed points to the randomized iterates.}
Apply \eqref{app:feas-fixed} to $\mathbf g=\widehat{\mathbf g}_s$ and use Jensen's inequality.
\[
\mathbb E\|\mathcal R(x(\widehat{\mathbf g}_s))\|_1
\le
2\sqrt{2L'}\,\mathbb E\sqrt{\Phi(\widehat{\mathbf g}_s)-\Phi(\mathbf g^\ast)}
\le
2\sqrt{2L'\delta_s}.
\]
Substituting this into \eqref{app:feas-triangle} yields
\[
\mathbb E\|\mathcal R(x^S)\|_1
\le
2\sqrt{2L'}\,\frac{1}{W}\sum_{s=0}^{S-1}\frac{\sqrt{\delta_s}}{\tau_{1,s}}.
\]

\emph{Step 5. Weighted Cauchy-Schwarz.}
Cauchy-Schwarz gives
\[
\sum_{s=0}^{S-1}\frac{\sqrt{\delta_s}}{\tau_{1,s}}
\le
\sqrt{\sum_{s=0}^{S-1}\frac{1}{\tau_{1,s}}}
\sqrt{\sum_{s=0}^{S-1}\frac{\delta_s}{\tau_{1,s}}}
=\sqrt W\sqrt{\sum_{s=0}^{S-1}\frac{\delta_s}{\tau_{1,s}}},
\]
which proves Proposition~\ref{bc:feas-main}. The proof uses only
smoothness, convexity, the common-marginal KKT condition, and linearity of the primal averaging; it does not
use strong convexity of $\Phi$ or any variance bound on the primal iterates.
\end{proof}

\bibliographystyle{elsarticle-harv}
\bibliography{OT}

\end{document}